\definecolor{darkblue}{rgb}{0, 0, 0.5}
\newcommand{\comment}[1]{}
\newcommand{\twid}{\tilde}
\newcommand{\amirg}[1]{}
\newcommand{\wcohen}[1]{}
\newcommand{\joshuahm}[1]{}
\newcommand{\af}[1]{}
\newtheorem{lemma}{Lemma}
\newtheorem{proposition}{Proposition}
\newtheorem{theorem}{Theorem}
\newtheorem{corollary}{Corollary}
\newtheorem{example}{Example}
\newtheorem{definition}{Definition}
\theoremstyle{definition}
\newcommand{\argmin}{\arg\!\min}
\title{Stratified Prediction-Powered Inference \\ for  Hybrid Language Model Evaluation}
\titlespacing*{\subsection}{0pt}{.5\baselineskip}{.5\baselineskip}
\titlespacing*{\subsubsection}{0pt}{.5\baselineskip}{.5\baselineskip}
\titlespacing*{\section}{0pt}{.5\baselineskip}{.3\baselineskip}
\renewcommand{\paragraph}[1]{\textbf{#1}} 
\author{%
Adam Fisch$^{\dagger,*}$ \quad  Joshua Maynez$^{\dagger,*}$ \quad R. Alex Hofer$^\dagger$ \quad \\ \\
 \textbf{Bhuwan Dhingra$^\dagger$} \quad  \textbf{Amir Globerson}$^\ddagger$ \quad 
\textbf{William W. Cohen}$^\dagger$ \\ \\ 
$^\dagger$Google DeepMind\qquad $^\ddagger$Google Research \\ \\
\texttt{\{fisch,joshuahm,rofer,bdhingra,amirg,wcohen\}@google.com} 
%\texttt{\{fisch,joshuahmaynez,rofer,bdhingra,amirg,wcohen\}@google.com}
  % examples of more authors
  % \And
  % Coauthor \\
  % Affiliation \\
  % Address \\
  % \texttt{email} \\
  % \AND
  % Coauthor \\
  % Affiliation \\
  % Address \\
  % \texttt{email} \\
  % \And
  % Coauthor \\
  % Affiliation \\
  % Address \\
  % \texttt{email} \\
  % \And
  % Coauthor \\
  % Affiliation \\
  % Address \\
  % \texttt{email} \\
}
\begin{document}

\maketitle
\renewcommand{\thefootnote}{\fnsymbol{footnote}}
\footnotetext[1]{Equal contribution.}  
\renewcommand{\thefootnote}{\arabic{footnote}}

\begin{abstract}
Prediction-powered inference (\texttt{PPI}) is a method that improves statistical estimates based on limited human-labeled data.  \texttt{PPI} achieves this by combining small amounts of human-labeled data with larger amounts of data labeled by a reasonably accurate---but potentially biased---automatic system, in a way that results in tighter confidence intervals for certain parameters of interest (e.g., the mean performance of a language model).
In this paper, we propose a method called Stratified Prediction-Powered Inference (\texttt{StratPPI}), in which we show that the basic \texttt{PPI} estimates can be considerably improved by employing simple data stratification strategies. Without making any assumptions on the underlying automatic labeling system or data distribution, we derive an algorithm for computing provably valid confidence intervals for population parameters (such as averages) that is based on stratified sampling. In particular, we show both theoretically and empirically that, with appropriate choices of stratification and sample allocation, our approach can provide substantially tighter confidence intervals than  unstratified  approaches. 
Specifically, \texttt{StratPPI}  is expected to improve in cases where the performance of the autorater varies across different conditional distributions of the target data.\looseness=-1 

\comment{
We propose a framework for PPI based on
Bayesian inference that allows researchers to
develop new task-appropriate PPI methods easily.  
Exploiting the ease with which we can design new metrics,
we propose improved PPI methods for several important
cases, such as autoraters that give discrete responses (e.g., prompted LLM ``judges'') and autoraters with scores that have a non-linear relationship to human scores.   
}
\end{abstract}

\section{Introduction}
\label{sec:intro}
%Problem: LLMs need few examples, human ratings are bottleneck, stats are meaningless w small data.  AR are useful and used but biased - best models is very different w/ AR.  Hill-climbing / incremental improvement is hard in this setting; Better statistical methods exist, but require real expertise to use 

%\subsection{Motivation for prediction powered inference}

Evaluating machine learning models requires \emph{evaluation} data.  In particular, to iteratively improve on a method during development, or to  reliably report  an improvement, one needs to confidently and quantitatively assess the performance of the method.  This is especially challenging for large language models (LLMs), where gathering high-quality annotations for generations can be difficult and time-consuming---and can ultimately become quite costly if gathering more than a few hundred examples.\looseness=-1 %This is challenging to do with evaluation sets of a few hundred examples or fewer. 

One often-proposed approach to avoiding the evaluation bottleneck is to use a secondary LLM-based system to judge the output of the primary one. For instance, if the primary task is developing an LLM-based question-answering (QA) system, one can use a second LLM-based system that rates question/answer pairs as acceptable or not \cite{bohnet2023attributed, bulian-etal-2022-tomayto,kamalloo-etal-2023-evaluating}.  However,  automated raters (i.e., \emph{autoraters}) may be biased relative to the human raters they are intended to model, as others have noted \cite{doi:10.1126/science.adi6000, chaganty-etal-2018-price, liu-etal-2016-evaluate, novikova-etal-2017-need}. This can become substantially worse when models are tailored to hill climb on the autorater metrics, and eventually cease to become a good metric at all---a phenomenon commonly referred to as Goodhart's law, or reward hacking in the context of reinforcement learning from human feedback~\cite{gao2023scaling,pang-etal-2023-reward}.\looseness=-1

%To illustrate this, Figure~\ref{fig:h100-vs-ar-5000} compares ratings from ``humans'' and an ``autorater'' for a hypothetical side-by-side system. The true\footnote{Known precisely here because we are using synthetic data.} rate of acceptable responses from the QA system as measured by the simulated humans is $P(H=1)=0.733$ (dashed green vertical line), but using only 100 human-labeled examples gives large variance: the uncertainty of the estimate (green curve) means that the true $P(H=1)$ may be plausibly be as low as $0.65$ or as large as $0.82$ (with $95\%$ confidence, it is between these values).  With an autorater, more examples can be labeled, leading to less uncertainty in the estimate (the red curve).  Unfortunately, the autorater gives very small probability to the true accuracy of 73.3\%, because it is \emph{biased}, with a true mean (red dashed line) of $0.7$.

% \begin{figure}[t]
% \begin{center}
%     \includegraphics[width=4in]{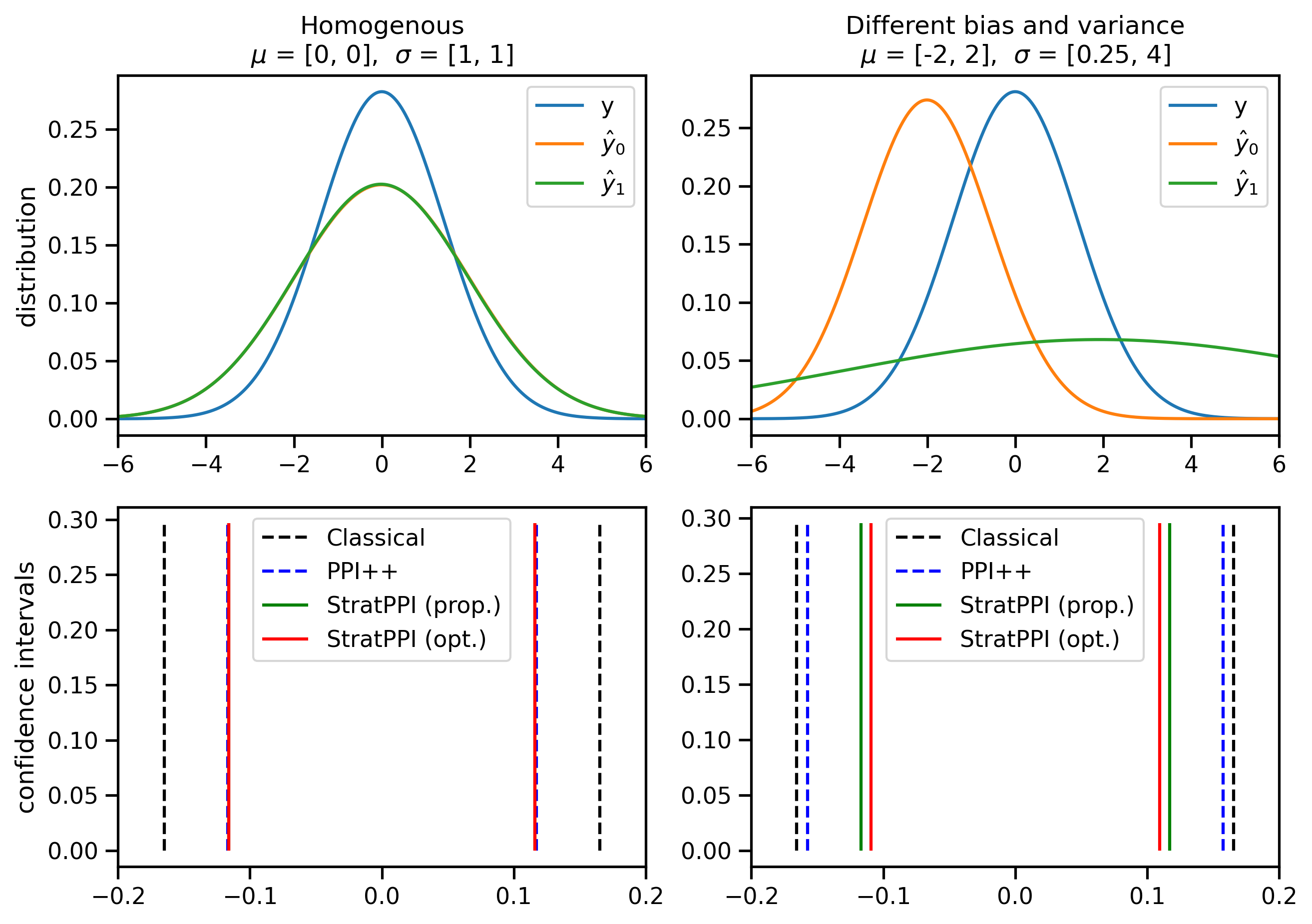}
% \end{center}
% \caption{Estimating accuracy $P(H=1)$ with 100 human-labeled examples (green) or 5000 autorater-labeled examples (red). Dotted vertical lines are the true accuracies $P(H=1)$ and $P(A=1)$ (for this synthetic data). Bottom: The dot-dashed blue/red lines are a 95\% confidence interval computed with classical methods from 100 human-labeled examples. The grey histogram and solid blue/red lines are a 95\% confidence interval using PPI, which combines the autorater and human predictions (see text).}
% \label{fig:h100-vs-ar-5000}
% \end{figure}

We thus have two signals for assessing model performance. The first is human labels, which are typically  accurate, but expensive to collect. As a result, usually only a small sample size is  available,  and an estimate based on these samples alone will have high variance. The second is autorater predictions, which are easy to collect for large sample sizes, but may also be systematically biased. The above may suggest that one must make a choice between either (i) a high-variance, but unbiased, estimate from a small human sample, or (ii) a lower-variance, but biased, autorater-based estimate. However, it turns out there are also statistically valid ways of \emph{combining} the auto-rater and human data for hybrid evaluations.  Following \cite{doi:10.1126/science.adi6000,angelopoulos2023ppi, boyeau2024autoeval} we call such methods \emph{prediction-powered inference } (\texttt{PPI}) methods.  At a high level,  \texttt{PPI}-based methods operate by using a small sample of examples labeled by both humans and autoraters to estimate the bias of the autorater. This bias is then used as a rectifier for the autorater estimate. The resulting estimate can then be shown to provide improved (i.e., tighter) confidence intervals for properties of interest for the system being evaluated, such as its true mean accuracy.\looseness=-1
A weakness of standard \texttt{PPI}, however, is that it does not take heterogeneity into account. For example, in our QA setting, an autorater may have one accuracy when predicting if a model answer is correct, and a different accuracy when predicting if it is incorrect. This is especially true in cases where a correct answer is easy to verify (e.g., it is also present in Wikipedia), but harder to refute (e.g.,  no relevant external search results can be retrieved to either support or contradict it). In these settings, it may make sense to apply a different \texttt{PPI} strategy within each subdomain, depending on the local quality of the autorater. Moreover, we claim that such heterogenous settings are to be quite expected in practical applications. 
%The current work is motivated by the question of whether we can devise methods to further improve confidence intervals in the \texttt{PPI} setting. %Specifically, taking inspiration from the rich prior literature on  stratified sampling and survey design~\cite{books/wi/Cochran77, lohr2021sampling, neyman}, we show that in heterogenous settings where we can expect either the performance of the model we are evaluating, or the autorater we are using to evaluate it, to vary across subdomains, a \emph{stratified} approach to \texttt{PPI} can be very advantageous. 
Inspired by the rich prior literature on  stratified sampling and survey design~\cite{books/wi/Cochran77, lohr2021sampling, neyman}, we therefore propose  a \emph{stratified} approach to \texttt{PPI}, and show that it can be very advantageous when 
performance varies across subdomains, for either the autorater, or the model being evaluated.\looseness=-1

On a technical level, it is not immediately clear how to apply stratification to \texttt{PPI}, as it involves two types of samples: one sample that is labeled by both humans and an autorater, and another (typically much larger) sample that is only labeled by the autorater. Extending the analysis of \cite{angelopoulos2023ppi}, in this work we show how confidence intervals based on the asymptotic normality of weighted M-estimators~\cite{Wooldridge2001ASYMPTOTICPO} can in fact be derived for the stratified \texttt{PPI} setting. The next challenge we address is how to determine the sample sizes used for stratification (i.e., the sample size of each stratum). Similar to recent work for active statistical inference~\cite{zrnic2024active}, we further derive optimal rates that depend on certain moments of the underlying distribution that are generally unknown---and provide an approach for effectively approximating these. Finally, we provide extensive empirical evidence showing that our stratified approach (\texttt{StratPPI}) leads to considerably improved confidence intervals over both classical inference methods that use only human labels, and the baseline (unstratified) \texttt{PPI} approach.\looseness=-1

\comment{
\subsection{Bayesian inference for PPI}

In past work in PPI \cite{doi:10.1126/science.adi6000,angelopoulos2023ppi} the computation of a \emph{estimand}---e.g., a statistic to be computed, such as a population mean---is reduced to solving a convex optimization problem---e.g., finding $\mu$ that minimizes $\sum_{i} (y_i - \mu)^2$ for a sample $y_1, \ldots, y_n$.  Analytic techniques are then used to find bounds on the minimized value, typically by establishing asymptotic normality.  This approach is very natural for certain tasks---such as bounding coefficients of linear or logistic regressions---but less natural for others.

\newcommand{\twid}[1]{\tilde{#1}}
\newcommand{\expect}[1]{\mathbb{E}[{#1}]}

Here we propose an alternative approach to PPI, based on Bayesian inference. An advantage of our approach is that it allows researchers to easily design a new PPI estimand that makes task-appropriate use of autoraters, since much of the analysis can be replaced by general-purpose numerical methods that compute confidence intervals over the designed statistics.

As an illustration, consider estimating a mean value of some human label $y$ that evaluates an instance $x$ (e.g., an ``acceptability'' rating for a question/answer pair). We assume a small sample $S_n=(x_1,y_1),\ldots,(x_n,y_n)$ of points $x_i$ where $y_i$ is known, and a larger sample $\twid{S}_N=x_1,\ldots,x_N$ of $N \gg n$ unlabeled points $x_j$.  We also assume an \emph{autorater} or judge model $f(x)$, which predicts $y$, and our goal is to predict the expected value $\expect{y}$ as precisely as possible given the data  $D=(S_n, \twid{S}_N)$.

The natural estimand for $\expect{y}$ is the mean of the $y$'s in $S_n$.  An alternative is the following \emph{proxy estimand:}
\begin{eqnarray} \label{eq:diff-intro}
 g(S_n, \twid{S}_N) & = & \mu_1 + \mu_2, \mbox{~~~where} \\
 \mu_1 & = & \frac{1}{N}\sum_{j=1}^N f(x_j) \nonumber \\
 \mu_2 & = &  \frac{1}{n}\sum_{i=1}^n y_i - f(x_i)  \nonumber  
\end{eqnarray}  
Here $\mu_1$ is the mean autorater score over the unlabeled data,
and $\mu_2$ is the mean difference between the autorater and the human label over the labeled data $S_n$, called a \emph{rectifier} in \cite{doi:10.1126/science.adi6000}.  Because the rectifier corrects the bias of the autorater,  $\expect{g(S_n,\twid{S}_N)}$ is the same as $\expect{y}$, but it may have substantially lower variance if $N$ is large and $f(x)$ is accurate: 
$\mu_1$ will have low variance when $N$ is large, and the rectifier will  have low variance when $y_i - f(x_i)$ is generally small.

The statistic of Eq~\ref{eq:diff-intro}, sometimes called the \emph{difference estimate}, is well-studied and its variance can be computed analytically \cite{doi:10.1126/science.adi6000, difference-est-survey17}, leading to a classical confidence interval for $\expect{y}$.  We propose instead a Bayesian analysis, where we treat $\mu_1, \mu_2$ as random variables that depend on the data. Since $\mu_1$ and $\mu_2$ are independent given the data $D$\footnote{Note $\mu_1$ and $\mu_2$ are computed from  different subsets of $D$.}
we can marginalize them out as follows to compute the expectation of $g$ over the random variables:
\begin{equation} \label{eq:bayes-diff-est}
\expect{g(S_n, \twid{S_N})} 
  = \int (\mu_1 + \mu_2) p(\mu_1|D) p(\mu_2|D) d\mu_1 d\mu_2
\end{equation}
Using \emph{Monte Carlo integration} (see Sec~\ref{sec:mci} here or Sec.~24.2 in \cite{wasserman2013all}) it is possible to compute such intervals, and also to compute upper and lower bounds $\ell,u$ such that
$\Pr(\ell \leq \expect{g} \leq u) \leq c$ for a given confidence level $c$.  These bounds are called \emph{credible intervals} in Bayesian statistics (see Sec~\ref{sec:credible}).
%One first takes $T$ samples independently from the posteriors $p(\mu_1|D)$ and $p(\mu_2|D)$: call these $s_1^1,\ldots,s_1^T$ and $s_2^1,\ldots,s_2^T$.  As $T$ grows, $\frac{1}{T} \sum {s_1^t + s_2^t}$ converges to the expectation of $g$.  More importantly, can also compute a 95\% confidence interval by sorting the values of $s_1^t + s_2^t$, and discarding the lowest $2.5\%$ and highest $2.5\%$ of the samples. 

This computation is very efficient, and can be used for a wide variety of potential proxy estimands, thus allowing easy implementation of new PPI-like methods.
}

\comment{
\subsection{Contributions}

To summarize our contributions, we introduce a Bayesian framework for PPI tasks, specifically advocating for Monte Carlo integration as the fundamental inference process.  This leads immediately to a framework in which one can readily design autorater-powered proxy estimands for different tasks, and compute confidence intervals over these designed estimates.

Concretely, we propose and evaluate
\begin{itemize}
\item Bayesian variants of the difference estimate (called simply PPI in \cite{doi:10.1126/science.adi6000}) and its extension using powertuning (called PPI++ in \cite{angelopoulos2023ppi});
\item a Bayesian extension of the difference estimate called \emph{stratified estimates}, which improve experimentally over prior methods on several experimental tasks, and which are especially powerful when $n$ is of moderate size (a few hundred) and autorater scores have a  non-linear relationship to human labels;
\item combinations of stratified estimates with the ``power tuning'' approach of \cite{angelopoulos2023ppi}; 
\item a family of novel estimands we call \emph{chain rule estimates},
which improve substantially over difference estimates when autoraters give discrete, uncalibrated responses, which is common when autoraters are based on prompted LLMs.
\end{itemize}
We show that these new methods offer practically important improvements on a wide range of tasks, including judging outputs of summarization systems;  evaluating attributed question-answering systems; evaluating open-book QA systems; and conducting side-by-side tests on QA systems.

We also discuss and experimentally analyze the PPI-based credible intervals with classical confidence intervals, showing that the intervals produced by Bayesian difference estimates are virtually identical to their classical counterparts, and that the methods perform experimentally with respect to classical frequentist goals. 
}

%Solution: simple but general recipes for developing and empirically validating CI methods, and show that these can be used in practical contexts

\section{Related Work}
Prediction Powered Inference (\texttt{PPI}) was introduced in \cite{doi:10.1126/science.adi6000} as a method for obtaining tighter confidence intervals for parameters learned in supervised machine learning (e.g., coefficients in logistic regression) by also leveraging other model-based predictions on additional, unlabeled data. %This work showed how to use rectifiers to obtain improved confidence intervals than those obtained from labeled data alone. 
Related ideas were explored by \cite{wang2020methods}, but with a focus on bootstrapping as a way for obtaining confidence intervals. \texttt{PPI} was then extended in several directions. For example \cite{cross-ppi} showed how the labeled data can be used for both estimating the parameters and the autorater model. \texttt{PPI++} \cite{angelopoulos2023ppi} showed how to obtain confidence intervals that are easy to compute efficiently, and introduced a parameter for weighting the predictions of the autorater such that the overall statistical efficiency can be improved.  As noted in previous works on \texttt{PPI}, these approaches are closely related to other statistical methods for introducing control variates based on autoevaluators~\cite{chaganty-etal-2018-price}, as well as augmented inverse propensity weighting~\cite{aipw_robins} (see discussion in \cite{doi:10.1126/science.adi6000, angelopoulos2023ppi}). 
Like this paper, prior work has also focused on using \texttt{PPI}/\texttt{PPI++} for evaluating  machine learning systems with autoraters, including for ranking chatbots in the Chatbot Arena \cite{boyeau2024autoeval} and evaluating retrieval-augmented generation systems \cite{saadfalcon2024ares}.\looseness=-1

% In this paper we do not compare to prior PPI experiments constructing confidence intervals for parameters of linear or logistic regression, as we consider prior methods more appropriate for this case.\amirg{not sure this is needed. Maybe in experiments}

%While most PPI approaches take a frequentist approach, concurrent work \cite{hofer2024bayesian} proposed Bayesian variants of several PPI methods, including stratified approaches. Our work d
Most relevant to our setting, the recent work of \cite{zrnic2024active} focuses on \emph{active}  sampling of examples to label during \texttt{PPI}, where the total number of queried labels is random, but less than the sampling budget $n$ in expectation. Specifically, it proposes to label examples for which the autorater is less confident in its predictions, and corrects for the sampling bias with a variant of inverse propensity weighting. In contrast, our somewhat simplified approach  takes inspiration from stratified sampling where a coarse stratification is defined in advance, and the total number of labeled samples is constant. Like \cite{zrnic2024active}, we derive an analogous optimal  allocation strategy for a given  budget, though we only apply it at the stratum level, and not for individual examples. Furthermore, while the variable allocation helps reduce variance in heterogenous settings, our stratified treatment also allows for a stratum-specific extension of \cite{angelopoulos2023ppi}'s estimated tuning parameters that further improves performance in complementary ways.\looseness=-1

In terms of stratification specifically, concurrent work \cite{hofer2024bayesian} also proposed Bayesian variants of several \texttt{PPI} methods, including stratified approaches.  In contrast, the methods here do not require introducing priors, do not require running expensive Bayesian inference, and give more conventional, frequentist, guarantees of performance. The credible intervals produced by Bayesian methods are related to, but different from, the confidence intervals produced here, and by other prior \texttt{PPI} approaches \cite{doi:10.1126/science.adi6000,angelopoulos2023ppi,boyeau2024autoeval}.\looseness=-1

\comment{
\paragraph{Past work on prediction-powered inference.}

The difference estimate of Eq~\ref{eq:diff-intro} is well-known in mathematical statistics \cite{difference-est-survey17}, and it is closely related to doubly robust policies in reinforcement learning \cite{doubly-robust-2011}.  Until recently, the difference estimate was not widely known in the AI/ML community, but it was discussed by \cite{doi:10.1126/science.adi6000} as a means of exploiting arbitrary machine learning models as autoraters.

\cite{doi:10.1126/science.adi6000} also discussed a number of generalizations of the difference estimate based on the observation that computing means can be viewed as solving a convex optimization problem; specifically the mean $\mu$ of a sample $Y=y_1,\ldots,y_n$ solves
\[
\mu = \textit{argmin}_{\mu'} \sum_{i} (y_i - \mu')^2
\]
Based on this insight, \cite{doi:10.1126/science.adi6000} propose ``prediction-powered'' algorithms for computing confidence intervals for other statistics of interest, including means, medians, quantiles, and parameters of logistic and linear regressions. 

\cite{doi:10.1126/science.adi6000} also propose methods that apply to solutions to any convex optimization tasks---although these are expensive with the methods of \cite{doi:10.1126/science.adi6000}, requiring grid search over parameter space. 
In later work \cite{angelopoulos2023ppi} propose a more efficient process for finding confidence intervals for general convex optimization methods, and also present a method called \emph{power tuning}, which we discuss below.  Past applications of PPI include ranking chatbots in the Chatbot Arena \cite{boyeau2024autoeval} and evaluating retrieval-augmented generation systems \cite{saadfalcon2024ares}.
In this paper we do not compare to prior PPI experiments constructing confidence intervals for parameters of linear or logistic regression, as we consider prior methods more appropriate for this case.

Concurrent work \cite{hofer2024bayesian} proposed Bayesian variants of several PPI methods, including stratified approaches.  In contrast, the methods here do not require introducing priors; do not require running expensive sampling-based Bayesian inference; and give more conventional guarantees of performance. (The credible intervals produced by Bayesian methods are related to, but different from, the confidence intervals produced here, and by other prior PPI approaches \cite{doi:10.1126/science.adi6000,angelopoulos2023ppi,boyeau2024autoeval}. 

% \subsection*{Power tuning for PPI} \label{sec:powertune-related}
% \paragraph{Power tuning for PPI.}
% The difference estimator of Eq~\ref{eq:diff-intro} can be generalized to
% \begin{equation} \label{eq:powertune}
% \frac{1}{N}\sum_{j=1}^N \lambda f(x_j)
%    ~~+~~ \frac{1}{n}\sum_{i=1}^n y_i - \lambda f(x_i)    
% \end{equation}  
% where $\lambda$ is any constant between 0 and 1 \cite{angelopoulos2023ppi}.  This formulation
% is equivalent to the difference estimator when $\lambda=1$, and equivalent to the classical estimator when $\lambda=0$ (and like the standard difference estimator, it has the same expectation as the classical estimator).  Eq~\ref{eq:powertune} thus defines a parameterized family of PPI methods that interpolate between the classical and difference estimate.  \cite{angelopoulos2023ppi} present a closed-form formula for computing $\lambda^*$ that minimizes variance, and call this technique \emph{power tuning}, and similar method based on control variates was presented in \cite{chaganty-etal-2018-price}.  The formula for $\lambda*$ makes use of both the labeled and unlabeled data: $\lambda*$ is closely related to the correlation coefficient between $y$ and $f(x)$.  

% In this work, we experimentally compare our methods to power tuning, and additionally propose hybrid methods that combine powertuning with novel PPI methods.  
% We discuss powertuning and related methods further in Section~\ref{sec:tuning}.

%\paragraph{Stratified sampling and active inference.}
}
%\paragraph{Other approaches to re-calibration of model-based predictions.}
Finally, there is a conceptual similarity between the \texttt{PPI} rectifiers and post-hoc calibration of a classifier. There is a rich literature on calibrating classifiers (e.g., \cite{niculescu2005obtaining, platt1999probabilistic}), but there is a clear difference in goals between calibration and \texttt{PPI}: the former is aimed at modifying a learned autorater to make better probabilistic predictions, and the latter is aimed at obtaining better confidence intervals by making use of an existing autorater.
 That said, clearly one approach to improving \texttt{PPI} is to use a better-calibrated classifier, perhaps by taking some of the labeled data and using it for re-calibration. This is similar in motivation to the cross-\texttt{PPI} approach of \cite{cross-ppi}.  We leave such approaches as future work, but we do experimentally explore \texttt{PPI} approaches on both well-calibrated autoraters and poorly calibrated ones.\looseness=-1

% Stratified PPI can also be viewed as an ensemble method---a sort of mixture-of-experts approach in which $\lambda$ serves to softly select the classical estimate or the difference estimate.  It is fairly easy to ensemble PPI methods (for instance, by doing multiple-test corrections explicitly) but we again such approaches as future work.

\section{Preliminaries}

% Let $(X, Y) \in \mathcal{X} \times \mathcal{Y}$ be random variables with some distribution $P$, and let $f(X)$ be the output of a predictor of $Y$ given $X$. Given a convex loss function $\ell_\theta(x, y)$ satisfying certain regularity conditions, we wish to solve for the $d$-dimensional parameter $\theta^*$, where
% \begin{equation}
% \label{eq:m_estimator}
%     \theta^* = \argmin_{\theta \in \Theta} \mathbb{E}[\ell_\theta(X, Y)].
% \end{equation}
%  To do so, we assume that we have access to a dataset $\mathcal{D}_{n}$ of $n$ labeled samples $(X_i, Y_i)$, $i = 1, \ldots, n$ in addition to a dataset $\widetilde{\mathcal{D}}_N$ of $N$ unlabeled samples $\tilde{X}_i$, $i = 1, \ldots, N$ with their estimated labels $f(\widetilde{X}_i)$. Our goal is to establish a statistically valid and efficient confidence interval $\mathcal{C}_{\alpha,j}$ for $\theta_j^*$ such that for any $j \in [d]$,
% \begin{equation}
%     \lim_{n, N \rightarrow \infty}\mathbb{P}(\theta_j^* \in \mathcal{C}_{\alpha,j}) \geq 1 - \alpha,
% \end{equation}
% where the probability is over the draw of $(\mathcal{D}_n, \widetilde{\mathcal{D}}_N)$. By efficient, we mean that we desire $|\mathcal{C}_{\alpha,j}|$ to be small. 

% \subsection{Prediction Powered Inference}
We begin by briefly reviewing \texttt{PPI}~\cite{doi:10.1126/science.adi6000}, and specifically the efficient \texttt{PPI++} variant of \cite{angelopoulos2023ppi}. Here,
and in the rest of the paper, upper-case letters ($X$) denote random variables; lower-case letters ($x$)
denote constants, and script letters ($\mathcal{X}$) denote sets, unless specified. All proofs are deferred to Appendix~\ref{app:proofs}.

Following \cite{doi:10.1126/science.adi6000, angelopoulos2023ppi, boyeau2024autoeval}, we assume an empirical sample $S_n = \{(X_1,Y_1), \ldots, (X_n,Y_n)\}$ of $n$ i.i.d. examples drawn from some unknown, but fixed, distribution $\mathbb{P}$, where $X_i \in \mathcal{X}$ is an input, and $Y_i \in \mathcal{Y}$ is the target output. 
%For example, assume each $X$ is a question/answer pair where the answer is produced by an LLM-based QA system we wish to evaluate, and $Y$ is a 0/1 gold rating of the correctness of the answer.
%
%For example, $X_i$ might be a generation from a language model, and $Y_i$ the binary indicator as to whether the generation was factual or not. 
We also assume a larger sample $\twid{S}_N = \{(\twid{X}_1, f(\twid{X}_1)), \ldots,(\twid{X}_N, f(\twid{X}_N))\}$ where $N \gg n$, for which the target outputs are not available, but  we have access to  
an \emph{autorater function} $f \colon \mathcal{X} \rightarrow \mathcal{Y}'$ which provides an approximation of  $Y$ given the observed input (we use $\mathcal{Y}'$ to denote the output space of the autorater as it is possible that $\mathcal{Y} \neq \mathcal{Y}'$, e.g., if $f(X) = \mathbb{E}[Y \mid X]$).

As an example, assume $X$ is a \emph{(question, model answer)} tuple where the answer is produced by an LLM-based QA system we wish to evaluate, $Y$ is a $0/1$ ``gold'' human rating of correctness of the answer, i.e., $\mathbf{1}\{\textit{human rates model answer as correct}\}$, and $f(X)$, the autorater, imperfectly predicts this human rating $Y$. One value of interest is then the average accuracy of the QA system, $\mathbb{E}[Y]$. More generally,  given any convex loss function $\ell_\theta(x, y)$ satisfying certain regularity conditions (see Definition~\ref{def:regularity}), we are interested in estimating the $d$-dimensional parameter $\theta^*$,\looseness=-1
\begin{equation}
\label{eq:m_estimator}
    \theta^* = \argmin_{\theta \in \Theta} \mathbb{E}[\ell_\theta(X, Y)].
\end{equation}
In the statistics literature, this is broadly referred to as M-estimation (e.g.,  see \cite{Vaart_1998}). 
Here, we focus on \emph{mean} estimation, for which the loss  $\ell_\theta(x, y) = \frac{1}{2}\Vert y - \theta \Vert^2$ has the optimum $\theta^* = \mathbb{E}[Y]$. However, our method generalizes to other typical losses such as the squared loss for linear regression, where the parameters $\theta^*$ are the regression coefficients, or, similarly, the log loss for other generalized linear models.
%$\ell_\theta(x, y) = -\log p_\theta(y \mid x)$, where  $p_\theta^*(y \mid x)$ is the maximum likelihood estimate of the conditional distribution.
%
%
Of course, the true  $\theta^*$ is generally not known, because we cannot exactly calculate the expectation in \eqref{eq:m_estimator}. The goal of \texttt{PPI}, which we share in this work, is  to use both the labeled and unlabeled data $S_n$ and $\tilde{S}_N$ to obtain an  asymptotically valid  confidence interval $\mathcal{C}_{\alpha,j}$ for $\theta^*_j$ that for any $j \in [d]$ satisfies\footnote{In $d$-dimensions, we obtain a confidence interval for each coordinate. See \cite{angelopoulos2023ppi} for other possible choices.} 
\begin{equation}
\label{eq:ci}
    \lim_{n, N \rightarrow \infty}\mathbb{P}(\theta_j^* \in \mathcal{C}_{\alpha,j}) \geq 1 - \alpha.
\end{equation}
The simplest confidence interval can be obtained using standard techniques by using only the labeled sample $S_n$, and ignoring the autorater data $\tilde{S}_n$. However, \texttt{PPI} employs a clever trick which allows for also using the unlabeled sample $\tilde{S}_N$  to get tighter confidence intervals, as described next. 

\subsection{A rectified prediction-powered loss}
The key idea of \texttt{PPI}-based methods is to use autorater predictions to derive a low variance, but unbiased, estimate of the objective in \eqref{eq:m_estimator}.
Consider the following ``rectified'' prediction-powered loss:
\begin{equation}
\label{eq:orig_ppi_loss}
    L^\mathrm{PP}(\theta) = \underbrace{\frac{1}{N} \sum_{i=1}^N \ell_\theta(\tilde{X}_i,f(\tilde{X}_i))}_{\text{autorater data loss}} ~+~ \underbrace{\frac{1}{n}\sum_{i = 1}^n \ell_\theta(X_i, Y_i) - \ell_\theta(X_i,f(X_i))}_{\text{loss rectifier}}.
\end{equation}
The  rectifier term on the right hand side of \eqref{eq:orig_ppi_loss} removes the bias of the autorater data so that  $L^{\mathrm{PP}}(\theta)$ satisfies $\mathbb{E}[L^{\mathrm{PP}}(\theta)] = \mathbb{E}\left[\ell_\theta(X,Y)\right]$.  However, when  $f(X)$ is correlated with $Y$, the loss $L^\mathrm{PP}(\theta)$ will have  lower variance. 
%This can then be used to construct  substantially smaller confidence intervals. 
%For example, in the  extreme setting where $f(X)$ is a noiseless estimator of $Y$, $L^\mathrm{PP}(\theta)$ becomes equivalent to the standard empirical estimate, but with the much larger effective sample size $N \gg n$.  
%
Unfortunately, $f(X)$ may not be a good predictor of $Y$ in all cases. In fact, it is even  possible for $L^\mathrm{PP}$ to  be higher variance than the classical estimate (e.g., if $f(X)$ is anti-correlated with $Y$). Thus, to adapt to the quality of  $f(X)$, \texttt{PPI++} also introduces a tuning parameter $\lambda \in \mathbb{R}$:\footnote{As noted in \cite{angelopoulos2023ppi}, for some losses $\ell_\theta$ we must have $\lambda \in [0, 1]$ to guarantee convexity of $L_\lambda^\mathrm{PP}$. The main focus of this paper, however, is on mean estimation with $\ell_\theta(x, y) = \frac{1}{2}\Vert y - \theta \Vert^2$, which is convex for any $\lambda \in \mathbb{R}$.}
\begin{equation}
    L_\lambda^\mathrm{PP}(\theta) = \frac{\lambda}{N} \sum_{i=1}^N \ell_\theta(\tilde{X}_i,f(\tilde{X}_i))+ {\frac{1}{n}\sum_{i = 1}^n \ell_\theta(X_i, Y_i) - \lambda \ell_\theta(X_i,f(X_i))}.
\end{equation}
Clearly, $L_\lambda^\mathrm{PP}$ still remains unbiased for any value of $\lambda$. For $\lambda = 0$, the framework reduces to classical inference. In most cases, however, a proper choice of $\lambda \neq 0$ will result in lower variance. \texttt{PPI++} also suggests how to automatically choose a data-dependent $\hat{\lambda}$  that converges to an optimal value.% that leads to performance that is never worse than classical inference, and often substantially better, as described below.
%and confidence intervals for $\hat{\theta}_\lambda^\mathrm{PP}$ can trivially be shown to be never worse than those of classical inference for $\lambda = 0$, and substantially better for adaptively chosen $\hat{\lambda}$.
%
%

\subsection{A prediction-powered confidence interval}
We next describe the \texttt{PPI++} method for deriving confidence intervals based on the rectified loss. Let
\begin{equation}
    \hat{\theta}_{\hat{\lambda}}^\mathrm{PP} = \argmin_{\theta \in \Theta} L_{\hat{\lambda}}^\mathrm{PP}(\theta).
\end{equation}
%to be the prediction-powered estimate of $\theta^*$.
be the prediction-powered estimate.
Standard  analysis for M-estimators can then be extended to show that the scaled difference of the estimate $\hat{\theta}_{\hat{\lambda}}^\mathrm{PP}$ and the true $\theta^*$  is asymptotically normally distributed. \texttt{PPI++} then leverages this result to compute a confidence interval for $\theta^*$ that is asymptotically valid.

\begin{theorem}[\texttt{PPI++}, \cite{angelopoulos2023ppi}]
    Assume that $\hat{\lambda} \overset{p}{\rightarrow} \lambda$ and $\frac{n}{N} \rightarrow r \geq 0$.  Let $H_{\theta^*} := \mathbb{E}[\nabla^2 \ell_{\theta^*}]$, and 
    \begin{equation}
        V_{f,\theta^*}^\lambda := \lambda^2 \mathrm{Cov}(\nabla \ell_{\theta^*}(\tilde{X}, f(\tilde{X})), \quad V_{\Delta,\theta^*}^\lambda := \mathrm{Cov}(\nabla \ell_{\theta^*}(X, Y) - \lambda \nabla \ell_{\theta^*}(X, f(X))),
    \end{equation}
where $\lambda \in \mathbb{R}$ is a hyper-parameter. Then under the regularity conditions of Definition~\ref{def:regularity},  we have that
%\begin{equation}
    $\sqrt{n} (\hat{\theta}_{\hat{\lambda}}^{\mathrm{PP}} - \theta^*) \overset{d}{\rightarrow} \mathcal{N}(0, \Sigma^\lambda)$,
%\end{equation}
where
%\begin{equation}
    $\Sigma^\lambda := H_{\theta^*}^{-1}(r\cdot V_{f,\theta^*}^\lambda + V_{\Delta, \theta^*}^\lambda)H_{\theta^*}^{-1}$.
%\end{equation}
\end{theorem}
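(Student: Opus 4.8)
This is an asymptotic normality result for a Z-estimator, and the plan is to run the classical M-estimation argument on the estimating equation while carefully tracking the two features specific to \texttt{PPI++}: the two independent samples $S_n$ and $\tilde S_N$ with $n/N \to r$, and the data-dependent tuning parameter $\hat\lambda$. Write $\psi_\theta := \nabla\ell_\theta$ and abbreviate $\hat\theta := \hat\theta^{\mathrm{PP}}_{\hat\lambda}$. By convexity of $L^{\mathrm{PP}}_{\hat\lambda}$ and the regularity conditions of Definition~\ref{def:regularity}, $\hat\theta$ satisfies the first-order condition
\begin{equation}
0 \;=\; \nabla L^{\mathrm{PP}}_{\hat\lambda}(\hat\theta) \;=\; \frac{\hat\lambda}{N}\sum_{i=1}^N \psi_{\hat\theta}(\tilde X_i, f(\tilde X_i)) \;+\; \frac{1}{n}\sum_{i=1}^n\big(\psi_{\hat\theta}(X_i, Y_i) - \hat\lambda\,\psi_{\hat\theta}(X_i, f(X_i))\big),
\end{equation}
whose population analogue equals $\mathbb{E}[\psi_\theta(X,Y)]$ for every value of the weight (using $\tilde X \overset{d}{=} X$), and this vanishes at $\theta^*$. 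The argument has five steps: (1) consistency, $\hat\theta \overset{p}{\to} \theta^*$; (2) a first-order Taylor expansion of the estimating equation about $\theta^*$; (3) a central limit theorem for the score $\sqrt n\,\nabla L^{\mathrm{PP}}_\lambda(\theta^*)$; (4) showing that replacing $\lambda$ by $\hat\lambda$ perturbs the scaled score by $o_p(1)$; and (5) assembling the pieces via the continuous mapping and Slutsky theorems.

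The crux is step (3), where the key point is a cancellation of leading deterministic terms. Let $m_f := \mathbb{E}[\psi_{\theta^*}(X, f(X))]$, which is in general nonzero (it is precisely the autorater's bias on the gradient). In $\sqrt n\,\nabla L^{\mathrm{PP}}_\lambda(\theta^*)$ the autorater-data term contributes $\lambda\sqrt{n}\,m_f$ plus a centered fluctuation, while the rectifier contributes $-\lambda\sqrt{n}\,m_f$ plus a centered fluctuation (using also $\mathbb{E}[\psi_{\theta^*}(X,Y)] = 0$); the $\lambda\sqrt{n}\,m_f$ contributions cancel exactly, leaving
\begin{equation}
\sqrt{n}\,\nabla L^{\mathrm{PP}}_\lambda(\theta^*) \;=\; \lambda\sqrt{n/N}\;Z_N \;+\; W_n ,
\end{equation}
where $Z_N$ is the centered, $\sqrt N$-scaled average of $\psi_{\theta^*}(\tilde X_i, f(\tilde X_i))$ and $W_n$ is the centered, $\sqrt n$-scaled average of $\psi_{\theta^*}(X_i, Y_i) - \lambda\,\psi_{\theta^*}(X_i, f(X_i))$. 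By the multivariate CLT, $Z_N \overset{d}{\to} \mathcal{N}(0, \mathrm{Cov}(\psi_{\theta^*}(\tilde X, f(\tilde X))))$ and $W_n \overset{d}{\to} \mathcal{N}(0, V^\lambda_{\Delta,\theta^*})$; because $S_n$ and $\tilde S_N$ are independent these two limits are independent, and since $\sqrt{n/N}\to\sqrt r$ we obtain $\sqrt n\,\nabla L^{\mathrm{PP}}_\lambda(\theta^*) \overset{d}{\to} \mathcal{N}\big(0,\; r\,V^\lambda_{f,\theta^*} + V^\lambda_{\Delta,\theta^*}\big)$. For step (4), $\sqrt n\big(\nabla L^{\mathrm{PP}}_{\hat\lambda}(\theta^*) - \nabla L^{\mathrm{PP}}_\lambda(\theta^*)\big) = (\hat\lambda - \lambda)R_n$ where, after the same cancellation of the $\sqrt n\,m_f$ term, $R_n = O_p(1)$; since $\hat\lambda - \lambda = o_p(1)$ the perturbation is $o_p(1)$.

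For steps (2) and (5), a Taylor expansion with integral remainder gives $\sqrt n(\hat\theta - \theta^*) = -\bar H_n^{-1}\,\sqrt n\,\nabla L^{\mathrm{PP}}_{\hat\lambda}(\theta^*)$, where $\bar H_n$ is the average of $\nabla^2 L^{\mathrm{PP}}_{\hat\lambda}$ along the segment from $\hat\theta$ to $\theta^*$. By the LLN with the local domination and continuity hypotheses of Definition~\ref{def:regularity}, together with consistency from step (1) and $\hat\lambda \overset{p}{\to}\lambda$, one gets $\bar H_n \overset{p}{\to} \lambda\,\mathbb{E}[\nabla^2\ell_{\theta^*}(\tilde X, f(\tilde X))] + \mathbb{E}[\nabla^2\ell_{\theta^*}(X,Y)] - \lambda\,\mathbb{E}[\nabla^2\ell_{\theta^*}(X, f(X))] = H_{\theta^*}$, the $\lambda$-weighted autorater Hessians cancelling since $\tilde X \overset{d}{=} X$. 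Combining with steps (3)--(4) via Slutsky, and using that $H_{\theta^*}$ is symmetric so $H_{\theta^*}^{-\top} = H_{\theta^*}^{-1}$, yields $\sqrt n(\hat\theta - \theta^*) \overset{d}{\to} \mathcal{N}(0, \Sigma^\lambda)$ with $\Sigma^\lambda = H_{\theta^*}^{-1}(r\,V^\lambda_{f,\theta^*} + V^\lambda_{\Delta,\theta^*})H_{\theta^*}^{-1}$.

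The main obstacle is not any single step but the joint control of the data-dependent Hessian $\bar H_n$ (evaluated along a random segment) and the random weight $\hat\lambda$ — exactly what the regularity conditions of Definition~\ref{def:regularity} (invertibility of $H_{\theta^*}$, local domination and continuity of $\nabla^2\ell_\theta$ enabling a uniform law of large numbers) are there to support — together with a careful proof of consistency in step (1). For the paper's primary case of mean estimation, $\ell_\theta(x,y)=\tfrac12\|y-\theta\|^2$ so $\nabla^2\ell_\theta\equiv I_d$ and $H_{\theta^*}=I_d$; steps (1), (2), and the Hessian analysis then collapse and only the score CLT of steps (3)--(4) remains, recovering the argument of \cite{angelopoulos2023ppi}.
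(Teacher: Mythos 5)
Your proposal is correct and follows essentially the same route the paper takes: the paper does not re-prove this cited \texttt{PPI++} theorem, but its Appendix proof of the stratified generalization (Theorem~\ref{thm:weighted_m_estimator}) is exactly your argument specialized to $K>1$ strata --- a CLT for the centered score at $\theta^*$ with the autorater bias term $\mathbb{E}[\nabla\ell_{\theta^*}(X,f(X))]$ cancelling between the unlabeled average and the rectifier, joint convergence from independence of $S_n$ and $\tilde S_N$ with $n/N\to r$, Slutsky to absorb $\hat\lambda\overset{p}{\to}\lambda$, consistency via a uniform weak law of large numbers, and a mean-value/Taylor expansion whose Hessian converges to $H_{\theta^*}$ after the $\lambda$-weighted autorater Hessians cancel. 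Your explicit $o_p(1)$ bound for the $\hat\lambda$ perturbation and the integral-remainder form of the expansion are minor (and if anything slightly cleaner) variations on the paper's use of Lemma~\ref{lemma:slutksy} and the mean value theorem, not a different approach.
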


\begin{corollary}[\texttt{PPI++~CI},~\cite{angelopoulos2023ppi}]
\label{cor:ppi_ci}
    Let $\widehat{\Sigma}^{\hat{\lambda}}$ be the plug-in estimate for $\Sigma^\lambda$ using  $S_n$ and $\tilde{S}_N$. Define 
    \begin{equation}
    \mathcal{C}^{\mathrm{PP}}_{\alpha,j} := \left(\hat{\theta}^{\mathrm{PP}}_{\hat{\lambda},j} \pm z_{1-\frac{\alpha}{2}} \sqrt{n^{-1}\widehat{\Sigma}^\lambda_{jj}}\right),
\end{equation}
where $z_{\beta}$ denotes the $\beta$-quantile of the standard normal distribution. Then for any $j \in [d]$ it holds that
%    \begin{equation}
        $\lim_{n, N \rightarrow \infty} \mathbb{P}\left(\theta^*_j \in \mathcal{C}_{\alpha, j}^\mathrm{PP}\right) \geq 1 - \alpha$.
%    \end{equation}
\end{corollary}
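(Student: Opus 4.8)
The plan is to derive the corollary from Theorem~1 by a standard Slutsky argument, with the only real work being to establish consistency of the plug-in variance estimator. Fix a coordinate $j \in [d]$. Theorem~1 together with the continuous mapping theorem (reading off a marginal of a multivariate normal) gives $\sqrt{n}(\hat\theta^{\mathrm{PP}}_{\hat\lambda,j} - \theta^*_j) \overset{d}{\rightarrow} \mathcal{N}(0, \Sigma^\lambda_{jj})$, where $\Sigma^\lambda_{jj}$ is the $j$-th diagonal entry of $\Sigma^\lambda = H_{\theta^*}^{-1}(r\, V_{f,\theta^*}^\lambda + V_{\Delta,\theta^*}^\lambda) H_{\theta^*}^{-1}$, and the regularity conditions of Definition~\ref{def:regularity} ensure that $H_{\theta^*}$ is invertible and $\Sigma^\lambda_{jj} > 0$.

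The core step is to show $\widehat\Sigma^{\hat\lambda}_{jj} \overset{p}{\rightarrow} \Sigma^\lambda_{jj}$. Writing $\hat\theta := \hat\theta^{\mathrm{PP}}_{\hat\lambda}$, the plug-in estimate $\widehat\Sigma^{\hat\lambda}$ is formed by substituting $\hat\theta$, $\hat\lambda$, and $n/N$ into the empirical analogues of $H_{\theta^*}$, $V_{f,\theta^*}^\lambda$ and $V_{\Delta,\theta^*}^\lambda$ — for instance $\widehat H := \tfrac{1}{n}\sum_{i=1}^n \nabla^2 \ell_{\hat\theta}(X_i, Y_i)$ and $\widehat V_\Delta := \widehat{\mathrm{Cov}}\big(\nabla \ell_{\hat\theta}(X_i, Y_i) - \hat\lambda\, \nabla \ell_{\hat\theta}(X_i, f(X_i))\big)$, with $\widehat V_f$ defined analogously on $\tilde S_N$. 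I would argue: (i) $\hat\theta \overset{p}{\rightarrow} \theta^*$, which is a byproduct of the proof of Theorem~1 under Definition~\ref{def:regularity}; (ii) $\hat\lambda \overset{p}{\rightarrow} \lambda$ and $n/N \rightarrow r$ hold by assumption; (iii) a uniform law of large numbers over a neighborhood of $\theta^*$ — licensed by continuity of $\theta \mapsto \nabla \ell_\theta$ and $\theta \mapsto \nabla^2 \ell_\theta$ and the dominated-moment (envelope) conditions of Definition~\ref{def:regularity} — lets one replace $\hat\theta$ by $\theta^*$ inside the empirical averages at no asymptotic cost, after which the ordinary LLN delivers convergence to the population moments; and (iv) since matrix inversion and multiplication are continuous on the invertible matrices, the continuous mapping / Slutsky theorems assemble (i)--(iii) into $\widehat\Sigma^{\hat\lambda} \overset{p}{\rightarrow} \Sigma^\lambda$, hence $\widehat\Sigma^{\hat\lambda}_{jj} \overset{p}{\rightarrow} \Sigma^\lambda_{jj} > 0$.

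Finally, I would combine the two convergences via Slutsky's theorem to get $\sqrt{n}(\hat\theta^{\mathrm{PP}}_{\hat\lambda,j} - \theta^*_j) / \sqrt{\widehat\Sigma^{\hat\lambda}_{jj}} \overset{d}{\rightarrow} \mathcal{N}(0,1)$, which directly yields $\mathbb{P}(|\hat\theta^{\mathrm{PP}}_{\hat\lambda,j} - \theta^*_j| \leq z_{1-\alpha/2}\sqrt{n^{-1}\widehat\Sigma^{\hat\lambda}_{jj}}) \to \Phi(z_{1-\alpha/2}) - \Phi(-z_{1-\alpha/2}) = 1-\alpha$, i.e. $\lim_{n,N\to\infty}\mathbb{P}(\theta^*_j \in \mathcal{C}^{\mathrm{PP}}_{\alpha,j}) = 1-\alpha$, which is in particular $\geq 1-\alpha$ as claimed.

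The hard part will be step (iii): carefully discharging the uniform LLN so that plugging the data-dependent $\hat\theta$ (and $\hat\lambda$) into the empirical moment estimators is asymptotically harmless. This is exactly where the regularity conditions of Definition~\ref{def:regularity} do their work, and it recapitulates the classical sandwich-variance consistency argument for M-estimators (cf.~\cite{Vaart_1998}); the presence of the two samples $S_n$ and $\tilde S_N$ and the extra tuning parameter $\hat\lambda$ adds bookkeeping but no conceptual novelty beyond the original \texttt{PPI++} analysis of \cite{angelopoulos2023ppi}.
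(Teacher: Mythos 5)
Your proposal is correct and follows essentially the same route the paper takes: the paper does not re-prove this corollary (it is imported from \cite{angelopoulos2023ppi}), but its proof of the directly analogous stratified version (Corollary~\ref{cor:stratppi_ci}) is exactly your argument --- consistency of the plug-in covariance estimates (discharged there via Lemma 4.3 of \cite{NEWEY19942111}, which is the uniform-LLN step you flag as the ``hard part''), followed by Slutsky applied to the asymptotic normality from the theorem to read off the $(\tfrac{\alpha}{2}, 1-\tfrac{\alpha}{2})$ quantiles. The only nitpick is your claim that $\Sigma^\lambda_{jj}>0$ follows from Definition~\ref{def:regularity}; it does not (the variance terms can degenerate), though in that case coverage is trivially $\geq 1-\alpha$, so the conclusion is unaffected.
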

As mentioned above, \cite{angelopoulos2023ppi} further showed that for an appropriate choice of $\lambda$ (that can be analytically derived and estimated via a $\hat{\lambda}$), the trace of the covariance matrix, $\mathrm{Tr}(\Sigma^\lambda)$, is at most that of the covariance matrix derived without using autorater data, implying that the \texttt{PPI++}-based confidence set $\mathcal{C}_\alpha^\mathrm{PP}$ can always be at least as tight as that of the classical M-estimator (and often much tighter in practice).\looseness=-1

\comment{
\subsection{Classical confidence intervals and credible intervals} \label{sec:credible}

\subsubsection{Definitions}

The following material is provided for completeness and to establish notation, but can be found in many textbooks, such as \cite{wasserman2013all}.

Statistical \emph{confidence intervals} measure uncertainty of an estimate of an unknown parameter $\theta$ from a finite sample $X$---for example, $\theta$ might be the unknown value of $p(H=1)$ above.  The most familiar procedure for computing a confidence interval is to pick an appropriate \emph{probability distribution function (pdf)}, written here $f(\theta)$: recall that a pdf for $\theta$ has the property that $p(a \leq \theta \leq b) = \int_a^b f(\theta) d\theta$.  A \emph{confidence interval} for \emph{confidence level} $c$ is a pair of values $\ell$ and $u$ such that 
\begin{equation} \label{eq:bounds}
    \int_{\ell}^u f(\theta) d\theta = c
\end{equation}
An \emph{equal-tailed interval} discards the same amount of probability mass below $\ell$ and $u$, i.e., letting $\alpha = 1-c$, an equal-tailed interval is one where
\[ \int_{-\infty}^\ell f(\theta) d\theta  = \int_u^{\infty} f(\theta) d\theta  = \alpha/2
\]
We consider only equal-tailed intervals in this paper.

%\subsubsection{Interpretations}
In a Bayesian setting, we typically think of $f(\theta)$ as the posterior of a random variable $\theta$ given data $D$, $f(\theta)=p(\theta|D)$, and the interval $\ell, u$ for level $c$ means that $p(\ell \leq \theta \leq u) > c$, where the probability is taken over the possible values of the unknown parameter $\theta$.
When this interpretation is being used, $\ell, u$ is called a \emph{credible interval}.  Since the true prior $p(\theta)$ is typically not known, it is generally necessary to use an \emph{improper} weak prior for $\theta$.

The more common classical (frequentist) interpretation of confidence intervals is more complicated. We can still say that $p(\ell \leq \theta \leq u) > c$, but the probability is taken over \emph{possible samples} $S$, and $\theta$ is assumed to be fixed (but unknown).

\subsubsection{Common examples}
\emph{Intervals for means.} One familiar case of this is when $\theta$ is the mean of an unknown normal distribution.  Given a sample $S=y_1,\ldots,y_n$, with sample mean 
$\overline{y}=\frac{1}{n} \sum_i y_i$
and sample variance $\hat{\sigma}^2 = \frac{1}{n} \sum_i (x_i - \overline{y})^2$, 
a confidence interval can be found using Eq~\ref{eq:bounds} by making $f(\theta)$ a Gaussian distribution $\mathcal{N}(\mu, \sigma)$, for $\mu=\overline{y}$ and 
\( \sigma = \sqrt{\hat{\sigma}^2 / n} \).  In this case there is a simple closed-form solution for the classical confidence interval, obtained by snipping off the tails of $f(\theta)$, which for a 95\% confidence interval gives
$\ell = \mu - 1.96\sigma$ and $u = \mu + 1.96\sigma$.  
 
It turns out that this $f(\theta)$ can also be interpreted as a posterior for $\theta$. If this is done the Bayesian credible intervals will be the same as the classical ones, and we follow this practice here.

\emph{Intervals for proportions.} Another common case is when the data is generated by a Bernoulli distribution, i.e., for each $x_i$ we have $p(x_i=1) = \theta$ and $p(x_i=1) = 1 - \theta$, for $0 \leq \theta \leq 1$.  It is common to use the same procedure as above---although the usual notation in describing it is to write $k$ for the number of times $x_i=1$, $\hat{p}=\hat{\mu}=k/n$, and  
$\hat{\sigma}^2=\hat{p}(1 - \hat{p})$.  This is called the Wald method.

The Wald method is an approximation and is \emph{conservative}---i.e., the intervals can be too large.  The most noticeable errors are when $\ell < 0$, which can happen when  $n$ is small and $\theta*$ is close to zero, or when $u> 1$, which can happen when $n$ is small and $\theta*$ is close to one.  Unfortunately this is an important case for the methods described below, so in the experiments below we use as the ``classical confidence interval'' the Clopper-Pearson test \cite{nelson2018five}.

Bayesian credible interval computations for proportions usually use a Beta distribution\footnote{Recall that Beta$(\alpha, \beta)$ is defined over $[0,1]$ and is closely related to the binomial distribution.}  as a prior, often the Jeffrey's prior of Beta$(\alpha=\frac{1}{2}, \beta=\frac{1}{2})$, and we follow this practice in the experiments of this paper.

A very useful extension to the Bernoulli distribution is the multinomial distribution, where the random variable $x_i$ can take on $K$ possible integer values 1, \ldots, $K$, so $\theta$ is a vector on a $K$-dimensional simplex.  The analog of the Beta distribution here is called the Dirichlet distribution, for which we use the prior $\alpha_1=\ldots\alpha_K = \frac{1}{K}$.

%% credible vs confidence - cut

\subsection{Monte Carlo integration} \label{sec:mci}

\begin{figure*}
\begin{center}
    \includegraphics[width=\textwidth]{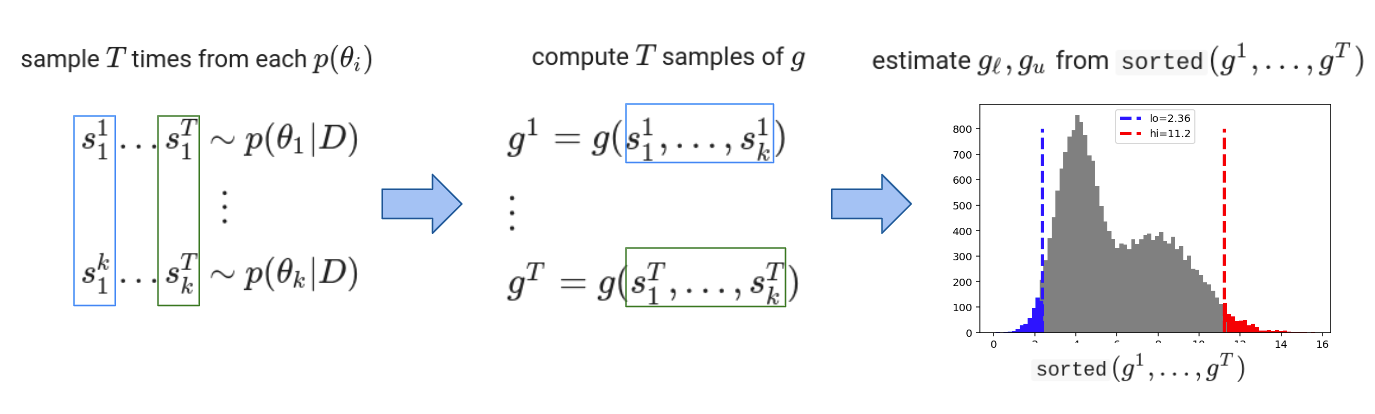}
\end{center}
\vspace{-0.25in}
\caption{Monte Carlo integration to compute confidence intervals for a function $g(\theta_1,\ldots,\theta_k)$, where $\theta_i$'s are
unknown population means and proportions that must be estimated from a sample $D$.} \label{fig:mci}
\end{figure*}

As noted in the introduction, we will be interested in constructing credible intervals over expressions of the form
\begin{equation} \label{eq:mci}
 \int g(\theta_1,\ldots,\theta_k)
      p(\theta_1|D) \ldots p(\theta_k|D)
      d\theta_1 \ldots d\theta_k
\end{equation}
where $g$ is an arbitrary function; each $\theta_i$ is an unknown parameter value that is to be estimated from a dataset $D$; and each $p(\theta_i|D)$ is the posterior over $\theta_i$.  In the cases considered here the construction is correct because the $\theta_i$'s are independent given $D$, generally because they are computed from different subsets of the data.

\emph{Monte Carlo integration} \cite{wasserman2013all} approximates
this integral with bootstrap-like method. 
For each parameter $\theta_i$, we sample $T$ times from its posterior: 
\begin{eqnarray*}
s_1^1,\ldots,s_1^T & = & \mbox{$T$ samples of $p(\theta_1|D)$} \\
 & \vdots & \\
s_k^1,\ldots,s_k^T & = & \mbox{$T$ samples of $p(\theta_k|D)$}
\end{eqnarray*}
We then compute the function $g$ on each of these $T$ samples
\begin{eqnarray*}
g^1 & = & g(s_1^1, \ldots, s_k^1) \\
 & \vdots & \\
g^T & = & g(s_1^T, \ldots, s_k^T)
\end{eqnarray*}
Since $g^1,\ldots,g^T$ is a posterior-weighted sample of $g(\theta_1,\ldots,\theta_k)$,
Eq~\ref{eq:mci} can be approximated by $\frac{1}{N} \sum_{t=1}^T g^t$.  

Now suppose we sort the $g^t$'s to create a long vector $\langle g'^1, \ldots, g'^T \rangle$, and consider the index $t_{\ell} = \lfloor{\alpha T}\rfloor$ for some $\alpha$, say $\alpha=\frac{1}{10}$.  Clearly only a fraction $\alpha$ of the original sample were smaller than $g'_{t_{\ell}}$.  Likewise only a fraction $\alpha$ were larger than $g'_{t_u}$ for $t_u = \lceil(1 - \alpha) T\rceil$, so for a large sample a confidence interval can be easily constructed.  

This process is illustrated in Figure~\ref{fig:mci}.
In the experiments here, we use $T=10,000$ unless otherwise stated, which is large enough that the uncertainty associated with the sampling can be ignored.
}

\section{Stratified prediction-powered inference}  \label{sec:strat-theory}

We now present our approach for improving \texttt{PPI++} estimates via \emph{stratification}. In particular, we show how optimizing a rectified loss  computed via stratified sampling can lead to a consistent, but lower variance, estimate of the optimal parameter $\theta^*$, and correspondingly tighter  confidence intervals.
Consider the QA example from the previous section. For most autoraters, it is reasonable to assume that the strength of their performance can vary, depending on the type of input being presented. For instance, an autorater might be  accurate at predicting whether an answer to an unambiguous question (e.g., \emph{``What is the capital of France?''}) is correct,  but relatively poor at inferring if an answer to an open-ended question (e.g., \emph{``What is the best way to cook a hamburger?''}) is acceptable or not. Splitting the problem 
space  into different domains allows us to derive a more specialized form of the prediction powered loss that can better adapt to this autorater heterogeneity via stratified sampling.

Formally, assume that the input space $\mathcal{X}$ is partitioned in advance into $K$ non-empty, mutually exclusive, and exhaustive strata $\mathcal{A} = (\mathcal{A}_1, \ldots, \mathcal{A}_K)$, where $K$ is a finite integer. For each stratum, we further assume that we can estimate $w_k = \mathbb{P}(X \in \mathcal{A}_k)$ arbitrarily well using large amounts of unlabeled data or prior knowledge; we treat them as known constants here for simplicity.
%\footnote{We show in Appendix~\ref{app:proofs} that our results also hold for any estimate $\hat{w}_k$ of $w_k$ satifying $\hat{w}_k \overset{p}{\rightarrow} w_k$.}
To collect the labeled and unlabeled datasets $S_n$ and $\tilde{S}_N$, we then follow a standard stratified sampling procedure in which  we draw two i.i.d. sets of samples of fixed size $n_k$ and $N_k$, respectively, from each stratum $k$, where $\sum_{k=1}^K n_k = n$ and $\sum_{k=1}^K N_k = N$. The relative sizes $n_k/n$ and $N_k / N$ are free parameters; they can simply be the natural rates, $n_k / n \approx N_k / N \approx w_k$, or systematically decided  (see \S\ref{sec:optimal_allocation}). Note that this is a fundamentally different sampling model from standard \texttt{PPI}: here examples are i.i.d. within each strata, and independent (but not necessarily identically distributed) across each strata.\looseness=-1 % We will assume that $n_k / n \rightarrow \rho_k$ for some $\rho_k > 0$ and that $N_k / N \rightarrow \tilde{\rho}_k$ for some $\tilde{\rho}_k > 0$, where $\rho_k$ and $\tilde{\rho}_k$ are the  sampling budget ``allocations'' (i.e., relative sampling rates) for each stratum.\amirg{why not just say $N_k / N $ rather than a limit?}

Next, we define the stratified  loss  via the weighted sum,
$
   L_{\lambda}^{\mathrm{SPP}}(\theta) = \sum_{k=1}^K w_k L_{k, \lambda_k}^\mathrm{PP}(\theta),
$
where $w_k$ is the stratum weight, $\lambda = (\lambda_1, \ldots, \lambda_k) \in \mathbb{R}^k$ are now \emph{stratum-specific} tuning parameters, and $L_k^\mathrm{PP}(\theta)$ is the conditional prediction-powered loss computed within each stratum, i.e.,
\begin{equation}
\label{eq:ppi_loss}
    L_{k, \lambda_k}^\mathrm{PP}(\theta) = \frac{\lambda_k}{N_k} \sum_{i=1}^{N_k} \ell_\theta(\tilde{X}_{ik},f(\tilde{X}_{ik})) + \frac{1}{n_k}\sum_{i = 1}^{n_k} \ell_\theta(X_{ik}, Y_{ik}) -  \lambda_k \ell_\theta(X_{ik},f(X_{ik})).
\end{equation}
As before, each $L_{k, \lambda_k}^\mathrm{PP}$ is an unbiased estimate of the conditional loss. Like \texttt{PPI++}, we  also allow for data-dependent parameters $\hat{\lambda}_k$ that we show how to automatically tune for best performance in \S\ref{sec:powertune}.
% When weighted by $w_k$, the sum is not only unbiased, but also has the potential to be lower variance, as we will formally show.
%
%
%\amirg{The following passage is best deferred to after the analysis.}

% \begin{equation}
%     \hat{\theta}_\lambda^{\mathrm{SPP}} = \argmin_{\theta \in \Theta}~ L_w^{\mathrm{PP}}(\theta, \lambda)
% \end{equation}
% where

 \subsection{A stratified prediction-powered confidence interval}
 We now present our main result, which is a confidence interval for $\theta^*$ based on the stratified loss. More precisely, the result states that, as in \texttt{PPI++}, the minimizer of the stratified loss,
 \begin{equation}
     \hat{\theta}_{\hat{\lambda}}^\mathrm{SPP} = \argmin_{\theta} L_{\hat{\lambda}}^\mathrm{SPP}(\theta),
 \end{equation}
 has an asymptotically normal distribution with mean $\theta^*$. See Algorithm~\ref{alg:stratppi} for pseudocode.

%  This will allows us to derive an asymptotic distribution for $\hat{\theta}_\lambda^{\mathrm{SPP}} = \argmin_\theta L_\lambda^\mathrm{SPP}(\theta)$, and corresponding confidence intervals. 

\begin{theorem}
\label{thm:weighted_m_estimator}
Assume that $\hat{\lambda}_k \overset{p}{\rightarrow} \lambda_k$, $\frac{n}{N} \rightarrow r$ for any $r \geq 0$, $\frac{n_k}{n} \rightarrow \rho_k$ for any $\rho_k> 0$, and $\frac{N_k}{N} \rightarrow \tilde{\rho}_k$ for any $\tilde{\rho}_k > 0$. Let $H_{k,\theta^*} := \mathbb{E}[\nabla^2 \ell_{\theta^*}(X, Y) \mid X \in \mathcal{A}_k]$, and  
\begin{align}
V^{\lambda_k}_{k, f, \theta^*} &:= \lambda_k^2 \mathrm{Cov}(\nabla \ell_{\theta^*}(\tilde{X}, f(\tilde{X})) \mid \tilde{X} \in \mathcal{A}_k), \\
V^{\lambda_k}_{k, \Delta, \theta^*} &:= \mathrm{Cov}(\nabla \ell_{\theta^*}(X, Y) - \lambda_k \nabla \ell_{\theta^*}(X, f(X)) \mid X \in \mathcal{A}_k),
\end{align}
where $\lambda_k \in \mathbb{R}$ is a  hyper-parameter.
Then, under the regularity conditions of Definition~\ref{def:regularity}, 
\begin{equation}
    \sqrt{n} (\hat{\theta}_{\hat{\lambda}}^{\mathrm{SPP}} - \theta^*) \overset{d}{\rightarrow} \mathcal{N}(0, \Sigma_w^\lambda),
\end{equation}
where
%\begin{equation}
    $\Sigma_w^\lambda := A_w^{-1}B^\lambda_wA_w^{-1}$
%\end{equation}
and:
\begin{equation}
 A_w := \sum_{k=1}^K w_k H_{k,\theta^*} \quad\quad B^\lambda_w := \sum_{k=1}^K w_k^2 \left(\frac{r}{\tilde{\rho}_k} \cdot  V_{k,f,\theta^*}^{\lambda_k} + \frac{1}{\rho_k} \cdot V_{k,\Delta,\theta^*}^{\lambda_k}\right).
\end{equation}
\end{theorem}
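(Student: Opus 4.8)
The plan is to reduce the stratified case to a direct application of standard asymptotic theory for weighted M-estimators, following the template of \cite{angelopoulos2023ppi} and \cite{Wooldridge2001ASYMPTOTICPO} but tracking the cross-stratum independence structure. First I would write the population objective as $L^{\mathrm{SPP}}_\lambda(\theta) = \sum_k w_k \mathbb{E}[\ell_\theta(X,Y)\mid X\in\mathcal{A}_k] = \mathbb{E}[\ell_\theta(X,Y)]$, confirming that $\theta^*$ is the unique minimizer of the population version of the stratified loss (using the regularity conditions of Definition~\ref{def:regularity}, in particular strong convexity in a neighborhood and the fact that each conditional expectation is minimized consistently with the unconditional one — because $\ell_\theta$ does not depend on $k$). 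This gives consistency $\hat\theta^{\mathrm{SPP}}_{\hat\lambda}\overset{p}{\to}\theta^*$ by the standard argmin/uniform-law argument, after checking that plugging in $\hat\lambda_k\overset{p}{\to}\lambda_k$ does not break uniform convergence (the loss is linear in each $\lambda_k$, so this is routine).

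Next I would perform the usual Taylor expansion of the first-order optimality condition $\nabla L^{\mathrm{SPP}}_{\hat\lambda}(\hat\theta)=0$ around $\theta^*$, yielding
\begin{equation}
\sqrt{n}(\hat\theta^{\mathrm{SPP}}_{\hat\lambda} - \theta^*) = -\left(\nabla^2 L^{\mathrm{SPP}}_{\hat\lambda}(\bar\theta)\right)^{-1}\sqrt{n}\,\nabla L^{\mathrm{SPP}}_{\hat\lambda}(\theta^*)
\end{equation}
for some $\bar\theta$ between $\hat\theta$ and $\theta^*$. The Hessian term converges in probability to $A_w = \sum_k w_k H_{k,\theta^*}$: note that the $\ell_\theta(\cdot,f(\cdot))$ contributions to the Hessian cancel in expectation within each stratum (the rectifier is mean-zero in second derivative too, since $\mathbb{E}[\nabla^2\ell_{\theta^*}(X,f(X))\mid X\in\mathcal{A}_k]$ appears with coefficient $\lambda_k$ from both the autorater term and the rectifier and cancels), so only the human-label Hessian survives, weighted by $w_k$. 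The core of the argument is the CLT for the gradient term $\sqrt{n}\,\nabla L^{\mathrm{SPP}}_\lambda(\theta^*)$. Here I would split $\nabla L^{\mathrm{SPP}}_\lambda(\theta^*) = \sum_k w_k\left[\frac{\lambda_k}{N_k}\sum_i \nabla\ell_{\theta^*}(\tilde X_{ik},f(\tilde X_{ik})) + \frac{1}{n_k}\sum_i\left(\nabla\ell_{\theta^*}(X_{ik},Y_{ik}) - \lambda_k\nabla\ell_{\theta^*}(X_{ik},f(X_{ik}))\right)\right]$ after rewriting the $\frac{1}{n}$ prefactor in \eqref{eq:ppi_loss} as $\frac{n_k}{n}\cdot\frac{1}{n_k}$. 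Because the $2K$ sample blocks (one autorater block and one labeled block per stratum) are mutually independent, each block's contribution is asymptotically normal by the ordinary i.i.d. CLT with the stated conditional covariances, and the total variance is the sum of the block variances. Scaling: the autorater block for stratum $k$ has $N_k\approx \tilde\rho_k N = \tilde\rho_k n/r$ terms, contributing variance $w_k^2\lambda_k^2\cdot\frac{n}{N_k}\mathrm{Cov}(\cdots) \to w_k^2\cdot\frac{r}{\tilde\rho_k}V^{\lambda_k}_{k,f,\theta^*}$ after multiplying by $n$; the labeled block has $n_k\approx\rho_k n$ terms contributing $w_k^2\cdot\frac{1}{\rho_k}V^{\lambda_k}_{k,\Delta,\theta^*}$. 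Summing gives $B^\lambda_w$, and Slutsky combined with the Hessian limit yields $\Sigma^\lambda_w = A_w^{-1}B^\lambda_w A_w^{-1}$.

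Finally I would dispatch the replacement of $\lambda$ by the estimate $\hat\lambda$: since $\hat\lambda_k\overset{p}{\to}\lambda_k$ and the gradient is affine in $\lambda$, one checks $\sqrt{n}(\nabla L^{\mathrm{SPP}}_{\hat\lambda}(\theta^*) - \nabla L^{\mathrm{SPP}}_\lambda(\theta^*)) = \sum_k w_k(\lambda_k - \hat\lambda_k)\sqrt{n}\left(\frac{1}{N_k}\sum_i\nabla\ell_{\theta^*}(\tilde X_{ik},f(\tilde X_{ik})) - \frac{1}{n_k}\sum_i\nabla\ell_{\theta^*}(X_{ik},f(X_{ik}))\right)$, and the parenthesized quantity is $O_p(1/\sqrt{n})$ because both empirical means converge to the same conditional expectation $\mathbb{E}[\nabla\ell_{\theta^*}(X,f(X))\mid X\in\mathcal{A}_k]$ — so the whole correction is $o_p(1)$ and does not affect the limiting distribution. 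The main obstacle is bookkeeping the scaling factors correctly: the loss in \eqref{eq:ppi_loss} uses a mixture of $1/N_k$, $1/n$, and implicit $n_k/n$ normalizations, and one must carefully convert all of these to a common $\sqrt{n}$ scale and verify that $n/N_k \to r/\tilde\rho_k$ and $n/n_k\to 1/\rho_k$ come out exactly as in $B^\lambda_w$; everything else is a routine adaptation of the \texttt{PPI++} proof with the added (and actually simplifying) observation that distinct strata furnish independent sample blocks whose variances simply add.
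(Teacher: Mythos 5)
Your proposal is correct and follows essentially the same route as the paper's proof: a per-stratum CLT for the independent labeled and autorater blocks (with the scaling factors $n/N_k\to r/\tilde\rho_k$ and $n/n_k\to 1/\rho_k$ producing $B_w^\lambda$), a mean-value/Taylor expansion of the first-order condition with the Hessian converging to $A_w$ by the uniform law of large numbers and the law of total expectation, and Slutsky to absorb $\hat\lambda_k\overset{p}{\to}\lambda_k$. Your explicit remarks that the autorater Hessian contributions cancel within each stratum and that the $\hat\lambda$-vs-$\lambda$ correction to the scaled gradient is $o_p(1)$ are slightly more detailed than the paper's treatment but amount to the same argument.
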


\definecolor{darkgreen}{rgb}{0.31, 0.47, 0.26}
\definecolor{darkorange}{RGB}{153, 51, 0}   
\definecolor{darkblue}{RGB}{0, 0, 180}
\begin{figure}[!t]
\centering
\begin{minipage}[t]{1\linewidth}
{\footnotesize
\begin{algorithm}[H]
\caption{Stratified prediction-powered inference for general M-estimators (\texttt{StratPPI})}
\label{alg:stratppi}
{
\textbf{Definitions:} $f$ is the autorater.  Inputs \textcolor{darkorange}{${\{(\tilde{X}_{ik}, f(\tilde{X}_{ik}))\}_{i=1}^{N_k}}$} include the autorater predictions on sampled unlabeled data for each partition $\mathcal{A}_k$, $k \in [K]$.  Inputs \textcolor{darkblue}{$\{({X}_{ik}, Y_{ik}, f({X}_{ik}))\}_{i=1}^{n_k}$} include the autorater predictions on sampled labeled data for each partition $\mathcal{A}_k$, $k \in [K]$. $w_k$ is the partition weight for $\mathcal{A}_k$, $k \in [K]$. $\alpha$ is the confidence interval coverage error  tolerance.\looseness=-1}
\begin{algorithmic}[1]
%\Function{\texttt{StratPPI}}{\textcolor{darkblue}{$\{(\tilde{X}_{ik}, f(\tilde{X}_{ik}))\}_{i=1}^{N_k} \colon k \in [K]\}$}, \textcolor{darkorange}{$\{({X}_{ik}, Y_{ik}, f({X}_{ik}))\}_{i=1}^{n_k} \colon k \in [K]\}$}, $\alpha$}
%\vspace{3pt}
\State{\textcolor{darkgreen}{\# Pick weighting parameters, see \S\ref{sec:powertune}}.}
\State{Select $\hat{\lambda}= (\hat{\lambda}_1, \ldots, \hat{\lambda}_K)$}

%\vspace{3pt}
\State{\textcolor{darkgreen}{\# Solve for the minimizer of the stratified, prediction-powered empirical loss.}}
\State{$\hat{\theta}_{\hat{\lambda}}^\mathrm{SPP} = \argmin_\theta L_{\hat{\lambda}}^\mathrm{SPP}(\theta)$}

%\vspace{3pt}
\State{\textcolor{darkgreen}{\# Estimate the  Hessian of the true expected loss at $\theta^*$.}}
\State{$\hat{A}_w = \sum_{k = 1}^K \frac{w_k}{n_k}\sum_{i=1}^{n_k}  \nabla^2 \ell_{\hat{\theta}_{\hat{\lambda}}^\mathrm{SPP}}(\textcolor{darkblue}{X_{ik}, Y_{ik}})$}

%\vspace{3pt}
\State{\textcolor{darkgreen}{\# Build the estimated covariance matrix from each stratified component of the loss.}}
\State{$\widehat{\Sigma}^{\hat{\lambda}}_w =\mathbf{0}_{d \times d}$}
\For{$k = 1, 2, \ldots, K$}
    \State{$\textcolor{darkorange}{\widehat{V}_f} = \hat{\lambda}_k^2 \widehat{\mathrm{Cov}}_{N_k}\left(\nabla \ell_{\hat{\theta}_{\hat{\lambda}}^\mathrm{SPP}}(\textcolor{darkorange}{\tilde{X}_{ik}, f(\tilde{X}_{ik}}))\right)$}
    \State{$\textcolor{darkblue}{\widehat{V}_{\Delta}} =  \widehat{\mathrm{Cov}}_{n_k}\left(\nabla \ell_{\hat{\theta}_{\hat{\lambda}}^\mathrm{SPP}}(\textcolor{darkblue}{X_{ik}, Y_{ik}}) + \hat{\lambda}_k \nabla\ell_{\hat{\theta}_{\hat{\lambda}}^\mathrm{SPP}}\left(\textcolor{darkblue}{X_{ik}, f(X_{ik}})\right) \right)$}
    \State{$\widehat{\Sigma}^{\hat{\lambda}}_w = \widehat{\Sigma}^{\hat{\lambda}}_w + w_k^2 \hat{A}_w^{-1}\left( \frac{\textcolor{darkorange}{\widehat{V}_f} }{N_k}+ \frac{\textcolor{darkblue}{\widehat{V}_{\Delta}}}{n_k}\right)\hat{A}_w^{-1}$}
\EndFor

%\vspace{3pt}
\State{\textcolor{darkgreen}{\# Return coordinate-wise confidence intervals for $\theta^*$.}}
\State{$\mathcal{C}_{\alpha}^\mathrm{SPP} = \Big\{\hat{\theta}^{\mathrm{SPP}}_j \pm z_{1-\frac{\alpha}{2}} \sqrt{\widehat{\Sigma}^{\hat{\lambda}}_{w,jj}} ~\colon~ j \in [d]\Big\}$} 
%\Return{$\mathcal{C}_{\alpha}^\mathrm{SPP}$}
%\EndFunction
\end{algorithmic}
\end{algorithm}
}
\end{minipage}
\vspace{-5pt}
\end{figure}

To obtain the result, we combine unstratified \texttt{PPI++} with the asymptotic properties of weighted M-estimators~\cite{Wooldridge2001ASYMPTOTICPO}.
The resulting confidence interval for $\theta^*$ is then derived analogously to Corollary~\ref{cor:ppi_ci}.
\begin{corollary}
\label{cor:stratppi_ci}
    Let $\widehat{\Sigma}_w^{\hat{\lambda}}$ be the plug-in estimate for $\Sigma_w^\lambda$ using  $S_n$ and $\tilde{S}_N$. Define 
    \begin{equation}
    \label{eq:ci_strat}
    \mathcal{C}^{\mathrm{SPP}}_{\alpha,j} := \left(\hat{\theta}^{\mathrm{SPP}}_{\hat{\lambda},j} \pm z_{1-\frac{\alpha}{2}} \sqrt{n^{-1}\widehat{\Sigma}^{\hat{\lambda}}_{w,jj}}\right),
\end{equation}
where $z_{\beta}$ denotes the $\beta$-quantile of the standard normal distribution. Then for any $j \in [d]$,
    \begin{equation}
    \label{eq:validity}
        \lim_{n, N \rightarrow \infty} \mathbb{P}\left(\theta^*_j \in \mathcal{C}_{\alpha, j}^\mathrm{SPP}\right) \geq 1 - \alpha.
    \end{equation}
\end{corollary}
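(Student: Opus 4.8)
The plan is to treat Corollary~\ref{cor:stratppi_ci} as a routine plug-in consequence of Theorem~\ref{thm:weighted_m_estimator}, in exact parallel with how Corollary~\ref{cor:ppi_ci} follows from the \texttt{PPI++} central limit theorem. Theorem~\ref{thm:weighted_m_estimator} already supplies $\sqrt{n}(\hat{\theta}_{\hat{\lambda}}^{\mathrm{SPP}} - \theta^*) \overset{d}{\rightarrow} \mathcal{N}(0,\Sigma_w^\lambda)$, so the only missing ingredient is consistency of the plug-in estimator $\widehat{\Sigma}_w^{\hat{\lambda}}$ for $\Sigma_w^\lambda$; the coverage statement \eqref{eq:validity} then follows by Slutsky's theorem and a rearrangement. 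I would organize the argument into two parts: (i) consistency of the variance estimator, and (ii) the Slutsky/rearrangement step. (A preliminary bookkeeping step is to match the quantity actually computed in Algorithm~\ref{alg:stratppi}, which folds the factors $1/N_k$ and $1/n_k$ directly into $\widehat{\Sigma}_w^{\hat{\lambda}}$, against the corollary's normalization $n^{-1}\widehat{\Sigma}_{w,jj}^{\hat\lambda}$; since $\frac{n}{N}\to r$, $\frac{N_k}{N}\to\tilde\rho_k$ and $\frac{n_k}{n}\to\rho_k$, we have $\tfrac{1}{N_k}\approx \tfrac{r}{\tilde\rho_k n}$ and $\tfrac{1}{n_k}\approx \tfrac{1}{\rho_k n}$, so the two coincide asymptotically.)

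For (i), recall $\Sigma_w^\lambda = A_w^{-1} B_w^\lambda A_w^{-1}$ with $A_w = \sum_k w_k H_{k,\theta^*}$ and $B_w^\lambda = \sum_k w_k^2\big(\tfrac{r}{\tilde\rho_k} V_{k,f,\theta^*}^{\lambda_k} + \tfrac{1}{\rho_k} V_{k,\Delta,\theta^*}^{\lambda_k}\big)$. Since there are only finitely many strata, which are mutually independent with i.i.d.\ sampling inside each, it suffices to establish consistency of each stratum-level building block and then invoke the continuous mapping theorem: matrix inversion is continuous at the invertible $A_w$ (invertibility being guaranteed by the positive-definiteness of the Hessians in Definition~\ref{def:regularity}), and matrix products and finite weighted sums are continuous as well. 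Concretely: (a) $\hat{\theta}_{\hat{\lambda}}^{\mathrm{SPP}} \overset{p}{\rightarrow} \theta^*$ from Theorem~\ref{thm:weighted_m_estimator} (or from consistency of the weighted M-estimator directly); (b) the within-stratum Hessian average $\frac{1}{n_k}\sum_i \nabla^2\ell_{\hat{\theta}_{\hat\lambda}^{\mathrm{SPP}}}(X_{ik},Y_{ik})$ converges to $H_{k,\theta^*}$ by a uniform law of large numbers over a neighborhood of $\theta^*$ combined with continuity of $\theta\mapsto\nabla^2\ell_\theta$ and the domination conditions of Definition~\ref{def:regularity}, hence $\hat{A}_w \overset{p}{\rightarrow} A_w$ after weighting; (c) the stratum sample covariances $\widehat{V}_f$ and $\widehat{V}_\Delta$ converge to $V_{k,f,\theta^*}^{\lambda_k}$ and $V_{k,\Delta,\theta^*}^{\lambda_k}$ by the same uniform-LLN argument applied to $\nabla\ell_\theta$ and its products, now additionally using $\hat{\lambda}_k\overset{p}{\rightarrow}\lambda_k$. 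Combining (a)--(c) through the continuous mapping theorem yields $\widehat{\Sigma}_{w,jj}^{\hat\lambda}\overset{p}{\rightarrow}\Sigma_{w,jj}^\lambda$ for each coordinate $j$.

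For (ii), fix $j\in[d]$. If $\Sigma_{w,jj}^\lambda > 0$, then Theorem~\ref{thm:weighted_m_estimator}, part (i), and Slutsky's theorem give $\sqrt{n}\,(\hat{\theta}_{\hat\lambda,j}^{\mathrm{SPP}} - \theta_j^*)/\sqrt{\widehat{\Sigma}_{w,jj}^{\hat\lambda}} \overset{d}{\rightarrow}\mathcal{N}(0,1)$, so $\mathbb{P}(\theta_j^*\in\mathcal{C}_{\alpha,j}^{\mathrm{SPP}}) = \mathbb{P}\big(|\sqrt{n}(\hat{\theta}_{\hat\lambda,j}^{\mathrm{SPP}} - \theta_j^*)| \le z_{1-\alpha/2}\sqrt{\widehat{\Sigma}_{w,jj}^{\hat\lambda}}\big)\to 1-\alpha$; if $\Sigma_{w,jj}^\lambda = 0$ the interval collapses to a point but $\hat{\theta}_{\hat\lambda,j}^{\mathrm{SPP}}\overset{p}{\rightarrow}\theta_j^*$ still forces the coverage probability to $1$, so in every case the limit is $\ge 1-\alpha$, which is exactly \eqref{eq:validity}.

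I expect the only genuine technical obstacle to lie in steps (b)--(c): justifying that evaluating the empirical Hessians and gradient covariances at the \emph{data-dependent} point $\hat{\theta}_{\hat{\lambda}}^{\mathrm{SPP}}$ (and with data-dependent tuning parameters $\hat{\lambda}_k$) does not disturb the limits. This is the familiar stochastic-equicontinuity / uniform-LLN hurdle for plug-in M-estimation covariance estimators, and it is precisely where the regularity conditions of Definition~\ref{def:regularity} (continuity and integrable envelopes for $\nabla\ell_\theta$ and $\nabla^2\ell_\theta$ near $\theta^*$) are used. The stratification itself contributes essentially no extra difficulty: with finitely many mutually independent strata and i.i.d.\ sampling within each, the whole argument reduces to $K$ copies of the unstratified \texttt{PPI++} reasoning glued together by the continuous mapping theorem.
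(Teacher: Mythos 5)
Your proposal is correct and follows essentially the same route as the paper: the paper likewise establishes consistency of the stratum-level plug-in estimates (citing Lemma 4.3 of Newey--McFadden, which plays the role of your uniform-LLN step), concludes $\widehat{\Sigma}_w^{\hat\lambda}\overset{p}{\rightarrow}\Sigma_w^\lambda$ by combining the finitely many strata, and then reads off coverage from the asymptotic normality in Theorem~\ref{thm:weighted_m_estimator}. Your treatment is in fact slightly more careful than the paper's, which omits the explicit Slutsky step, the degenerate case $\Sigma_{w,jj}^\lambda=0$, and the normalization bookkeeping between Algorithm~\ref{alg:stratppi} and the $n^{-1}\widehat{\Sigma}_{w,jj}^{\hat\lambda}$ scaling.
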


The form of the stratified prediction-powered confidence interval is similar to that of \texttt{PPI++}, except that it is based off of the weighted  stratum-conditional covariance matrices. The effect of this change, however, is significant. In fact, in the case of  mean estimation, we show that the asymptotic variance of \texttt{StratPPI} is at most that of \texttt{PPI++}  (even without any additional  tuning of $\lambda_k$ and $\rho_k$).\looseness=-1 

\begin{proposition}
\label{prop:smaller_variance}
    Let $\lambda_k \in \mathbb{R}$ be any constant for all strata, and fix $\rho_k$ and $\tilde{\rho}_k$ to their natural rates $w_k$. Then for  $\ell_\theta(x, y) = \frac{1}{2}\Vert y - \theta\Vert^2$ and any stratification $(\mathcal{A}_1, \ldots, \mathcal{A}_K)$,  we have $\mathrm{Tr}(\Sigma_w^\lambda) \leq \mathrm{Tr}(\Sigma^\lambda)$, where $\Sigma^\lambda$ is the asymptotic covariance matrix of \texttt{PPI++}. Furthermore, we have that equality holds if and only if both $\mathbb{E}[Y \mid X \in \mathcal{A}_k]$ and $\mathbb{E}[f(X) \mid X \in \mathcal{A}_k]$ are the same for all  strata.\looseness=-1
\end{proposition}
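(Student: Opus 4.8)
The strategy is to specialize the matrices in Theorem~\ref{thm:weighted_m_estimator} and in the \texttt{PPI++} theorem to the squared loss, reduce the trace comparison to a scalar statement about conditional versus marginal covariances, and then close it with the law of total covariance.

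First I would plug in $\ell_\theta(x,y) = \frac{1}{2}\Vert y - \theta\Vert^2$, for which $\nabla\ell_\theta(x,y) = \theta - y$ and $\nabla^2\ell_\theta(x,y) = I_d$. Hence $H_{\theta^*} = H_{k,\theta^*} = I_d$ and, since the $w_k$ sum to one, $A_w = \sum_k w_k H_{k,\theta^*} = I_d$; so $\Sigma^\lambda = r V_{f,\theta^*}^\lambda + V_{\Delta,\theta^*}^\lambda$, while fixing $\rho_k = \tilde\rho_k = w_k$ gives $\Sigma_w^\lambda = B_w^\lambda = \sum_{k=1}^K w_k\big( r V^{\lambda}_{k,f,\theta^*} + V^{\lambda}_{k,\Delta,\theta^*}\big)$. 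Using $\nabla\ell_{\theta^*}(X,Y) - \lambda\nabla\ell_{\theta^*}(X,f(X)) = (1-\lambda)\theta^* - Y + \lambda f(X)$, the rectifier covariances simplify to $V_{\Delta,\theta^*}^\lambda = \mathrm{Cov}(Y - \lambda f(X))$ and $V^{\lambda}_{k,\Delta,\theta^*} = \mathrm{Cov}(Y - \lambda f(X)\mid X\in\mathcal{A}_k)$, and likewise $V_{f,\theta^*}^\lambda = \lambda^2\mathrm{Cov}(f(X))$ and $V^\lambda_{k,f,\theta^*} = \lambda^2\mathrm{Cov}(f(X)\mid X\in\mathcal{A}_k)$ (the unlabeled draws being distributed as $X$ both marginally and conditionally on each stratum). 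Taking traces thus reduces the claim to comparing $\mathrm{Tr}(\mathrm{Cov}(Z))$ with $\sum_k w_k\,\mathrm{Tr}(\mathrm{Cov}(Z\mid X\in\mathcal{A}_k))$ for the two choices $Z \in \{f(X),\ Y - \lambda f(X)\}$.

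Next I would invoke the law of total covariance. Let $S\in[K]$ be the stratum index of $X$, so $\mathbb{P}(S=k)=w_k$ and $\mathbb{E}[g(X,Y)\mid S=k]=\mathbb{E}[g(X,Y)\mid X\in\mathcal{A}_k]$. For any square-integrable $Z=g(X,Y)$ we have $\mathrm{Cov}(Z) = \mathbb{E}[\mathrm{Cov}(Z\mid S)] + \mathrm{Cov}(\mathbb{E}[Z\mid S]) = \sum_{k=1}^K w_k\,\mathrm{Cov}(Z\mid X\in\mathcal{A}_k) + \mathrm{Cov}(\mathbb{E}[Z\mid S])$. Applying this to $Z=f(X)$ and to $Z=Y-\lambda f(X)$, subtracting, and taking traces gives
\[
\mathrm{Tr}(\Sigma^\lambda) - \mathrm{Tr}(\Sigma_w^\lambda) \;=\; r\lambda^2\,\mathrm{Tr}\!\big(\mathrm{Cov}(\mathbb{E}[f(X)\mid S])\big) \;+\; \mathrm{Tr}\!\big(\mathrm{Cov}(\mathbb{E}[Y - \lambda f(X)\mid S])\big).
\]

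Both terms on the right are traces of positive semidefinite (between-stratum) covariance matrices, hence nonnegative, which yields $\mathrm{Tr}(\Sigma_w^\lambda)\le\mathrm{Tr}(\Sigma^\lambda)$. For the equality characterization I would use that a PSD matrix has zero trace iff it is the zero matrix: $\mathrm{Cov}(\mathbb{E}[f(X)\mid S]) = 0$ exactly when $k\mapsto\mathbb{E}[f(X)\mid X\in\mathcal{A}_k]$ is constant, and $\mathrm{Cov}(\mathbb{E}[Y-\lambda f(X)\mid S]) = 0$ exactly when $k\mapsto\mathbb{E}[Y\mid X\in\mathcal{A}_k] - \lambda\,\mathbb{E}[f(X)\mid X\in\mathcal{A}_k]$ is constant; combined, these hold iff both $\mathbb{E}[Y\mid X\in\mathcal{A}_k]$ and $\mathbb{E}[f(X)\mid X\in\mathcal{A}_k]$ are independent of $k$, as stated. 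The only real care needed is the bookkeeping in the first step (matching the $V$-definitions and the $r/\tilde\rho_k$ and $1/\rho_k$ scalings) and the zero-trace argument in the equality case; the inequality itself is an immediate consequence of the law of total covariance, so there is no substantive obstacle beyond these routine checks.
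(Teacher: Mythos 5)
Your proposal is correct and follows essentially the same route as the paper's proof: both reduce to the identity Hessian, substitute the natural rates, and apply the law of total variance/covariance to identify the gap $\mathrm{Tr}(\Sigma^\lambda)-\mathrm{Tr}(\Sigma_w^\lambda)$ with nonnegative between-stratum variance terms, with equality exactly when the stratum-conditional means are constant. The only cosmetic difference is that you work at the matrix level with the law of total covariance and PSD traces, while the paper applies the law of total variance coordinate-wise to the diagonal entries.
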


More generally, although our results will hold for arbitrary stratifications, it is best if they are heterogeneous, and chosen such that the individual stratum-conditional variances are minimized. For example, if an autorater systematically \emph{over-estimates} the model's performance on one subdomain, but systematically \emph{under-estimates} the model's performance on another, then splitting these subdomains into different strata  can result in a lower variance $L_{\lambda}^\mathrm{SPP}$. Similarly, if an autorater has much higher noise on some subdomains than others, it can be beneficial to stratify on those subdomains---and then  either lower $\lambda_k$, allocate a higher proportion of samples $n_k / n$ and $N_k / N$, or both for the noisier stratas. %We illustrate  different  scenarios in \S\ref{sec:synthetic}. 
In  \S\ref{sec:experiments} we empirically demonstrate that the stratified estimator can indeed lead to considerably tighter confidence intervals in practice, especially with additional tuning of $\lambda_k$ and $\rho_k$, as discussed next.\looseness=-1

\subsection{Optimal weighting
 of the autorater predictions}
 \label{sec:powertune}
In \cite{angelopoulos2023ppi} it was shown that the optimal value of $\lambda$ for  \texttt{PPI++} (i.e., the one minimizing the variances of the estimator and the corresponding confidence interval) could be found in closed form. We now present a simple extension of this result to the stratified case. For notational convenience, we use the shorthand $\nabla \ell_{k,\theta} := \nabla\ell_\theta(X, Y) ~\vert~ X \in \mathcal{A}_k$ and $\nabla \ell_{k,\theta}^f := \nabla \ell_\theta(X, f(X)) ~\vert~ X \in \mathcal{A}_k$.
\begin{proposition}
\label{prop:lambda}
Assume that $\tilde{\rho}_k$ and $\rho_k$ are fixed. Then  the tuning parameters $(\lambda_1^*, \ldots, \lambda_k^*)$, where
\begin{equation}
\label{eq:lambda_tuning}
    \lambda^*_k = \frac{\mathrm{Tr}\left(A_w^{-1}(\mathrm{Cov}(\nabla \ell_{k,\theta^*}, \nabla \ell^f_{k,\theta^*}) + \mathrm{Cov}(\nabla \ell_{k,\theta^*}^f, \nabla \ell_{k,\theta^*}))A_w^{-1}\right)}{2\left(1 + \frac{n_k}{N_k}\right)\mathrm{Tr}\left(A_w^{-1}\mathrm{Cov}(\nabla \ell^f_{k,\theta^*})A_w^{-1}\right)},
\end{equation}
minimize the cumulative asymptotic variance,  $\mathrm{Tr}(\Sigma_w^\lambda)$.%\amirg{can we say that this is $V[\hat{\theta}_{SPP}]$?, and same for Proposition 3. Also, not clear what cumulative means. If we can say this minimizes the variance of the SPP estimator, that would be simplest.}
%\af{$\hat{\theta}^{SPP}$ is assumed to be multivariate, so $\Sigma$ is a covariance matrix. The trace of $\Sigma$ determines the cumulative size of the multidimensional confidence interval, as its defined.}
\end{proposition}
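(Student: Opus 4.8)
The plan is to decouple the optimization across strata and solve each resulting one-dimensional problem by elementary convex calculus. Since $A_w = \sum_{k=1}^K w_k H_{k,\theta^*}$ does not depend on $\lambda$, linearity and cyclicity of the trace give
\begin{equation}
\mathrm{Tr}(\Sigma_w^\lambda) = \sum_{k=1}^K w_k^2\,\mathrm{Tr}\!\left(A_w^{-1}\Big(\tfrac{r}{\tilde{\rho}_k} V_{k,f,\theta^*}^{\lambda_k} + \tfrac{1}{\rho_k} V_{k,\Delta,\theta^*}^{\lambda_k}\Big)A_w^{-1}\right) =: \sum_{k=1}^K g_k(\lambda_k),
\end{equation}
and since the $k$-th summand depends only on the scalar $\lambda_k$, it suffices to minimize each $g_k$ over $\mathbb{R}$ independently.

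Next I would expand $g_k$ as a quadratic in $\lambda_k$. Recalling that within stratum $k$ the labeled and unlabeled samples share the conditional law $\mathbb{P}(\cdot \mid X \in \mathcal{A}_k)$, we have $V_{k,f,\theta^*}^{\lambda_k} = \lambda_k^2\,\mathrm{Cov}(\nabla\ell^f_{k,\theta^*})$, while bilinearity of the covariance gives
\begin{equation}
V_{k,\Delta,\theta^*}^{\lambda_k} = \mathrm{Cov}(\nabla\ell_{k,\theta^*}) - \lambda_k\left(\mathrm{Cov}(\nabla\ell_{k,\theta^*},\nabla\ell^f_{k,\theta^*}) + \mathrm{Cov}(\nabla\ell^f_{k,\theta^*},\nabla\ell_{k,\theta^*})\right) + \lambda_k^2\,\mathrm{Cov}(\nabla\ell^f_{k,\theta^*}).
\end{equation}
Writing $a_k := \mathrm{Tr}(A_w^{-1}\mathrm{Cov}(\nabla\ell^f_{k,\theta^*})A_w^{-1})$ for the quadratic coefficient and $b_k := \mathrm{Tr}(A_w^{-1}(\mathrm{Cov}(\nabla\ell_{k,\theta^*},\nabla\ell^f_{k,\theta^*}) + \mathrm{Cov}(\nabla\ell^f_{k,\theta^*},\nabla\ell_{k,\theta^*}))A_w^{-1})$ and dropping the $\lambda_k$-free terms, one collects powers of $\lambda_k$ into
\begin{equation}
g_k(\lambda_k) = w_k^2\left[\left(\tfrac{r}{\tilde{\rho}_k} + \tfrac{1}{\rho_k}\right)a_k\,\lambda_k^2 - \tfrac{b_k}{\rho_k}\,\lambda_k\right] + \mathrm{const}.
\end{equation}

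I would then note that $a_k = \mathrm{Tr}(\mathrm{Cov}(\nabla\ell^f_{k,\theta^*})A_w^{-2}) \ge 0$, as the trace of a product of positive semidefinite matrices ($A_w$ is symmetric and invertible under the regularity conditions of Definition~\ref{def:regularity}), with $a_k > 0$ unless $\nabla\ell_{\theta^*}(X,f(X))$ is almost surely constant on $\mathcal{A}_k$. In that nondegenerate case the positive prefactor $w_k^2$ and positive leading coefficient make $g_k$ a strictly convex quadratic, whose unique minimizer solves $g_k'(\lambda_k) = 0$, giving $\lambda_k^* = \frac{b_k/\rho_k}{2((r/\tilde{\rho}_k) + (1/\rho_k))\,a_k}$. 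Multiplying numerator and denominator by $\rho_k$ and using $r\rho_k/\tilde{\rho}_k = \lim_{n,N\to\infty} n_k/N_k$ (which follows from $r = \lim n/N$, $\rho_k = \lim n_k/n$, $\tilde{\rho}_k = \lim N_k/N$) rewrites the denominator as $2(1 + n_k/N_k)\,a_k$, which is exactly Equation~\eqref{eq:lambda_tuning}; the finite-sample ratio $n_k/N_k$ appears there because that is the plug-in quantity used in Algorithm~\ref{alg:stratppi}. The degenerate stratum $a_k = 0$, where $g_k$ is constant and any $\lambda_k$ (in particular $\lambda_k^* = 0$) is optimal, should be flagged separately. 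I do not expect a genuine obstacle here: the minimization is a one-line convex-quadratic argument, and the only real care needed is in the covariance bookkeeping and in correctly converting the limiting mixing-rate constants $r,\rho_k,\tilde{\rho}_k$ into the ratio $n_k/N_k$.
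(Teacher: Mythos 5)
Your proof is correct and follows essentially the same route as the paper: both arguments use linearity of the trace to decouple $\mathrm{Tr}(\Sigma_w^\lambda)$ into a sum of independent per-stratum terms, each a function of the single scalar $\lambda_k$ with effective labeled-to-unlabeled ratio $r\rho_k/\tilde{\rho}_k = n_k/N_k$. The only difference is that the paper concludes by citing Proposition 2 of \cite{angelopoulos2023ppi} for the per-stratum minimizer, whereas you carry out the quadratic expansion and first-order condition explicitly (and additionally flag the degenerate case where $\mathrm{Cov}(\nabla\ell^f_{k,\theta^*})$ vanishes), which makes the argument self-contained but is otherwise the same calculation.
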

Note that $\mathrm{Tr}(\Sigma_w^\lambda)$ is proportional to the total size of the confidence interval $\mathcal{C}^{\mathrm{SPP}}_{\alpha}$ in \eqref{eq:ci_strat}.
Furthermore, as in \cite{angelopoulos2023ppi}, we can use plug-in estimates for the terms in \eqref{eq:lambda_tuning} to compute a $\hat{\lambda}_k$, where $\hat{\lambda}_k \overset{p}{\rightarrow} \lambda^*_k$. From \eqref{eq:lambda_tuning}, we can see that $\lambda_k^*$ is closely related to the correlation coefficient of the (curvature-scaled) gradients for minimizing the  loss on an autorater label versus a true label. Intuitively, the more correlated these terms are, the more we can rely on the autorater labels for finding the true minimizer of $\mathbb{E}[\ell_\theta(X, Y)]$. For $1$-$d$ mean estimation in particular, we can see that $\lambda_k^*$ takes on a simple form:

\begin{example}[$\lambda_k^*$ for mean estimation] Consider the $1$-$d$ mean loss: $\ell_\theta(x, y) = \frac{1}{2}(y - \theta)^2$. Then:
\begin{equation}
    \lambda^*_k = \frac{\mathrm{Cov}(Y, f(X))}{(1 + \frac{n_k}{N_k})\mathrm{Var}(f(X))} \approx \frac{\mathrm{Cov}(Y, f(X))}{\mathrm{Var}(f(X))}\quad \text{for large $N_k$},
\end{equation}
which  is equivalent to the optimal linear regression coefficient, $\min_{\lambda_k} \mathbb{E}[\vert Y -  \lambda_k f(X)\vert^2 \mid X \in \mathcal{A}_k]$.
\end{example}
\subsection{Optimal allocation of the sampling budget}
\label{sec:optimal_allocation}
Our stratification approach has an additional hyperparameter $\rho$ that can also be tuned to reduce variance. Recall that $\rho_k$ determines the ratio between the labeled data size $n_k$ for stratum  $k$ and the overall data size $n$ (i.e., $\sum_k n_k = n$).  $\tilde{\rho}_k$ is similarly defined for the unlabeled data. Any strictly positive values of $\rho_k$ and $\tilde{\rho}_k$ are valid to be used, though not all values will improve performance. It turns out that the optimal $\rho_k$ values can be exactly calculated, as the following proposition shows. %However, these depend on $V^{\lambda}_{k,\Delta,\theta^*}$ which is unknown. We show how this can be addressed in \amirg{ref}.
\begin{proposition}
\label{prop:optimal_rho}
Assume that $\lambda_k$ is fixed. Then the sampling rates $(\rho^*_1, \ldots, \rho^*_k)$ and $(\tilde{\rho}^*_1, \ldots, \tilde{\rho}^*_k)$, where\looseness=-1
\begin{align}
    \rho^*_k = \frac{w_k \sqrt{\mathrm{Tr}(A_w^{-1} V_{k,\Delta,\theta^*}^{\lambda_k}A_w^{-1})}}{\sum_{k'=1}^{K} w_{k'} \sqrt{\mathrm{Tr}(A_w^{-1}V_{k',\Delta,\theta^*}^{\lambda_{k'}}A_{w}^{-1})}} ~~~ \text{and} ~~~  \tilde{\rho}^*_k = \frac{w_k \sqrt{\mathrm{Tr}(A_w^{-1} V_{k,f,\theta^*}^{\lambda_k}A_w^{-1})}}{\sum_{k'=1}^{K} w_{k'} \sqrt{\mathrm{Tr}(A_w^{-1}V_{k',f,\theta^*}^{\lambda_{k'}}A_{w}^{-1})}}
\end{align}
minimize the cumulative asymptotic variance,  $\mathrm{Tr}(\Sigma_w^\lambda)$.
\end{proposition}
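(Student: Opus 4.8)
The plan is to reduce the minimization of $\mathrm{Tr}(\Sigma_w^\lambda)$ over the allocation rates to two decoupled per-stratum optimization problems, each of which is the classical Neyman-allocation calculation solved by Cauchy--Schwarz (equivalently, a Lagrange multiplier). First I would expand $\mathrm{Tr}(\Sigma_w^\lambda) = \mathrm{Tr}(A_w^{-1} B_w^\lambda A_w^{-1})$ using linearity of the trace and the form of $B_w^\lambda$ from Theorem~\ref{thm:weighted_m_estimator}, obtaining a sum over $k$ of $w_k^2\big(\tfrac{r}{\tilde\rho_k}\mathrm{Tr}(A_w^{-1}V_{k,f,\theta^*}^{\lambda_k}A_w^{-1}) + \tfrac{1}{\rho_k}\mathrm{Tr}(A_w^{-1}V_{k,\Delta,\theta^*}^{\lambda_k}A_w^{-1})\big)$. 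The key observation is that $A_w = \sum_k w_k H_{k,\theta^*}$, the conditional covariances $V_{k,\cdot,\theta^*}^{\lambda_k}$, the constant $r$, and (by hypothesis) the $\lambda_k$'s are all independent of $(\rho,\tilde\rho)$; so writing $c_k := w_k^2\,\mathrm{Tr}(A_w^{-1}V_{k,\Delta,\theta^*}^{\lambda_k}A_w^{-1})$ and $d_k := w_k^2\,\mathrm{Tr}(A_w^{-1}V_{k,f,\theta^*}^{\lambda_k}A_w^{-1})$, the objective is exactly $\sum_k c_k/\rho_k + r\sum_k d_k/\tilde\rho_k$ with $c_k,d_k\ge 0$ fixed constants. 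Nonnegativity holds because $A_w$ is symmetric (a weighted sum of Hessians), so $A_w^{-1}VA_w^{-1}$ is a congruence of a PSD covariance matrix, hence PSD with nonnegative trace. Since the simplex constraints $\sum_k\rho_k=1$ and $\sum_k\tilde\rho_k=1$ are separate, the problem splits into two independent instances of: minimize $\sum_k c_k/\rho_k$ subject to $\rho_k>0$ and $\sum_k\rho_k=1$.

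I would then solve this scalar program by Cauchy--Schwarz: $\big(\sum_k\sqrt{c_k}\big)^2 = \big(\sum_k \sqrt{c_k/\rho_k}\,\sqrt{\rho_k}\big)^2 \le \big(\sum_k c_k/\rho_k\big)\big(\sum_k\rho_k\big) = \sum_k c_k/\rho_k$, with equality iff $\rho_k\propto\sqrt{c_k}$; combined with $\sum_k\rho_k=1$ this forces $\rho_k^*=\sqrt{c_k}/\sum_{k'}\sqrt{c_{k'}}$. (Equivalently, the Lagrange stationarity condition $-c_k/\rho_k^2+\nu=0$ gives $\rho_k\propto\sqrt{c_k}$, and this is the global minimum by convexity of $\rho_k\mapsto c_k/\rho_k$ on $\rho_k>0$.) Since $w_k>0$ we have $\sqrt{c_k}=w_k\sqrt{\mathrm{Tr}(A_w^{-1}V_{k,\Delta,\theta^*}^{\lambda_k}A_w^{-1})}$, which recovers the stated $\rho_k^*$; the identical computation with $d_k$ in place of $c_k$, where the factor $r$ cancels in the ratio, yields $\tilde\rho_k^*$.

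I do not anticipate a real obstacle here: the heart of the argument is the textbook Neyman-allocation computation. The only points that need a line of care are (i) verifying that the objective genuinely decouples, i.e., that $A_w$ and the conditional covariances do not depend on the allocation rates; (ii) the nonnegativity (hence well-definedness of the square roots) of the traces $\mathrm{Tr}(A_w^{-1}V_{k,\cdot,\theta^*}^{\lambda_k}A_w^{-1})$; and (iii) a mild genericity caveat that if some $c_k$ or $d_k$ vanishes, the stated optimum is attained only in the limit $\rho_k\to 0^+$, so one either assumes all stratum-conditional variances are positive or restricts attention to those strata.
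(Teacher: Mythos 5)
Your proposal is correct and follows essentially the same route as the paper's proof: both expand $\mathrm{Tr}(\Sigma_w^\lambda)$ into two decoupled sums of the form $\sum_k w_k^2 z_k/\rho_k$ and solve the resulting Neyman-allocation problem, the only cosmetic difference being that you close with Cauchy--Schwarz while the paper runs the Lagrangian/KKT computation with slack variables (your version arguably gives global optimality a bit more directly). Your side remarks on nonnegativity of the traces and on degenerate strata with vanishing variance are sensible points of care that the paper handles only implicitly.
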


Although the  solution for $\rho_k^*$ is informative, it is not necessarily practical, as it depends on knowing $A_w^{-1}V^{\lambda_k}_{k,\Delta,\theta^*}A_{w}^{-1}$; this in turn depends on  $\theta^*$ and $\mathbb{P}(X,Y)$, which are both unknown.\footnote{Similarly,  the optimal solution to $\tilde{\rho}_k$ also depends on the unknown $\theta^*$ in general, though this term is less important to optimize if we assume $N$ to be large. In practice, we always keep $\tilde{\rho}_k$ fixed to the natural rate, $w_k$.} In the special case of mean estimation, however, it turns out there is no dependence on $\theta^*$. To address the remaining dependence on $\mathbb{P}(X,Y)$, we propose to use  autorater confidence scores, assuming they are available. Specifically, assume $\mathcal{Y}$ is discrete, and let $c(y \mid x)$ be the confidence of the autorater in label $y$ given input $x$, where $c(y \mid x)$ approximates $\mathbb{P}(Y = y \mid X = x)$. This will result in the estimate for $\mathrm{Tr}(A_w^{-1}V^{\lambda_k}_{k,\Delta,\theta^*}A_{w}^{-1})$ below, which can then be plugged into the expression for $\rho_k^*$ in Proposition \ref{prop:optimal_rho}.\looseness=-1
 
\begin{example}[$\rho_k^*$ for mean estimation]
\label{ex:mean}
Consider the $1$-$d$ mean loss: $\ell_\theta(x, y) = \frac{1}{2}(y - \theta)^2$. Then:
\begin{equation}
\label{eq:heuristic} 
    \mathrm{Tr}(A_w^{-1}V^{\lambda_k}_{k,\Delta,\theta^*}A_{w}^{-1}) =\mathrm{Var}(Y - \lambda_k f(X) \mid X \in \mathcal{A}_k).
\end{equation}
\eqref{eq:heuristic}  can then be estimated using the observed, but unlabeled, samples scored by the autorater for each stratum $k$, $\tilde{X}_{ik} = \tilde{x}_{ik}$, $i = 1, \ldots, N_k$, as $\mathrm{Var}(Y - \lambda_k f({X}) \mid {X} \in \mathcal{A}_k) \approx \hat{\sigma}_{k}^2$, where
\begin{align}
 &\hat{\sigma}_{k}^2 =  \frac{1}{N_k}\sum_{i=1}^{N_k}\sum_{y \in \mathcal{Y}} c(y \mid \tilde{x}_{ik})\left(y - \lambda_k f(\tilde{x}_{ik}) - \hat{\mu}_{k}\right)^2 \quad \text{and} \label{eq:sigma_approx} \\
 &\hat{\mu}_{k} = \frac{1}{N_k}\sum_{i=1}^{N_k}\sum_{y \in \mathcal{Y}} c(y \mid \tilde{x}_{ik})\left(y - \lambda_k f(\tilde{x}_{ik})\right).
 \label{eq:mu_approx}
\end{align}
% Importantly: using $\rho$ that is based on plug estimates from the sample, preserves the validity of the confidence guarantees in Equation \ref{eq:ci_strat}, because of the argument in CITE PPI++.}
% \af{Well any $\rho$ satisfies this, optimal or not?}
\vspace{-5pt}
\end{example}
For $d$-dimensional data, the result is similar, but with a sum of $d$ variances (one for each dimension).
We also provide a simplified expression for $\hat{\sigma}^2_k$ in Appendix~\ref{app:rho} when  $f(\tilde{x}) := \sum_{y \in \mathcal{Y}} c(y \mid \tilde{x}) \cdot y$. 
Importantly, as we are free to use any $\rho_k > 0$, using this  estimate still preserves  the asymptotic coverage guarantees in \eqref{eq:ci_strat}, regardless of if the confidence estimate $c(y \mid x)$ is  calibrated or not. Empirically,  we show that using this heuristic can indeed lead to substantial improvements. %\looseness=-1 %in performance when compared to simple data-proportional allocation (i.e., where $\rho_k = w_k$).
% This is can then easily be estimated if we assume also assume access to autorater confidence scores about the possible values of $Y$.
% %we can estimate it  by once again using the autorater if we also assume access to confidence scores about $Y$.
% %
% For simplicity, suppose that $\mathcal{Y}$ is discrete, and  for each $y \in \mathcal{Y}$, let $c(x, y) \approx \mathbb{P}(Y = y \mid X = x)$ denote the confidence of the autorater in label $y$ given $x$. Then for each dimension $j \in [d]$, define\looseness=-1
% \af{TODO: finish and write better notation}
% \begin{equation}
%     \hat{v}_j = \frac{1}{N_k}\sum_{i=1}^{N_k}\hat{\sigma}^2_j(x)
% \end{equation}
% \begin{align}
%     \hat{\mu}_j(x) = \sum_{y \in \mathcal{Y}} c(x, y) (y_j - \lambda_k f(x)_j), \quad \hat{\sigma}^2_j(x) = \sum_{y \in \mathcal{Y}} c(x, y) (y_j - \lambda_k f(x)_j - \hat{\mu}_j)^2. 
% \end{align}

% \begin{equation}
%     \mathrm{Tr}\left(\frac{1}{N_{k}}\sum_{i=1}^{N_k} \sum_{y \in \mathcal{Y}} c(X_{ik}, y) (y - \lambda_k f(X_{ik}))^2 - \left(\frac{1}{N_{k}}\sum_{i=1}^{N_k} \sum_{y \in \mathcal{Y}} c(X_{ik}, y) (y - \lambda_k f(X_{ik})\right)^2 \right)
% \end{equation}

\comment{
\begin{equation}
 \mathrm{Tr}(A_w^{-1}V_{k,\Delta,\theta^*}^{\lambda_k}A_w^{-1}) = \mathrm{Tr}V_{k,\Delta,\theta^*}^{\lambda_k}A_w^{-1})   
\end{equation}

and In the special case of the square loss for mean estimation, however, we derive

\footnote{Depending on $\ell_\theta$, the optimal allocation may also depend on $\theta^*$, which we also do not yet have a good estimate for beyond one that uses only  autorater data (the mean estimator, $\frac{1}{2}(y - \theta)^2$, is not affected, however).}\amirg{I didn't understand the footnote.}  \af{The term depends on computing the gradient at the optimal value $\theta^*$, which we also do not know in advance. Sometimes this doesn't make a difference though.} We can, however, derive and estimate an upper bound of its contribution to $\rho$, which we then use to compute an allocation that can perform well empirically.

\begin{proposition}
\label{prop:heuristic_variance}
Assume that $\nabla \ell_{\theta^*}$ is bounded everywhere.
Then, under the regularity conditions of Definition~\ref{def:regularity}, we have that for some constant $M$,\amirg{what is the i index on the RHS below?}
 \begin{align}
 \label{eq:upper_bound}
\mathrm{Tr}(A_w^{-1}V_{k,\Delta,\theta^*}^{\lambda_k}A_w^{-1}) &\leq M \cdot \mathrm{Tr}(\mathrm{Cov}\left(\nabla \ell_{\theta^*}(X, Y) \mid X \in \mathcal{A}_k\right)). 
\end{align}
\end{proposition}
Proposition~\ref{prop:heuristic_variance} suggests that an estimate of the conditional variance of $\nabla \ell_{\theta^*}(X, Y)$ can serve as a conservative estimate of the rectifier's contribution to the variance of $\hat{\theta}_\lambda^\mathrm{SPP}$. To estimate this, we can again use the autorater. Suppose that $\mathcal{Y}$ is discrete (such as in our QA setting), and that the autorater can also be used to provide a confidence score, $c(X, y) \approx \mathbb{P}(Y = y \mid X = x)$.\footnote{When confidence scores are unavailable, we can simply use the empirical variance of the autorater's predictions (assuming that the autorater is directly predicting $Y$, i.e., $f \colon \mathcal{X} \rightarrow \mathcal{Y}$, and not $\mathbb{E}[Y \mid X].$) } Using $c$, we can then estimate the right hand side of \eqref{eq:upper_bound} following the law of total variance, i.e., for each $k \in [K]$,\amirg{I'm not following how this can be calculated if you don't know $\theta^*$. I assume you also mean to base the estimate below on the unlabeled data, right? Maybe we should write this as a sum over that data to make things clear. Also, it's not clear currently from the results what sort of guarantees we are getting eventually. Are we allowed to use any estimate for $\rho$,  and the CI is still legit? I assume yes, but we need to say it somewhere.}
\af{Any $\rho$ is legit. You can just take an estimate of $\theta^*$ using autorater data only. But for losses like the mean you have $\nabla \ell_{\theta^*} = Y - \theta^*$. So the actual value of $\theta^*$ will have no effect.}
\begin{equation}
\begin{split}
    \mathrm{Var}(\nabla \ell_{\theta^*}(X, Y) \mid X \in \mathcal{A}_k) &= \mathbb{E}[\mathrm{Var}(\nabla \ell_{\theta^*}(X, Y) \mid X) \mid X \in \mathcal{A}_k] \\ &\quad+\mathrm{Var}(\mathbb{E}[\nabla \ell_{\theta^*}(X, Y) \mid X] \mid X \in \mathcal{A}_k), 
\end{split}
\end{equation}
where the inner conditional expectations/variances can be estimated  using $c$, and the outer expectations/variances can be estimated over each stratum's autolabeled data.
See Algorithm~\ref{alg:total_variance} for details.\looseness=-1
}

\comment{
\section{Methods}

\newcommand{\mean}[3]{\textrm{Mean}({#1}:{#2}\in{#3})}
\newcommand{\cprop}[3]{\textrm{Prop}({#1}|{#2}\in{#3})}
\newcommand{\prop}[2]{\textrm{Prop}({#1}\in{#2})}

\subsection{Notation and overview} \label{sec:diff-est}

\subsubsection{Notation} \label{sec:notation}

Following \cite{doi:10.1126/science.adi6000}, we assume 
a sample $S_n = (x_1,y_1), \ldots, (x_n,y_n)$ of $n$ examples drawn from some unknown static distribution, where $y_i$ is a real-valued target output;
a larger sample $\twid{S}_N = \twid{x}_1,\ldots,\twid{x}_N$ for which the target outputs are not available ($N>\!\!>n)$; 
and an \emph{auto-rater function} $f(x)$ which provides a ``good approximation'' of the target $y$ for $x$.  

For example, assume each $x$ is a question/answer pair where the answer is produced by an LLM-based QA system we wish to evaluate, $y$ is a 0/1 gold rating of correctness of the answer, $f(x)$ predicts the human rating on a question/answer pair $x$, and $\expect{y}$ is the accuracy of the QA system on the distribution from which questions are drawn.

Below we write sample means using this notation:
\[ \hat{\mu}^n_y = \mean{y}{y}{{S}_n} \equiv 
    \frac{1}{n} \sum_{(x_i,y_i)\in{}S_n} y_i
\]
with the understanding that this implies a sample standard deviation of 
$\hat{\sigma}_y$; a true (population) mean and standard deviation of $\mu_y$ and $\sigma_y$; a true parameter value $\theta_y^*$; and a posterior $p(\theta_y|{S}_n)$.  The superscript $n$ will be dropped where it is clear from context.  

We denote binomials, estimated from a sample $S=\{a_1,\ldots,a_n\}$, for $a_i\in\{0,1\}$,as
\[ \hat{p}_{A} = \prop{A}{S}
\]
where again $\hat{p}$ implies a corresponding true probability $p_A=\theta^*_{A}$, and a posterior $p(\theta_{A}|S)$.  Estimates for conditional probabilities $p(A=1|B=1)$ for a sample $S=\{(a_1,b_1),\ldots,(a_n,b_n)\}$
are written
\[ \hat{p}_{A|B} = \cprop{A}{B}{S}
\]
and estimates of $p(A=1|B=0)$ will be written $\hat{p}_{A|\neg B}$ for conciseness. 
When $A$ or $B$ is a multinomial then we use a similar notation, without abbreviating $P(A=1)$ to $P(A)$ or $P(A=0)$ to $P(\neg A)$.

All of these quantities are modeled as random variables that depend on data, and each of them has a prior and posterior, so we can meaningfully write things like $p(\hat{\mu}^n_y | D)$.  

\subsubsection{Overview: Designing a Bayesian PPI method}

We assume we are given is a statistic, the \emph{target estimand}, we want to measure in expectation.  The target estimand is denoted $e(S_n,\twid{S}_N)$. 

The first step is
to introduce a second statistic, the \emph{proxy estimand}, that has the same expectation as $e$ but lower variance. The equality of expectations is easy to verify for the cases considered here.  The proxy estimand is denoted $g(S_n,\twid{S}_N)$, and is a function of random variables.  In this paper, these random variables are all means or proportions, derived from the data $D$, with known posteriors, so in the general case $g$ can be written
\[
g(S_n,\twid{S}_N) = h(\theta_1, \ldots, \theta_k)
\]

Next, we choose a posterior $p(\theta_i|D)$ for each random variable in $g$.  The priors we use in this paper are weak, uniformed conjugate priors, as discussed above and detailed in Sec~\ref{sec:python}: a Gaussian or Student's $T$ distribution for means, a Beta for simple proportions (binomials), and a Dirichlet for multinomials.

Finally, we measure variance of $g$ and compute a confidence interval $\ell_g, u_g$ using Monte Carlo integration, as described in Section~\ref{sec:mci}.

\subsection{The Bayesian difference estimate}

For the difference estimate we begin with the target estimand
\[  e(S_n,\twid{S}_N, f) \equiv \mean{y}{(x,y)}{S_n}
\]
The proxy estimand is
\[  g(S_n,\twid{S}_N, f) \equiv  \hat{\mu}^N_{f(x)} + \hat{\mu}^m_{y-f(x)}
\]
where
\begin{eqnarray*}
    \mu^N_{f(x)} & \equiv & 
    \mean{f(x_j)}{x_j}{\twid{S}_N}\\
    \mu^n_{y-f(x)} & \equiv & 
    \mean{y - f(x_i)}{(x_i, y_i)}{{S}_n}
\end{eqnarray*}

It is straightforward to verify that 
$\expect{g}=\expect{e}$, and we use the usual priors.
The Bayesian difference estimate, as well as the other PPI methods described here, is summarized in Table~\ref{tab:methods}.

This variant of the difference estimate is presented for pedagogical purposes, not practical ones.  Experimentally, it is virtually identical in performance to the approach of \cite{doi:10.1126/science.adi6000}---although the supporting theory is different, experimentally it gives essentially the same confidence intervals.  Hence experiments below that use the difference estimate as a baseline generally use the method of \cite{doi:10.1126/science.adi6000}.

\subsection{Stratified estimates}

With the priors selected above, the Bayesian difference gives the same confidence intervals as the traditional method. 
Our first novel PPI method, stratified estimates, is based on the observation that, 
while the difference estimate works best when the variance of 
$\mean{y - f(x)}{(x,y)}{S_n}$ is small, this can be true in two cases: (1) if $f(x)\approx y$, but also when (2) $f(x) \approx y + b$, where $b$ is constant.  In other words, it is not necessary for the autorater to be unbiased, as long as its bias is \emph{consistent} across examples.

There are many cases, however, where autorater bias is not consistent.  Consider, for example, a task for which humans give a ``star rating'' which is an integer between 1 and 5, and an autorater is trained to predict that rating.  If the human scores are frequently extreme (i.e., 1 or 5, and rarely 2, 3 or 4), and the autorater is trained to minimize loss, it may well trend low on examples a human would rate 5 stars, and high on examples a human would rate as 1 star.  In this case you might see two regimes of autorater bias: perhaps when $f(x) > 2.5$, then $y \approx f(x) + 1$, and when $f(x) \leq 2.5$ then $y \approx f(x) - 1$.

One way of adjusting for this effect would be create a \emph{partition} function $\pi$, which maps $f(x)$ to a discrete set of intervals, and then construct a difference estimate over each interval.  In this example, we would define
\[
\pi(f(x)) = \left\{ \begin{array}{ll}
                     \textit{lo} & \mbox{if $f(x) > 2.5$} \\
                     \textit{hi} & \mbox{else}
                    \end{array}
            \right.
\]
Let us define 
\begin{eqnarray*}
S_n^\textit{lo} & = \{(x_i,y_i)\in S_n : \pi(f(x_i)=\textit{lo} \\   
S_n^\textit{hi} & = \{(x_i,y_i)\in S_n : \pi(f(x_i)=\textit{hi} \\   
\end{eqnarray*}
and define $\twid{S}_N^\textit{lo}$ and $\twid{S}_N^\textit{hi}$ similarly.  Consider the random variables
\begin{eqnarray*}
\hat{\mu}^{\textit{lo}}_{f(x)} & = & 
     \mean{f(x)}{x}{\twid{S}_N^\textit{lo}} \\
\hat{\mu}^{\textit{hi}}_{f(x)} & = & 
     \mean{f(x)}{x}{\twid{S}_N^\textit{hi}} \\
\hat{\mu}^{\textit{lo}}_{y-f(x)} & = & 
    \mean{y - f(x)}{y}{S_n^\textit{lo}} \\
\hat{\mu}^{\textit{hi}}_{y-f(x)} & = & 
    \mean{y - f(x)}{y}{S_n^\textit{hi}} \\
\hat{p_\textit{lo}} & = & 
     \prop{\pi(f(x))=\textit{lo}}{\twid{S}_N}
\end{eqnarray*}
The proxy estimate $g(S_n, \twid{S}_N)$ is
\[
( \hat{\mu}^\textit{lo}_{f(x)} + \hat{\mu}^\textit{lo}_{y-f(x)}) \cdot \hat{p}^\textit{lo} 
+ ( \hat{\mu}^\textit{hi}_{f(x)} + \hat{\mu}^\textit{hi}_{y-f(x)}) \cdot (1 - \hat{p}^\textit{lo})
\]
In general, if there are $K$ partitions, the model is a weighted sum of $K$ difference estimates, where the difference estimate for partition $k$ is constructed using the labeled and unlabeled data mapped to partition $k$,
and the weight of that estimate is the fraction of (unlabeled) data mapped to partition $k$.  It can easily be shown that the 
expectation of $g$ is still the population mean.   The general form of the stratified difference estimate is given in Table~\ref{tab:methods}.

This \emph{stratified difference estimate} essentially it combines PPI methods with stratified sampling \cite{Singh1996},
and like standard stratified sampling, it can be used with any partitioning scheme.  In the experiments below we explore two partitioners.  In the simple case, we divide $x\in\twid{S}_N$ into $K$ equal-population bins.  We also explore using the labeled data in $S_n$ to build a regression tree \cite{lewis2000introduction}, and taking the leaves of the tree as partitions: see~\ref{sec:reg-tree} for details.

\subsection{Chain rule estimates}

\subsubsection{A simple chain rule estimate}

We now consider extending the difference method to discrete autoraters---for instance, autoraters based on prompted LLMs.  For this new method we denote the human rating with the random variable $H$ and the autorater's rating as $A$, so
\begin{eqnarray*}
    S_n & = & \{ (a_1, h_1), .... (a_n, h_n) \} \\
    \twid{S}_N & = &  \{ a_1, .... h_N \}
\end{eqnarray*}
Notice that $H$ and $A$ are dependent on a randomly chosen question/answer pair, $x$.  We don't use $x$ below, but to ensure parameter independence, the $x$'s associated with $S_n$ and $\twid{S}_N$ should be non-overlapping.

Our target estimand is the expected human rating:
\[
e(S_n, \twid{S}_N) = \prop{H}{S_n}
\]
and the proxy estimand uses the chain rule to evaluate it:
\[
 g(S_n,\twid{S}_N) \equiv 
    \hat{p}_{H|A} \cdot \hat{p}_{A} +
    \hat{p}_{H|\neg A} \cdot ( 1 - \hat{p}_{A})
\]
where
\begin{eqnarray*}
    \hat{p}_A & = & \prop{A}{\twid{S}_N} \\
    \hat{p}_{H|A} & = & \cprop{H}{A}{{S}_n} \\
    \hat{p}_{H|\neg A} & = & \cprop{H}{\neg A}{{S}_n}
\end{eqnarray*}
By analogy with the difference estimator, we call this method a \emph{chain rule estimate}.
It is clear that $\expect{g}=\expect{e}$, but it is less obvious why this trick should reduce variance.  However, 
recall the standard error of a binomial with probability $p^*$ estimated from $n$ samples is approximately $\sqrt{p^*(1-p^*)/n}$.
Examining the proportions in $g$, we see that
$\hat{\sigma}_{A}$ is small since it is computed from $N$ samples, and $N$ is large. If the autorater is accurate, then $p^*_{H|\neg A}$ and $(1 - p^*_{H|A})$ will be small, so $\hat{\sigma}_{H|\neg A}$ and $\hat{\sigma}_{H|A}$ will be small.

This model is similar to the stratified estimate discussed above, in that it marginalizes over different cases. One technical difference is that it uses binomials instead of difference estimates in the ``inner loop'' of the marginalization sum.  It is also arguably a clearer description of how to make use of discrete autorater values.
%A difference estimate has limited utility here, since $A$, which corresponds here to $f(x)$, is constant within each ``partition''.  

% difference estimator row
\newcommand{\ediffest}{$\mean{y}{y}{{S}_n}$}
\newcommand{\gdiffest}{$\hat{\mu}^{N}_{f(x)} + \hat{\mu}^{n}_{y-f(x)}$}
\newcommand{\muf}{$\hat{\mu}^{N}_{f(x)} = \mean{f(x)}{x}{\twid{S}_N}$}
\newcommand{\mudiffs}{\multicolumn{1}{l}{$\hat{\mu}^{n}_{y-f(x)} = $}}
\newcommand{\mudiffe}{\multicolumn{1}{l}{~~~$\mean{y - f(x)}{(x,y)}{S_n}$}}

% partitioned difference estimate row
\newcommand{\epart}{$\mean{y}{y}{{S}_n}$}
\newcommand{\gpart}{ \( %
 \sum_{i=1}^{K} (\hat{\mu}^{i}_{f(x)} + \hat{\mu}^{i}_{y-f(x)}) \cdot %
  \hat{p}_{i} \)}
\newcommand{\forpart}{for $i=1,\ldots,K$:}
\newcommand{\ppart}{~~$\hat{p}_{i} = \prop{\pi(f(x))=i}{\twid{S}_N}$}
\newcommand{\mupartf}{~~$\hat{\mu}^{i}_{f(x)} = \mean{f(x)}{x}{\twid{S}_N^i}$}
\newcommand{\muparts}{\multicolumn{1}{l}{~~$\hat{\mu}^{i}_{y-f(x)} = $}}
\newcommand{\muparte}{\multicolumn{1}{l}{~~~~$\mean{y - f(x)}{(x,y)}{S_n^i}$}}

% proportional difference estimator
\newcommand{\epd}{$\prop{H}{S_n}$}
\newcommand{\gpd}{$\hat{p}_{H|A} \cdot \hat{p}_{A} + 
    \hat{p}_{H|\neg A} \cdot ( 1 - \hat{p}_{A})$}
\newcommand{\pA}{$\hat{p}_A  = \prop{A}{\twid{S}_N}$}
\newcommand{\pHgA}{$\hat{p}_{H|A} =  \cprop{H}{A}{{S}_n}$}
\newcommand{\pHgnA}{$\hat{p}_{H|\neg A} =  \cprop{H}{\neg A}{{S}_n}$}

% abstaining version
% estimand is the same
\newcommand{\eapd}{\epd}
\newcommand{\gapdf}{ \( %
  \sum_{a \in \{y,n,u\}} \hat{p}_{H|A=a} \cdot \hat{p}_{A=a} \) }
\newcommand{\pAfor}{for $a \in \{y,n,u\}$: }
\newcommand{\pAp}{ \( %
   ~~\hat{p}_{A=a} = \prop{A=a}{\twid{S}_N} \)}
\newcommand{\pAm}{ \( %
   ~~\hat{p}_{H|A=a} = \cprop{H}{A=a}{{S}_n} \)}
%\newcommand{\pHgAp}{$\hat{p}_{H|+1} = \cprop{H}{A=+1}{{S}_n}$}
%\newcommand{\pHgAm}{$\hat{p}_{H|-1} = \cprop{H}{A=-1}{{S}_n}$}
%\newcommand{\pHgAz}{$\hat{p}_{H|0} = \cprop{H}{A=0}{{S}_n}$}

% paired tests
%\newcommand{\epaira}{$\hat{\mu}_{Hw} - \hat{\mu}_{Hl}$ where}
%newcommand{\epairb}{$\hat{\mu}_{Hw} =  \prop{H=w}{S_n}$, }
%\newcommand{\epairc}{$\hat{\mu}_{Hl} =  \prop{H=l}{S_n}$}
\newcommand{\epairs}{$\prop{H=w}{S_n} ~ - ~$}
\newcommand{\epaire}{$\prop{H=l}{S_n}$}

\newcommand{\gpairs}{$\sum_{a \in \{w,l,t\}} \hat{p}_{H=w|a} \cdot \hat{p}_{A=a} ~ -$}
\newcommand{\gpaire}{$\sum_{a \in \{w,l,t\}} \hat{p}_{H=l|a} \cdot \hat{p}_{A=a}$}
\newcommand{\pApaira}{for $a\in\{w,l,t\}$, $h\in\{w,t\}$:}
\newcommand{\pApairb}{~~~$\hat{p}_{A=a} = \prop{A=a}{\twid{S}_N}$}
\newcommand{\pApairc}{~~~$\hat{p}_{H=h|a} = \cprop{H=h}{A=a}{\twid{S}_N}$}

\begin{table*}
\begin{small}
\begin{tabular}{cclc}
Estimand                &  Designed Statistic   & Means or Proportions Used in $g$  &   Comments        \\ 
$e(S_n,\twid{S}_N)$     & $g(S_n,\twid{S}_N)$   &   &  \\
\hline \\
\ediffest               & \gdiffest             & \muf      &  difference \\
                        &                       & \mudiffs  &  estimate \\
                        &                       & \mudiffe  &  \\
\\ \hline \\

\epart                  & \gpart                & \forpart   & stratified \\
                        &                       & \ppart    & difference \\
                        &                       & \mupartf  & estimate \\
                        &                       & \muparts  &  \\
                        &                       & \muparte  &  \\
\\ \hline \\

\epd                    & \gpd                  & \pA       & chain rule   \\
                        &                       & \pHgA     & estimate  \\
                        &                       & \pHgnA    & \\
\\ \hline \\

\eapd                   & \gapdf             &  \pAfor     & chain rule \\
                        &                    &  \pAp     & estimate with \\
                        &                       & \pAm    & an abstaining \\
                        &                       &    & autorater \\
                        &                       &     & \\

\\ \hline \\

\epairs                 & \gpairs               &  \pApaira         & chain rule  \\
\epaire                 & \gpaire               &  \pApairb         & estimate for \\
                        &                       &  \pApairc         & paired tests \\

\\ \hline
\end{tabular}
\end{small}

\caption{Overview of PPI methods used in this paper.  The difference estimate
is a Bayesian version of prior work and the remaining models are novel.} \label{tab:methods}
\end{table*}

\subsubsection{PPI for abstaining autoraters} \label{sec:abstain-method}

There are a number of natural extensions to the chain rule estimate, and we consider two of these in depth in this paper.  One is for autoraters that do not always give a yes-or-no answer.  It might be that the autorater output cannot be parsed as expected, or it might be that the autorater is designed to answer ``unknown'' when presented with a question/answer pair whose correctness cannot be determined.  To model this, we can assume that $A$ is a multinomial random variable with three outputs, $y$ for ``acceptable'', $n$ for ``not acceptable'', and $u$ for ``unknown''.  If we assume the gold human labels are still binary, we can modify our model as follows:
\begin{eqnarray*}
    \hat{p}_{A=a} & = & \prop{A=a}{\twid{S}_N}, ~~a \in \{y,n,u\} \\
    \hat{p}_{H|A=a} & = & \cprop{H}{A=a}{{S}_n}, ~~a \in \{y,n,u\}  \\
 g(S_n,\twid{S}_N) & = &  
    \sum_{a \in \{y,n,u\}} \hat{p}_{H|A=a} \cdot \hat{p}_{A=a}
\end{eqnarray*}
In this case, we don't expect the variance of $\hat{p}_{H|0}$ to be especially small, so it should also be the case that the  the autorater not abstain too often---i.e., that $p(A=u)$ is small.

\subsubsection{PPI for side-by-side tests} \label{sec:sxs}

Another common kind of labeling is variously called a side-by-side test, an A/B test, or a paired test.  In this case raters are asked to compare two alternative outputs for the same input string, and either express a preference between them, or say that the outputs are equally good.
In Section~\ref{sec:sxs-results}, we describe a novel estimate for this based on the chain rule estimate.  For completeness, this estimate is also shown in Table~\ref{tab:methods}.

}
\section{Experimental results}
\label{sec:experiments}
% https://arxiv.org/abs/2305.13194

We compare our stratified estimator, \texttt{StratPPI}, to two baselines: (i) the classical estimate, which uses only the labeled data, $S_n$; and (ii) \texttt{PPI++}, which uses both $S_n$ and $\tilde{S}_n$.
All of our experiments focus on $1$-$d$ mean estimation.
We explore three different allocation strategies for \texttt{StratPPI}: the first is to set $\rho_k = w_k$ to be data proportional (\texttt{StratPPI Prop.}), the second is to set $\rho_k$ optimally via the oracle $\rho_k= \rho^*_k$  (\texttt{StratPPI Opt.}), and the third is to use the approximation, $\rho_k \propto w_k \hat{\sigma}_k$, in Example~\ref{ex:mean} for $\lambda_k = 1$ when confidence scores are available (\texttt{StratPPI Heur.}). 
We use $\lambda$-tuning for both \texttt{PPI++} and \texttt{StratPPI}, as outlined in \S\ref{sec:powertune}. Additional experimental results  are given in Appendix~\ref{app:additional}.\looseness=-1

\subsection{Simulation studies}
\label{sec:synthetic}

\newcommand{\bst}[1]{\textbf{#1}}
\newcommand{\ours}{\textit{ours:}}

\begin{figure}[t]
\begin{center}
\includegraphics[width=1\textwidth]{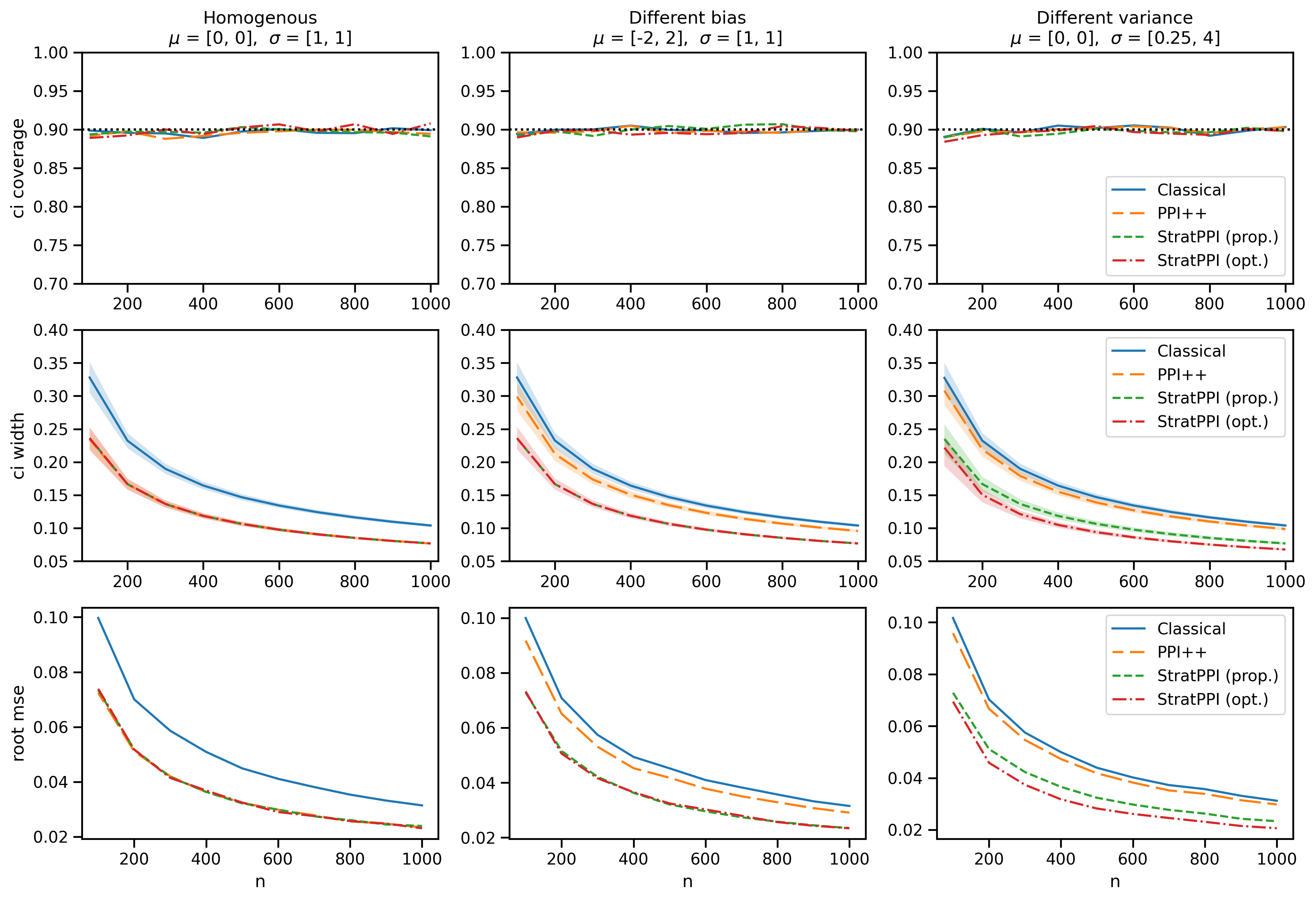}

\end{center}
\vspace{-10pt}
\caption{Mean estimation simulation study with $K = 2$ and $\alpha = 0.1$. The top row plots coverage (i.e., the fraction of the cases where the CI contained the true parameter value $\theta^*$). The middle row plots the mean CI width ($\downarrow$ is better). Shaded areas plot the $16/84$ quantiles across $5$k trials. The bottom row plots the RMSE of  $\hat{\theta}^\mathrm{SPP}$ computed across the $5k$ trials, which shares the same trend with the mean CI width, as the estimator is unbiased. The left column shows a setting where strata are homogeneous, and \texttt{StratPPI} provides the no benefits over standard \texttt{PPI++} (but is not worse). The middle and right columns show heterogeneous settings where the autorater has either a different bias ($\mu$) or variance $(\sigma)$  per stratum, in which case \texttt{StratPPI} helps substantially. As strata variances are known, we only report proportional and optimal sample allocation results for \texttt{StratPPI}.\looseness=-1} %See Appendix~\ref{app:additional} for additional plots of the RMSE across trials of $\hat{\theta}^\mathrm{SPP}$ with respect to $\theta^*$  (which has the same trend as the CI width, as the estimators are unbiased).}
\label{fig:synthetic_results}
\vspace{-16pt}
\end{figure}

We start with a simple synthetic experiment that is an analogue of \S7.7.1 in \cite{angelopoulos2023ppi}. Our goal is to estimate the mean outcome $\mathbb{E}[Y]$, where $Y \sim \mathcal{N}(0, 1)$. We assume that the input space $\mathcal{X}$ is partitioned into $K = 2$ strata, $(\mathcal{A}_1, \mathcal{A}_2)$, of equal mass $\mathbb{P}(X \in \mathcal{A}_1) = \mathbb{P}(X \in \mathcal{A}_2) = 0.5$. We then assume that predictions are formed as $f(X_{ik}) = Y_{ik} + \mu_k + \sigma_k \epsilon_{ik}$, where $\epsilon_{ik} \sim \mathcal{N}(0, 1)$. In other words, the predictions do not depend on the covariates $X_{ik}$, other than to reflect a stratum-specific noise $\sigma_k$ and bias $\mu_k$. We test three different scenarios: (i) where the two strata are homogeneous with $\mu_1 = \mu_2$ and $\sigma_1 = \sigma_2$; (ii) where the two strata have different prediction biases, $\mu_1 \neq \mu_2$; and (iii) where the two strata have different prediction noise levels, $\sigma_1 \neq \sigma_2$.
For each experiment, we sample $N = 10{,}000$ total predictions $f(\tilde{X})$ using $\tilde{\rho}_1 = \tilde{\rho}_2 = 0.5$, i.e.,  proportional to masses of the two hypothetical, equal-weight strata. We then vary the total number $n$ of labeled examples $Y$, where the allocation is chosen according to $\rho$ (which differs depending on if we are using \texttt{StratPPI Prop.} or \texttt{StratPPI Opt.}). We show results in Figure~\ref{fig:synthetic_results} for the mean confidence interval (CI) size and coverage (i.e., the fraction of the cases where the CI contained the true parameter value $\theta^*$) of each method, averaged over $1k$ trials. \looseness=-1
As the plots in Figure~\ref{fig:synthetic_results} illustrate, when the underlying strata are homogeneous (left column), \texttt{StratPPI} behaves similar to \texttt{PPI++}. However, when the strata are heterogeneous (middle and right columns), \texttt{StratPPI} outperforms both baselines significantly---whereas \texttt{PPI++} becomes barely more powerful than the classical estimator. Additionally, we see that when the variance  differs per stratum (right column), optimal allocation of the sampling budget indeed provides additional benefit.\looseness=-1

\subsection{Real data studies}
\label{sec:real_data}
We now demonstrate how our method performs on real datasets, where the underlying structure of the autorater is unknown.
To partition $\mathcal{X}$, we choose to focus on stratifications that are based on the autorater's predictions, $f(X)$, based on the intuition that autorater performance can often differ across the type of predictions that it makes. Concretely, if the output space $\mathcal{Y}'$ of $f$ is discrete, then we define $\mathcal{A} = \mathcal{Y}'$; otherwise we define $\mathcal{A}$ based on the equal-mass quantiles of $\mathcal{Y}'$ (which can   be   estimated by sampling a large set of unlabeled $X$ and applying the autorater). We set $K = 10$. For all experiments, we plot performance as a function of $n$, where $n$ is the number of human ratings our system is allowed to observe from each dataset. The remainder of the dataset (including any data points that are unlabeled, or labeled but with the labels removed) is used for the autorater sample $\tilde{S}_N$.  %testing our \texttt{PPI}-based approaches, we limit the number of human ratings that we are allowed to observe.

%In the experiments above, we considered discrete-valued autoraters and human ratings.  This is an appropriate assumption in many cases (notably, when prompted LLM autoraters are used) but it is also common to use fine-tuned autorater models which output a calibrated probability.  

\begin{figure}[t]
\begin{center}
\includegraphics[width=1\textwidth]{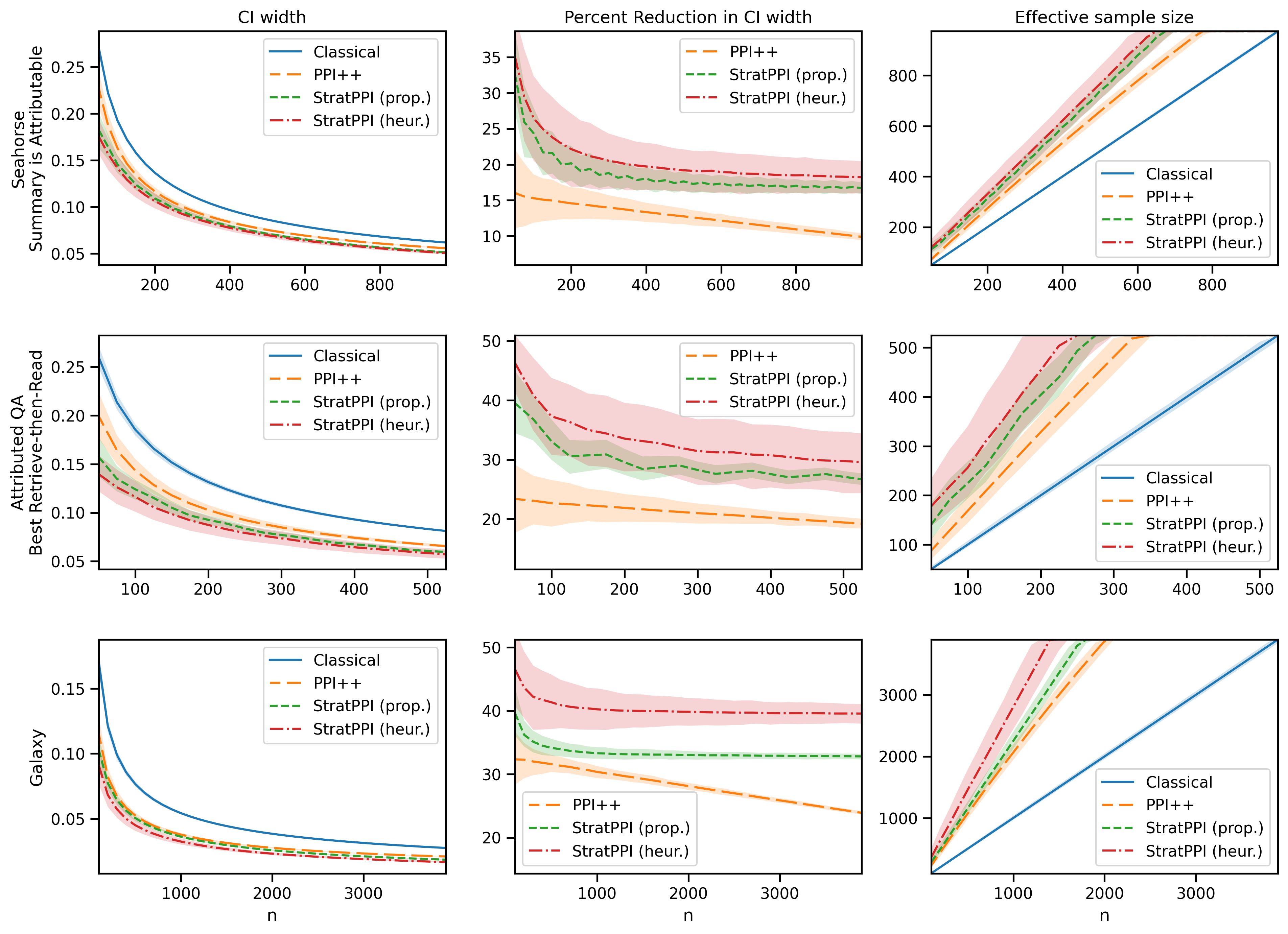}
\end{center}
\vspace{-10pt}
\caption{
Mean estimation on real data with $K = 10$ and $\alpha = 0.05$. The $x$ axis plots the number of human-labeled examples $n$; the $y$ axis plots CI width, percent reduction in CI width  against the classical estimate , and the effective sample size (the amount of human labels necessary to match the same confidence interval via classical inference). Shaded areas plot the $16/84$ quantiles across $1k$ trials. All \texttt{StratPPI} methods improve over classical inference and \texttt{PPI++}. \looseness=-1}
%(see Section~\ref{sec:results})
\label{fig:part}
\vspace{-14pt}
\end{figure}

%A common task is evaluating long-form outputs from LLMs: for instance, the outputs of summarizers and systems that provide long answers to information-seeking questions.  We consider the case where autoraters are used to estimate the performance of these models, as an alternative to costly human ratings.

\comment{Thus we have two LLMs: a \emph{generator model} which produces answer an output $z$ from an input prompt $x$, and a \emph{autorater model} which takes a pair $x,z$ and outputs a predicted human rating $\hat{y}$. In addition we assume access to a limited number of human rating $y$, where humans were asked if $z$ is a good output for $x$.}

%In general this is done with human ratings, but for many tasks, automated raters are trained to provide an efficient, cheaper approximation. In this setting, there will be a limited set of human ratings $x,z,y$,
%and two LLM models: a \emph{generator model} which produces answer an output $z$ from an input prompt $x$, and a \emph{autorater model} which takes a pair $x,z$ and outputs a predicted human rating $\hat{y}$.
%We consider the following three datasets:

\paragraph{Seahorse.~}
The \texttt{Seahorse} dataset~\cite{clark-etal-2023-seahorse} focuses on multilingual summarization.  The authors considered generative  models that output summaries for a document, and collected labels for  many systems that cover serveral dimensions of summary quality.
We focus on one quality dimension---whether the summary is fully attributable to the source document---and on one summarization system---a finetuned 13B parameter mT5 model~\cite{xue2020mt5}. The autorater models for each dimension are also mT5-XXL finetuned models, which output probability scores. The data contains $2727$ examples for these two tasks, all of which have both human ratings as well as autorater scores. 
% , and one which captures if the summary concisely represents the relevant information.  

\paragraph{AttributedQA.~} 
%As a second testbed, we used data distributed by the authors of \cite{bohnet2023attributed}, which compared many models on the task of \emph{attributed question answering}.
In attributed question answering~\cite{bohnet2023attributed}, the goal of the QA system is to output both an answer, and a retrieved document that provides support for that answer. The system is only considered to be correct if the answer is both correct, and indeed supported by the linked document. 
% ---i.e., a generation $z$ is a pair $(a_x,d_x)$ where $a_x$ is an answer to question $x$, and $d_x$ is a document $d_x$ that supports the answer,
%and is considered correct if the answer is indeed supported by the document. 
We evaluate the highest-scoring ``retrieve-and-read'' QA system from this dataset,%\footnote{We use RTR-10. Retrieval-augmented generation (RAG) models use a dense retriever to find documents $d_x$ from a corpus for a query $x$, and then generate an answer $a_x$ with a fine-tuned LLM that takes $d_x$ as context.\looseness=-1}
and define our autorater to be an 11B parameter T5 model~\cite{JMLR:v21:20-074} fine-tuned on a collection of natural language entailment tasks~\cite{honovich2022true}. Like \texttt{Seahorse}, the model predicts a probability for whether or not the QA system gave an attributable answer. This dataset has $1000$ human labels and $3000$ autorater labels.

\paragraph{Galaxy.~} 
To demonstrate the generality of \texttt{StratPPI} beyond LLM-based settings, we also consider the \texttt{Galaxy} dataset~\cite{galaxy2}, where the task is to estimate the fraction of spiral galaxies in
the local universe. The autorater used here is a ResNet classifier applied to images from the Sloan Digital Sky Survey (SDSS)~\cite{York_2000}.
The classifier  estimates the probability that the galaxy in question is spiral. We use $16{,}743$ observations from the dataset which contain both the human and autorater labels.\looseness=-1
% , and take $n$ to serve as the labeled dataset and the rest, 16,743 - $n$, as the unlabeled data.

\paragraph{Methodology.~} %\label{sec:methodology}
We follow a procedure similar to \cite{doi:10.1126/science.adi6000, angelopoulos2023ppi} to study CI estimates as a function of $n$. We report results on the following estimates: Classical inference, \texttt{PPI++}, \texttt{StratPPI Prop.} and \texttt{StratPPI Heur}. We do not report \texttt{StratPPI Opt.} since it is unknown for real data.
%compare four methods, the classical estimate, PPI+ and two variants for the value of $\rho$: \texttt{StratPPI Prop} and \texttt{StratPPI Heur}. In the case of real world data, distributions are not known and optimal allocation is not computable, we use the heuristic as defined in \joshuahm{complete}. 
%Firstly, in Section~\ref{sec:strat-theory} we assumed the input space $\mathcal{X}$ is already partitioned into $K$ strata. To arrive to that state we opted for a simple approach. 
 % and use a partition of $\hat{Y}\in\mathbb{R}$ into $K$ sections with uniform density.\footnote{$K=10$ is a reasonable value considering the number of labeled data, $n$, is on the hundreds.}
%stratify the input space in uniform density strata using the fact that the autoraters we are dealing with are continuous. 
%
For each value of $n$, we sample $n$ of the cases with human labels at an allocation rate $\rho_k$ (this rate is determined differently in \texttt{StratPPI Prop.} and \texttt{StratPPI Heur.}). As noted above, we construct the unlabeled dataset $\twid{S}_N$ by joining the remaining labeled data (without utilizing the true labels) with any unlabeled data available for that dataset.
% For datasets such as \texttt{Seahorse} and \texttt{Galaxy}, there is no additional unlabeled data, so that the unlabeled dataset is comprised of the remaining labeled data. For the \texttt{Attributed QA} dataset, there are 2000 additional unlabeled data.
We repeat this over $1000$ trials, and for each trial we obtain the CI width. We report the mean of these widths in Figure~\ref{fig:part}, as well as the percent reduction in width over the classical inference baseline, $\frac{|\mathcal{C}_\alpha^\mathrm{classical}| - |\mathcal{C}_\alpha^\mathrm{method}|}{|\mathcal{C}_\alpha^\mathrm{classical}|} \times 100\% $. We also report the ``effective sample size'', which we define as the number of samples required to obtain a CI of the same width as the method at sample size $n$ when using the classical inference baseline instead.\looseness=-1

%\subsubsection{Results} \label{sec:results}
\paragraph{Results. }  %\label{sec:results}
Figure~\ref{fig:part} shows a large improvement for \texttt{StratPPI} methods over both \texttt{PPI++} and classical inference for most datasets. %\texttt{StratPPI Prop.} shows improvements in percent CI width reduction over the classical estimate which vary depending on $n$.
Specifically, though all CIs decrease substantially in absolute size with $n$ as expected, we see that improvements in the percent reduction in CI width over \texttt{PPI++} can be as large as  $0.10 \to 0.20$, $0.20 \to 0.30$, and $0.25 \to  0.35$ points in \texttt{Seahorse}, \texttt{AttributedQA}, and \texttt{Galaxy}, respectively.
Furthermore, we can observe that many of the datasets exhibit heterogeneous characteristics, for which heuristic allocation helps considerably. In \texttt{Galaxy}, this accounts for a $+10$ percent reduction in CI width. In \texttt{Seahorse} and \texttt{AttributedQA}, the improvements are less strong but still clearly apparent.
The practical implication of these results is that when limited by a human rating budget, \texttt{StratPPI} is able to produce an estimate of the mean with fewer human ratings via stratification and sampling allocation. For example, for the \texttt{Seahorse} dataset,  we can see from the right column in Figure~\ref{fig:part} that \texttt{StratPPI Heur.} with only $300$ human ratings will be approximately  as confident about the mean as the classical estimate that utilizes $600$ human ratings, a factor of $2\times$.\looseness=-1

\section{Conclusion}
As systems built on top of large language models continue to  become more and more advanced, it becomes increasingly challenging to evaluate their performance using automatic tools. Manual labeling, on the other hand,  is slow and  expensive. Methods which therefore save on annotation cost are critical for reliably evaluating models, and knowing when they are improving---or degrading. Prediction-powered inference (\texttt{PPI}) is a promising class of such hybrid evaluation methods, since it can leveraged to provably produce statistically valid confidence intervals, while also effectively reducing the number of human labels needed to obtain intervals of certain width. Our results  demonstrate that we can push \texttt{PPI} even further by introducing a method for performing even lower variance M-estimation by employing stratified sampling. In particular, we find that stratifications based on the predictions of the autoraters themselves proves to be an powerful stratification technique.\looseness=-1

\paragraph{Limitations.} While the confidence intervals produced by \texttt{StratPPI} (and \texttt{PPI}) enjoy  coverage guarantees, these guarantees are asymptotic. When finite-sample performance is of particular importance, techniques that afford stronger guarantees might be preferred~\cite{angelopoulos2023conformal, anastasios-learning-2021, bates-rcps}. We also note some non-trivial aspects of the stratified sampling setup: (i) the number of strata has to be fixed; if $K$  scales with $n$ a more careful treatment is required; and (ii) the assumed observation model is different from traditional i.i.d. settings---we must be able to sample fixed sized samples from each partition; and (iii) performance may not be improved if the selected  stratification is not statistically useful. Furthermore, in practice, the human data might have already been collected, in which case the studied stratified sampling setup does not directly apply. We leave the study of such post-stratified estimators to future work.\looseness=-1 %\footnote{We also  note that the batch active inference method of \cite{zrnic2024active} would be appropriate in this setting.}
%This may be done via rejection sampling following the usual i.i.d. model, thought this may be less efficient depending on the parameters of the real world setting.

\paragraph{Broader impacts.} This paper introduces a more powerful statistical method for evaluating LLMs, by merging human evaluations with autoraters in a way that is aware of subdomain differences. Our goal is to help power more reliable evaluations with lower annotation effort, both in terms of cost and time.\looseness=-1
%new ideas to the active field of research on preference-based offline post-training. Thus, we hope that the short-term impact will be to make it easier to train large models and to improve understanding of popular contemporary approaches. The long-term impact of better post-training is hard to predict, but we believe that it is likely to support the development of AI systems that are more capable, reliable, and safe.
%effective, variance-reducing strategies stratified sampling strategies, while retaining all of the same theoretical properties.  stratification can significantly improve the data-efficiency of PPI, via estimating the rectifier within each stratum more accurately, and as a result obtaining a better correction for the autorater.
%

%Our approach can be extended in several directions. First, we have considered fixed strata here, and it will be interesting to test the degree to which these can be optimized. Second, we used an approximation for the optimal sampling rates, and it would be interesting to find cases where this can be made exact. Finally, it would be interesting to explore the case where autoraters are allowed to return a range of responses, and use tools from conformal prediction \cite{shafer2008tutorial} to analyze confidence intervals in this setting.   

\paragraph{Acknowledgements.} We thank Jacob Eisenstein, Jonathan Berant, Anastasios Angelopoulos, Taylan Cemgil, Arnoud Doucet, and Chris Dyer for helpful comments and discussions. 

%\amirg{Need to write}
\comment{
\af{Shorten and more succinct about what the contribution is.}
One often-proposed approach to avoiding evaluation bottlenecks in developing LLM-based systems that produce long-form outputs is to use a secondary LLM-based system as a judge or ``autorater'' of input, output pairs.  Autoraters can be used to score outputs more cheaply than human raters, but are potentially biased.  To address this, \emph{prediction-powered inference (PPI)} methods \cite{doi:10.1126/science.adi6000} can be used, which combine a small number of human-rated outputs and a larger number of autorater-rated outputs, and produce a confidence interval that contains the average human rating, but is as small as possible.

Here we propose an new approach to PPI, where a target estimand (e.g., the mean human rating score) is coupled with a \emph{proxy estimate} that makes use of autoratings in a carefully designed, task-specific way.  We use a Bayesian formulation and simple general-purpose numerical methods to compute confidence intervals over the proxy estimand.  Using this approach leads to a number of novel proxy estimands that improve prior results on several tasks, sometimes dramatically, and which sometimes eliminate cumbersome engineering steps.  For instance, in evaluating two attributed QA models, the best existing PPI method reduces the CI width to 91\% and 86\% of the classical CI width, while our method reduces CI width to 78\% and 71\% of the classical width; if powertuning methods are allowed, prior methods obtain 83\% and 79\%, and our method obtains 76\% and 69\%.  On another set of tasks, prior PPI methods give 93\%, or 91\% with some engineering of the autorater, while
direct application of our method gives a reduction to 85\% of the classical CI width, implying that 60\% fewer labeled examples would be needed to obtain equally statistically meaningful results.  We also show a dramatic improvement in conducting side-by-side tests: e.g., with 200 examples, only 79\% of the truly different pairs of models were separable with the classical test, verses 94\% with the chain rule test.
}
\comment{
\subsection{Limitations}

Our approach is Bayesian, not frequentist, and makes guarantees different from those associated with traditional confidence intervals.  However, while the motivations for our approach are Bayesian, we note that the approach can also be implemented with non-Bayesian numerical methods like the bootstrap, and that experimentally, the method seems to have good frequentist properties as well.

In prior work \cite{doi:10.1126/science.adi6000}
proposed PPI methods that apply to solutions to any convex optimization tasks, which are ideal for certain estimation statistics, such as parameters of linear or logistic regression, or Bradley-Terry coefficients \cite{boyeau2024autoeval}.  It is not immediately obvious if it is possible (or appropriate) to apply our approach to such problems.
}
\label{page:end}

\bibliographystyle{plainnat}
\bibliography{diem}

\begin{thebibliography}{37}
\providecommand{\natexlab}[1]{#1}
\providecommand{\url}[1]{\texttt{#1}}
\expandafter\ifx\csname urlstyle\endcsname\relax
  \providecommand{\doi}[1]{doi: #1}\else
  \providecommand{\doi}{doi: \begingroup \urlstyle{rm}\Url}\fi

\bibitem[Angelopoulos et~al.(2023{\natexlab{a}})Angelopoulos, Bates, Fannjiang, Jordan, and Zrnic]{doi:10.1126/science.adi6000}
Anastasios~N. Angelopoulos, Stephen Bates, Clara Fannjiang, Michael~I. Jordan, and Tijana Zrnic.
\newblock Prediction-powered inference.
\newblock \emph{Science}, 382\penalty0 (6671):\penalty0 669--674, 2023{\natexlab{a}}.
\newblock \doi{10.1126/science.adi6000}.
\newblock URL \url{https://www.science.org/doi/abs/10.1126/science.adi6000}.

\bibitem[Angelopoulos et~al.(2023{\natexlab{b}})Angelopoulos, Bates, Fisch, Lei, and Schuster]{angelopoulos2023conformal}
Anastasios~N. Angelopoulos, Stephen Bates, Adam Fisch, Lihua Lei, and Tal Schuster.
\newblock Conformal risk control.
\newblock \emph{arXiv preprint: 2208.02814}, 2023{\natexlab{b}}.

\bibitem[Angelopoulos et~al.(2023{\natexlab{c}})Angelopoulos, Duchi, and Zrnic]{angelopoulos2023ppi}
Anastasios~N. Angelopoulos, John~C. Duchi, and Tijana Zrnic.
\newblock {PPI}++: Efficient prediction-powered inference.
\newblock \emph{arXiv preprint: 2311.01453}, 2023{\natexlab{c}}.

\bibitem[Angelopoulos et~al.(2021)Angelopoulos, Bates, Cand{\`e}s, Jordan, and Lei]{anastasios-learning-2021}
Anastasios~Nikolas Angelopoulos, Stephen Bates, Emmanuel~J. Cand{\`e}s, Michael~I. Jordan, and Lihua Lei.
\newblock Learn then test: Calibrating predictive algorithms to achieve risk control.
\newblock \emph{arXiv preprint: 2110.01052}, 2021.

\bibitem[Bates et~al.(2020)Bates, Angelopoulos, Lei, Malik, and Jordan]{bates-rcps}
Stephen Bates, Anastasios~Nikolas Angelopoulos, Lihua Lei, Jitendra Malik, and Michael~I. Jordan.
\newblock Distribution free, risk controlling prediction sets.
\newblock \emph{arXiv preprint: 2101.02703}, 2020.

\bibitem[Bohnet et~al.(2023)Bohnet, Tran, Verga, Aharoni, Andor, Soares, Ciaramita, Eisenstein, Ganchev, Herzig, Hui, Kwiatkowski, Ma, Ni, Saralegui, Schuster, Cohen, Collins, Das, Metzler, Petrov, and Webster]{bohnet2023attributed}
Bernd Bohnet, Vinh~Q. Tran, Pat Verga, Roee Aharoni, Daniel Andor, Livio~Baldini Soares, Massimiliano Ciaramita, Jacob Eisenstein, Kuzman Ganchev, Jonathan Herzig, Kai Hui, Tom Kwiatkowski, Ji~Ma, Jianmo Ni, Lierni~Sestorain Saralegui, Tal Schuster, William~W. Cohen, Michael Collins, Dipanjan Das, Donald Metzler, Slav Petrov, and Kellie Webster.
\newblock Attributed question answering: Evaluation and modeling for attributed large language models.
\newblock \emph{arXiv preprint: 2212.08037}, 2023.

\bibitem[Boyeau et~al.(2024)Boyeau, Angelopoulos, Yosef, Malik, and Jordan]{boyeau2024autoeval}
Pierre Boyeau, Anastasios~N. Angelopoulos, Nir Yosef, Jitendra Malik, and Michael~I. Jordan.
\newblock Autoeval done right: Using synthetic data for model evaluation.
\newblock \emph{arXiv preprint: 2403.07008}, 2024.

\bibitem[Bulian et~al.(2022)Bulian, Buck, Gajewski, B{\"o}rschinger, and Schuster]{bulian-etal-2022-tomayto}
Jannis Bulian, Christian Buck, Wojciech Gajewski, Benjamin B{\"o}rschinger, and Tal Schuster.
\newblock Tomayto, tomahto. beyond token-level answer equivalence for question answering evaluation.
\newblock In Yoav Goldberg, Zornitsa Kozareva, and Yue Zhang, editors, \emph{Proceedings of the 2022 Conference on Empirical Methods in Natural Language Processing}, pages 291--305, Abu Dhabi, United Arab Emirates, December 2022. Association for Computational Linguistics.
\newblock \doi{10.18653/v1/2022.emnlp-main.20}.
\newblock URL \url{https://aclanthology.org/2022.emnlp-main.20}.

\bibitem[Chaganty et~al.(2018)Chaganty, Mussmann, and Liang]{chaganty-etal-2018-price}
Arun Chaganty, Stephen Mussmann, and Percy Liang.
\newblock The price of debiasing automatic metrics in natural language evalaution.
\newblock In Iryna Gurevych and Yusuke Miyao, editors, \emph{Proceedings of the 56th Annual Meeting of the Association for Computational Linguistics (Volume 1: Long Papers)}, pages 643--653, Melbourne, Australia, July 2018. Association for Computational Linguistics.
\newblock \doi{10.18653/v1/P18-1060}.
\newblock URL \url{https://aclanthology.org/P18-1060}.

\bibitem[Chiang et~al.(2024)Chiang, Zheng, Sheng, Angelopoulos, Li, Li, Zhang, Zhu, Jordan, Gonzalez, and Stoica]{chiang2024chatbot}
Wei-Lin Chiang, Lianmin Zheng, Ying Sheng, Anastasios~Nikolas Angelopoulos, Tianle Li, Dacheng Li, Hao Zhang, Banghua Zhu, Michael Jordan, Joseph~E. Gonzalez, and Ion Stoica.
\newblock Chatbot arena: An open platform for evaluating llms by human preference.
\newblock \emph{arXiv preprint: 2403.04132}, 2024.

\bibitem[Clark et~al.(2023)Clark, Rijhwani, Gehrmann, Maynez, Aharoni, Nikolaev, Sellam, Siddhant, Das, and Parikh]{clark-etal-2023-seahorse}
Elizabeth Clark, Shruti Rijhwani, Sebastian Gehrmann, Joshua Maynez, Roee Aharoni, Vitaly Nikolaev, Thibault Sellam, Aditya Siddhant, Dipanjan Das, and Ankur Parikh.
\newblock {SEAHORSE}: A multilingual, multifaceted dataset for summarization evaluation.
\newblock In Houda Bouamor, Juan Pino, and Kalika Bali, editors, \emph{Proceedings of the 2023 Conference on Empirical Methods in Natural Language Processing}, pages 9397--9413, Singapore, December 2023. Association for Computational Linguistics.
\newblock \doi{10.18653/v1/2023.emnlp-main.584}.
\newblock URL \url{https://aclanthology.org/2023.emnlp-main.584}.

\bibitem[Cochran(1977)]{books/wi/Cochran77}
William~G. Cochran.
\newblock \emph{Sampling Techniques, 3rd Edition.}
\newblock John Wiley, 1977.
\newblock ISBN 0-471-16240-X.

\bibitem[Gao et~al.(2023)Gao, Schulman, and Hilton]{gao2023scaling}
Leo Gao, John Schulman, and Jacob Hilton.
\newblock Scaling laws for reward model overoptimization.
\newblock In \emph{International Conference on Machine Learning}, pages 10835--10866. PMLR, 2023.

\bibitem[Hofer et~al.(2024)Hofer, Maynez, Dhingra, Fisch, Globerson, and Cohen]{hofer2024bayesian}
R.~Alex Hofer, Joshua Maynez, Bhuwan Dhingra, Adam Fisch, Amir Globerson, and William~W. Cohen.
\newblock Bayesian prediction-powered inference.
\newblock \emph{arXive preprint: 2405.06034}, 2024.

\bibitem[Honovich et~al.(2022)Honovich, Aharoni, Herzig, Taitelbaum, Kukliansy, Cohen, Scialom, Szpektor, Hassidim, and Matias]{honovich2022true}
Or~Honovich, Roee Aharoni, Jonathan Herzig, Hagai Taitelbaum, Doron Kukliansy, Vered Cohen, Thomas Scialom, Idan Szpektor, Avinatan Hassidim, and Yossi Matias.
\newblock True: Re-evaluating factual consistency evaluation.
\newblock \emph{arXiv preprint: 2204.04991}, 2022.

\bibitem[Jung et~al.(2024)Jung, Brahman, and Choi]{jung2024trustescalatellmjudges}
Jaehun Jung, Faeze Brahman, and Yejin Choi.
\newblock Trust or escalate: Llm judges with provable guarantees for human agreement.
\newblock \emph{arXiv preprint: 2407.18370}, 2024.

\bibitem[Kamalloo et~al.(2023)Kamalloo, Dziri, Clarke, and Rafiei]{kamalloo-etal-2023-evaluating}
Ehsan Kamalloo, Nouha Dziri, Charles Clarke, and Davood Rafiei.
\newblock Evaluating open-domain question answering in the era of large language models.
\newblock In Anna Rogers, Jordan Boyd-Graber, and Naoaki Okazaki, editors, \emph{Proceedings of the 61st Annual Meeting of the Association for Computational Linguistics (Volume 1: Long Papers)}, pages 5591--5606, Toronto, Canada, July 2023. Association for Computational Linguistics.
\newblock \doi{10.18653/v1/2023.acl-long.307}.
\newblock URL \url{https://aclanthology.org/2023.acl-long.307}.

\bibitem[Liu et~al.(2016)Liu, Lowe, Serban, Noseworthy, Charlin, and Pineau]{liu-etal-2016-evaluate}
Chia-Wei Liu, Ryan Lowe, Iulian Serban, Mike Noseworthy, Laurent Charlin, and Joelle Pineau.
\newblock How {NOT} to evaluate your dialogue system: An empirical study of unsupervised evaluation metrics for dialogue response generation.
\newblock In Jian Su, Kevin Duh, and Xavier Carreras, editors, \emph{Proceedings of the 2016 Conference on Empirical Methods in Natural Language Processing}, pages 2122--2132, Austin, Texas, November 2016. Association for Computational Linguistics.
\newblock \doi{10.18653/v1/D16-1230}.
\newblock URL \url{https://aclanthology.org/D16-1230}.

\bibitem[Lohr(2021)]{lohr2021sampling}
S.L. Lohr.
\newblock \emph{Sampling: Design and Analysis}.
\newblock Chapman \& Hall/CRC Texts in Statistical Science. CRC Press, 2021.
\newblock ISBN 9781000478235.
\newblock URL \url{https://books.google.com/books?id=DahGEAAAQBAJ}.

\bibitem[Newey and McFadden(1994)]{NEWEY19942111}
Whitney~K. Newey and Daniel McFadden.
\newblock Chapter 36 large sample estimation and hypothesis testing.
\newblock volume~4 of \emph{Handbook of Econometrics}, pages 2111--2245. Elsevier, 1994.
\newblock \doi{https://doi.org/10.1016/S1573-4412(05)80005-4}.
\newblock URL \url{https://www.sciencedirect.com/science/article/pii/S1573441205800054}.

\bibitem[Neyman(1934)]{neyman}
Jerzy Neyman.
\newblock On the two different aspects of the representative method: The method of stratified sampling and the method of purposive selection.
\newblock \emph{Journal of the Royal Statistical Society}, 97\penalty0 (4):\penalty0 558--625, 1934.
\newblock ISSN 09528385.
\newblock URL \url{http://www.jstor.org/stable/2342192}.

\bibitem[Niculescu-Mizil and Caruana(2005)]{niculescu2005obtaining}
Alexandru Niculescu-Mizil and Rich Caruana.
\newblock Obtaining calibrated probabilities from boosting.
\newblock In \emph{UAI}, volume~5, pages 413--20, 2005.

\bibitem[Novikova et~al.(2017)Novikova, Du{\v{s}}ek, Cercas~Curry, and Rieser]{novikova-etal-2017-need}
Jekaterina Novikova, Ond{\v{r}}ej Du{\v{s}}ek, Amanda Cercas~Curry, and Verena Rieser.
\newblock Why we need new evaluation metrics for {NLG}.
\newblock In Martha Palmer, Rebecca Hwa, and Sebastian Riedel, editors, \emph{Proceedings of the 2017 Conference on Empirical Methods in Natural Language Processing}, pages 2241--2252, Copenhagen, Denmark, September 2017. Association for Computational Linguistics.
\newblock \doi{10.18653/v1/D17-1238}.
\newblock URL \url{https://aclanthology.org/D17-1238}.

\bibitem[Pang et~al.(2023)Pang, Padmakumar, Sellam, Parikh, and He]{pang-etal-2023-reward}
Richard~Yuanzhe Pang, Vishakh Padmakumar, Thibault Sellam, Ankur Parikh, and He~He.
\newblock Reward gaming in conditional text generation.
\newblock In \emph{Proceedings of the 61st Annual Meeting of the Association for Computational Linguistics (Vlume 1: Long Papers)}, 2023.

\bibitem[Platt et~al.(1999)]{platt1999probabilistic}
John Platt et~al.
\newblock Probabilistic outputs for support vector machines and comparisons to regularized likelihood methods.
\newblock \emph{Advances in large margin classifiers}, 10\penalty0 (3):\penalty0 61--74, 1999.

\bibitem[Raffel et~al.(2020)Raffel, Shazeer, Roberts, Lee, Narang, Matena, Zhou, Li, and Liu]{JMLR:v21:20-074}
Colin Raffel, Noam Shazeer, Adam Roberts, Katherine Lee, Sharan Narang, Michael Matena, Yanqi Zhou, Wei Li, and Peter~J. Liu.
\newblock Exploring the limits of transfer learning with a unified text-to-text transformer.
\newblock \emph{Journal of Machine Learning Research}, 21\penalty0 (140):\penalty0 1--67, 2020.
\newblock URL \url{http://jmlr.org/papers/v21/20-074.html}.

\bibitem[Robins and Rotnitzky(1995)]{aipw_robins}
James~M. Robins and Andrea Rotnitzky.
\newblock Semiparametric efficiency in multivariate regression models with missing data.
\newblock \emph{Journal of the American Statistical Association}, 90\penalty0 (429):\penalty0 122--129, 1995.
\newblock ISSN 01621459.
\newblock URL \url{http://www.jstor.org/stable/2291135}.

\bibitem[Saad-Falcon et~al.(2024)Saad-Falcon, Khattab, Potts, and Zaharia]{saadfalcon2024ares}
Jon Saad-Falcon, Omar Khattab, Christopher Potts, and Matei Zaharia.
\newblock Ares: An automated evaluation framework for retrieval-augmented generation systems.
\newblock \emph{arXiv preprint: 2311.09476}, 2024.

\bibitem[Vaart(1998)]{Vaart_1998}
A.~W. van~der Vaart.
\newblock \emph{Asymptotic Statistics}.
\newblock Cambridge Series in Statistical and Probabilistic Mathematics. Cambridge University Press, 1998.

\bibitem[Wang et~al.(2020)Wang, McCormick, and Leek]{wang2020methods}
Siruo Wang, Tyler~H McCormick, and Jeffrey~T Leek.
\newblock Methods for correcting inference based on outcomes predicted by machine learning.
\newblock \emph{Proceedings of the National Academy of Sciences}, 117\penalty0 (48):\penalty0 30266--30275, 2020.

\bibitem[Willett et~al.(2013)Willett, Lintott, Bamford, Masters, Simmons, Casteels, Edmondson, Fortson, Kaviraj, Keel, Melvin, Nichol, Raddick, Schawinski, Simpson, Skibba, Smith, and Thomas]{galaxy2}
Kyle~W. Willett, Chris~J. Lintott, Steven~P. Bamford, Karen~L. Masters, Brooke~D. Simmons, Kevin R.~V. Casteels, Edward~M. Edmondson, Lucy~F. Fortson, Sugata Kaviraj, William~C. Keel, Thomas Melvin, Robert~C. Nichol, M.~Jordan Raddick, Kevin Schawinski, Robert~J. Simpson, Ramin~A. Skibba, Arfon~M. Smith, and Daniel Thomas.
\newblock {Galaxy Zoo 2: detailed morphological classifications for 304 122 galaxies from the Sloan Digital Sky Survey}.
\newblock \emph{Monthly Notices of the Royal Astronomical Society}, 435\penalty0 (4):\penalty0 2835--2860, 09 2013.
\newblock ISSN 0035-8711.
\newblock \doi{10.1093/mnras/stt1458}.
\newblock URL \url{https://doi.org/10.1093/mnras/stt1458}.

\bibitem[Wooldridge(2001)]{Wooldridge2001ASYMPTOTICPO}
Jeffrey~M. Wooldridge.
\newblock Asymptotic properties of weighted m-estimators for standard stratified samples.
\newblock \emph{Econometric Theory}, 17:\penalty0 451 -- 470, 2001.
\newblock URL \url{https://api.semanticscholar.org/CorpusID:123148497}.

\bibitem[Xue et~al.(2020)Xue, Constant, Roberts, Kale, Al-Rfou, Siddhant, Barua, and Raffel]{xue2020mt5}
Linting Xue, Noah Constant, Adam Roberts, Mihir Kale, Rami Al-Rfou, Aditya Siddhant, Aditya Barua, and Colin Raffel.
\newblock mt5: A massively multilingual pre-trained text-to-text transformer.
\newblock \emph{arXiv preprint arXiv:2010.11934}, 2020.

\bibitem[York et~al.(2000)York, Adelman, John E.~Anderson, Anderson, Annis, Bahcall, Bakken, Barkhouser, Bastian, Berman, Boroski, Bracker, Briegel, Briggs, Brinkmann, Brunner, Burles, Carey, Carr, Castander, Chen, Colestock, Connolly, Crocker, Csabai, Czarapata, Davis, Doi, Dombeck, Eisenstein, Ellman, Elms, Evans, Fan, Federwitz, Fiscelli, Friedman, Frieman, Fukugita, Gillespie, Gunn, Gurbani, de~Haas, Haldeman, Harris, Hayes, Heckman, Hennessy, Hindsley, Holm, Holmgren, hao Huang, Hull, Husby, Ichikawa, Ichikawa, Željko Ivezić, Kent, Kim, Kinney, Klaene, Kleinman, Kleinman, Knapp, Korienek, Kron, Kunszt, Lamb, Lee, Leger, Limmongkol, Lindenmeyer, Long, Loomis, Loveday, Lucinio, Lupton, MacKinnon, Mannery, Mantsch, Margon, McGehee, McKay, Meiksin, Merelli, Monet, Munn, Narayanan, Nash, Neilsen, Neswold, Newberg, Nichol, Nicinski, Nonino, Okada, Okamura, Ostriker, Owen, Pauls, Peoples, Peterson, Petravick, Pier, Pope, Pordes, Prosapio, Rechenmacher, Quinn, Richards, Richmond, Rivetta, Rockosi,
  Ruthmansdorfer, Sandford, Schlegel, Schneider, Sekiguchi, Sergey, Shimasaku, Siegmund, Smee, Smith, Snedden, Stone, Stoughton, Strauss, Stubbs, SubbaRao, Szalay, Szapudi, Szokoly, Thakar, Tremonti, Tucker, Uomoto, Berk, Vogeley, Waddell, i~Wang, Watanabe, Weinberg, Yanny, and Yasuda]{York_2000}
Donald~G. York, J.~Adelman, Jr. John E.~Anderson, Scott~F. Anderson, James Annis, Neta~A. Bahcall, J.~A. Bakken, Robert Barkhouser, Steven Bastian, Eileen Berman, William~N. Boroski, Steve Bracker, Charlie Briegel, John~W. Briggs, J.~Brinkmann, Robert Brunner, Scott Burles, Larry Carey, Michael~A. Carr, Francisco~J. Castander, Bing Chen, Patrick~L. Colestock, A.~J. Connolly, J.~H. Crocker, István Csabai, Paul~C. Czarapata, John~Eric Davis, Mamoru Doi, Tom Dombeck, Daniel Eisenstein, Nancy Ellman, Brian~R. Elms, Michael~L. Evans, Xiaohui Fan, Glenn~R. Federwitz, Larry Fiscelli, Scott Friedman, Joshua~A. Frieman, Masataka Fukugita, Bruce Gillespie, James~E. Gunn, Vijay~K. Gurbani, Ernst de~Haas, Merle Haldeman, Frederick~H. Harris, J.~Hayes, Timothy~M. Heckman, G.~S. Hennessy, Robert~B. Hindsley, Scott Holm, Donald~J. Holmgren, Chi hao Huang, Charles Hull, Don Husby, Shin-Ichi Ichikawa, Takashi Ichikawa, Željko Ivezić, Stephen Kent, Rita S.~J. Kim, E.~Kinney, Mark Klaene, A.~N. Kleinman, S.~Kleinman, G.~R.
  Knapp, John Korienek, Richard~G. Kron, Peter~Z. Kunszt, D.~Q. Lamb, B.~Lee, R.~French Leger, Siriluk Limmongkol, Carl Lindenmeyer, Daniel~C. Long, Craig Loomis, Jon Loveday, Rich Lucinio, Robert~H. Lupton, Bryan MacKinnon, Edward~J. Mannery, P.~M. Mantsch, Bruce Margon, Peregrine McGehee, Timothy~A. McKay, Avery Meiksin, Aronne Merelli, David~G. Monet, Jeffrey~A. Munn, Vijay~K. Narayanan, Thomas Nash, Eric Neilsen, Rich Neswold, Heidi~Jo Newberg, R.~C. Nichol, Tom Nicinski, Mario Nonino, Norio Okada, Sadanori Okamura, Jeremiah~P. Ostriker, Russell Owen, A.~George Pauls, John Peoples, R.~L. Peterson, Donald Petravick, Jeffrey~R. Pier, Adrian Pope, Ruth Pordes, Angela Prosapio, Ron Rechenmacher, Thomas~R. Quinn, Gordon~T. Richards, Michael~W. Richmond, Claudio~H. Rivetta, Constance~M. Rockosi, Kurt Ruthmansdorfer, Dale Sandford, David~J. Schlegel, Donald~P. Schneider, Maki Sekiguchi, Gary Sergey, Kazuhiro Shimasaku, Walter~A. Siegmund, Stephen Smee, J.~Allyn Smith, S.~Snedden, R.~Stone, Chris Stoughton,
  Michael~A. Strauss, Christopher Stubbs, Mark SubbaRao, Alexander~S. Szalay, Istvan Szapudi, Gyula~P. Szokoly, Anirudda~R. Thakar, Christy Tremonti, Douglas~L. Tucker, Alan Uomoto, Dan~Vanden Berk, Michael~S. Vogeley, Patrick Waddell, Shu i~Wang, Masaru Watanabe, David~H. Weinberg, Brian Yanny, and Naoki Yasuda.
\newblock The sloan digital sky survey: Technical summary.
\newblock \emph{The Astronomical Journal}, 120\penalty0 (3):\penalty0 1579, sep 2000.
\newblock \doi{10.1086/301513}.
\newblock URL \url{https://dx.doi.org/10.1086/301513}.

\bibitem[Zheng et~al.(2023)Zheng, Chiang, Sheng, Zhuang, Wu, Zhuang, Lin, Li, Li, Xing, Zhang, Gonzalez, and Stoica]{zheng2023judging}
Lianmin Zheng, Wei-Lin Chiang, Ying Sheng, Siyuan Zhuang, Zhanghao Wu, Yonghao Zhuang, Zi~Lin, Zhuohan Li, Dacheng Li, Eric Xing, Hao Zhang, Joseph~E. Gonzalez, and Ion Stoica.
\newblock Judging {LLM}-as-a-judge with {MT}-bench and chatbot arena.
\newblock In \emph{Thirty-seventh Conference on Neural Information Processing Systems Datasets and Benchmarks Track}, 2023.
\newblock URL \url{https://openreview.net/forum?id=uccHPGDlao}.

\bibitem[Zrnic and Candès(2024{\natexlab{a}})]{cross-ppi}
Tijana Zrnic and Emmanuel~J. Candès.
\newblock Cross-prediction-powered inference.
\newblock \emph{Proceedings of the National Academy of Sciences}, 121\penalty0 (15):\penalty0 e2322083121, 2024{\natexlab{a}}.
\newblock \doi{10.1073/pnas.2322083121}.
\newblock URL \url{https://www.pnas.org/doi/abs/10.1073/pnas.2322083121}.

\bibitem[Zrnic and Candès(2024{\natexlab{b}})]{zrnic2024active}
Tijana Zrnic and Emmanuel~J. Candès.
\newblock Active statistical inference.
\newblock \emph{arXiv preprint: 2403.03208}, 2024{\natexlab{b}}.

\end{thebibliography}
\appendix

\newpage
\section{Proofs}
\numberwithin{theorem}{section}
\numberwithin{lemma}{section}
\numberwithin{definition}{section}

\label{app:proofs}

We begin by defining the basic regularity conditions that we assume $\ell_\theta$ satisfies:
 \begin{definition}[Regularity conditions of $\ell_\theta$]
 \label{def:regularity}
 Assume that 
 \begin{enumerate}[leftmargin=*,label=(\roman*)]
     \item $\Theta$ is a compact subset of $\mathbb{R}^d$;
     \item The minimizer $\theta^* \in \mathrm{int}(\Theta)$ is unique with $\mathbb{E}[\nabla \ell_{\theta^*}(X, Y)] = 0$;
     \item For all $(x, y) \in \mathcal{X} \times \mathcal{Y}$, $\ell_\theta(x, y)$ is twice continuously differentiable on $\mathrm{int}(\Theta)$;
     \item For all $\theta \in \Theta$, $\ell_\theta(X, Y)$, $\frac{\partial\ell_\theta(X, Y)}{\partial\theta_i}$, and $\frac{\partial^2\ell_\theta(X, Y)}{\partial\theta_i\partial \theta_j}$ have finite expectations;
     \item $\mathbb{E}[\nabla^2 \ell_\theta(X, Y)]$ is non-singular.
 \end{enumerate}
\end{definition}

These regularity conditions are fairly mild---for example, it is straightforward to verify that the loss of the mean estimator, $\frac{1}{2}\Vert y - \theta \Vert^2$, satisfies Definition~\ref{def:regularity}. We note, however, that \texttt{PPI++} can  be applied to merely ``smooth enough'' losses, which also include non-continuously differentiable functions losses like the quantile loss. As we primarily focus on mean estimation in this work, for simplicity of analysis we only consider (the still broad class) of losses satisfying Definition~\ref{def:regularity}, though  our results can be expected to readily extend to the more general case following a similar    treatment as in \cite{angelopoulos2023ppi, Vaart_1998}.

To support our theoretical results, we also provide the following two lemmas.

\begin{lemma}[Slutsky's Theorem, general form]
\label{lemma:slutksy}
Let $X_n \overset{d}{\rightarrow} X$ and $Y_n \overset{p}{\rightarrow} c$, where $X_n$, $X$, and $Y_n$ are random vectors, and $c$ is a constant vector. Then for any continuous function $g$, $g(X_n, Y_n) \overset{d}{\rightarrow} g(X, c)$.
\end{lemma}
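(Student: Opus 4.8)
I would prove this in two steps. First, I would upgrade the two given marginal convergences into the joint statement $(X_n, Y_n) \overset{d}{\rightarrow} (X, c)$; then I would apply the continuous mapping theorem to the continuous map $g$, which immediately yields $g(X_n, Y_n) \overset{d}{\rightarrow} g(X, c)$ (since $g$ is continuous everywhere, its set of discontinuities has probability zero under the law of $(X, c)$, so the hypothesis of the CMT holds automatically). Thus the entire content of the lemma is the joint convergence, and the key feature being exploited is that $Y_n$ converges to a \emph{constant} $c$, not to an arbitrary random vector.

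For the joint convergence, I would use the bounded-Lipschitz (portmanteau) characterization of weak convergence: it suffices to show $\mathbb{E}[h(X_n, Y_n)] \to \mathbb{E}[h(X, c)]$ for every bounded Lipschitz function $h$. Splitting,
\begin{equation}
    \big|\mathbb{E}[h(X_n, Y_n)] - \mathbb{E}[h(X, c)]\big| \;\leq\; \mathbb{E}\big[\,\big|h(X_n, Y_n) - h(X_n, c)\big|\,\big] \;+\; \big|\mathbb{E}[h(X_n, c)] - \mathbb{E}[h(X, c)]\big|,
\end{equation}
the second term tends to $0$ because $x \mapsto h(x, c)$ is bounded and continuous and $X_n \overset{d}{\rightarrow} X$. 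For the first term, if $L$ is a Lipschitz constant and $M$ a uniform bound for $h$, then $\big|h(X_n, Y_n) - h(X_n, c)\big| \leq \min\{L\|Y_n - c\|,\, 2M\}$, which tends to $0$ in probability (because $Y_n \overset{p}{\rightarrow} c$) and is uniformly bounded; hence its expectation tends to $0$ — for instance, for any $\varepsilon > 0$ it is at most $\varepsilon + 2M\,\mathbb{P}\big(L\|Y_n - c\| > \varepsilon\big) \to \varepsilon$. Letting $\varepsilon \downarrow 0$ shows the first term vanishes, so $(X_n, Y_n) \overset{d}{\rightarrow} (X, c)$, and the CMT then finishes the argument.

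The only mildly delicate point — the place I would be most careful — is this last passage from convergence in probability of a uniformly bounded sequence to convergence of its expectation; choosing the portmanteau test class to be the \emph{Lipschitz} functions (rather than all bounded continuous functions) is exactly what makes the increment $\big|h(X_n, Y_n) - h(X_n, c)\big|$ directly controllable by $\|Y_n - c\|$ and keeps this step clean. An alternative would be a subsequence/Skorokhod-representation route, but since $Y_n$ may be dependent on $X_n$, carrying it through Skorokhod's theorem is more cumbersome than the direct estimate above, so I would stick with the two-step plan.
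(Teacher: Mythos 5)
Your proof is correct and follows the same two-step structure as the paper's: establish the joint convergence $(X_n, Y_n) \overset{d}{\rightarrow} (X, c)$, then apply the continuous mapping theorem. The only difference is that the paper simply cites Theorem~2.7 of van der Vaart for the joint convergence, whereas you supply a self-contained (and correct) bounded-Lipschitz portmanteau argument for that step.
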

\begin{proof}
By Theorem 2.7 of \cite{Vaart_1998} we have $(X_n, Y_n) \overset{d}{\rightarrow} (X, c)$. The continuous mapping theorem then implies that $g(X_n, Y_n)  \overset{d}{\rightarrow} g( X, c)$.
\end{proof}

\begin{lemma}
\label{thm:consistency}
Under the regularity conditions of Definition~\ref{def:regularity}, we have that  $\hat{\theta}_{\hat{\lambda}}^{\mathrm{SPP}} \overset{p}{\rightarrow} \theta^*$.
\end{lemma}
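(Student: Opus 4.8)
The plan is to deduce consistency from the standard consistency theorem for M-estimators via uniform convergence (e.g., Theorem~5.7 of \cite{Vaart_1998}), whose hypotheses are: (i) the population objective $L^*(\theta) := \mathbb{E}[\ell_\theta(X, Y)]$ has a well-separated unique minimizer at $\theta^*$, and (ii) $\sup_{\theta \in \Theta}|L_{\hat{\lambda}}^{\mathrm{SPP}}(\theta) - L^*(\theta)| \overset{p}{\rightarrow} 0$. Condition (i) is immediate from Definition~\ref{def:regularity}: $\Theta$ is compact, $L^*$ is continuous (by twice continuous differentiability of $\ell_\theta$ in $\theta$ together with the finite-expectation conditions, via dominated convergence), and $\theta^*$ is the unique minimizer, so for every $\epsilon > 0$ the infimum of $L^*$ over the compact set $\{\theta : \|\theta - \theta^*\| \geq \epsilon\}$ is attained and strictly exceeds $L^*(\theta^*)$.

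For condition (ii) I would first record the consequence of the stratified sampling model: within each stratum $k$ the $N_k$ points $\tilde{X}_{ik}$ are i.i.d.\ from the law of $X$ given $X \in \mathcal{A}_k$, the $n_k$ points $(X_{ik}, Y_{ik})$ are i.i.d.\ from the law of $(X, Y)$ given $X \in \mathcal{A}_k$, and $n_k, N_k \to \infty$ since $n_k/n \to \rho_k > 0$ and $N_k/N \to \tilde{\rho}_k > 0$. A uniform law of large numbers (justified by compactness of $\Theta$, continuity of $\ell_\theta$ in $\theta$, and a finite envelope obtained from Definition~\ref{def:regularity}; cf.\ \cite{Wooldridge2001ASYMPTOTICPO}) then gives, uniformly in $\theta$ and for each $k$,
\begin{align*}
\frac{1}{N_k}\sum_{i=1}^{N_k}\ell_\theta(\tilde{X}_{ik}, f(\tilde{X}_{ik})) &\overset{p}{\rightarrow} \mathbb{E}[\ell_\theta(X, f(X)) \mid X \in \mathcal{A}_k], \\
\frac{1}{n_k}\sum_{i=1}^{n_k}\ell_\theta(X_{ik}, Y_{ik}) &\overset{p}{\rightarrow} \mathbb{E}[\ell_\theta(X, Y) \mid X \in \mathcal{A}_k], \\
\frac{1}{n_k}\sum_{i=1}^{n_k}\ell_\theta(X_{ik}, f(X_{ik})) &\overset{p}{\rightarrow} \mathbb{E}[\ell_\theta(X, f(X)) \mid X \in \mathcal{A}_k].
\end{align*}

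Next I would fold in the estimated tuning parameters. Because $\hat{\lambda}_k$ is a scalar not depending on $\theta$, and $\sup_{\theta}\big| \frac{1}{n_k}\sum_i \ell_\theta(X_{ik}, f(X_{ik})) \big| = O_p(1)$ (it is within $o_p(1)$ of $\sup_\theta |\mathbb{E}[\ell_\theta(X, f(X)) \mid \mathcal{A}_k]|$, which is finite since a continuous function is bounded on the compact $\Theta$), the term $(\hat{\lambda}_k - \lambda_k)\cdot \frac{1}{n_k}\sum_i \ell_\theta(X_{ik}, f(X_{ik}))$ is $o_p(1)$ uniformly in $\theta$, and likewise for the autorater-data term. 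Combining this with Lemma~\ref{lemma:slutksy} and the displayed limits, the $\lambda_k$-weighted autorater terms cancel in the limit (this is exactly the conditional unbiasedness of the prediction-powered loss), so $L_{k, \hat{\lambda}_k}^{\mathrm{PP}}(\theta) \overset{p}{\rightarrow} \mathbb{E}[\ell_\theta(X, Y) \mid X \in \mathcal{A}_k]$ uniformly in $\theta$. Summing over $k$ with weights $w_k = \mathbb{P}(X \in \mathcal{A}_k)$ and invoking the law of total expectation yields $\sup_\theta |L_{\hat{\lambda}}^{\mathrm{SPP}}(\theta) - L^*(\theta)| \overset{p}{\rightarrow} 0$, and the consistency theorem then gives $\hat{\theta}_{\hat{\lambda}}^{\mathrm{SPP}} \overset{p}{\rightarrow} \theta^*$.

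The step I expect to require the most care is the uniform (as opposed to pointwise) convergence in condition (ii), and specifically the interaction between the random $\hat{\lambda}_k$ and the supremum over $\theta$: controlling it is where the envelope/domination consequences of Definition~\ref{def:regularity} and the $O_p(1)$ bound on the uniform autorater-loss averages do the real work. The stratification itself is harmless, as it merely replaces a single law of large numbers by a finite collection of within-stratum ones, each of which converges because $n_k, N_k \to \infty$.
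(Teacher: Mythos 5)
Your proposal is correct and follows essentially the same route as the paper: the paper's proof likewise applies a uniform weak law of large numbers to each term of $L_{\hat{\lambda}}^{\mathrm{SPP}}$ (using $\hat{\lambda}_k \overset{p}{\rightarrow} \lambda_k$ with $\lambda_k$ constant) to conclude $\sup_{\theta \in \Theta}\Vert L_{\hat{\lambda}}^{\mathrm{SPP}}(\theta) - \mathbb{E}[\ell_\theta(X,Y)]\Vert \overset{p}{\rightarrow} 0$, and then invokes a standard consistency theorem for M-estimators (Theorem 2.1 of Newey--McFadden rather than your Theorem 5.7 of van der Vaart, but the hypotheses---compactness, continuity, uniqueness of $\theta^*$, uniform convergence---are the same). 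Your write-up simply makes explicit the within-stratum limits, the cancellation of the $\lambda_k$-weighted autorater terms, and the $O_p(1)$ control needed to absorb $\hat{\lambda}_k - \lambda_k$, all of which the paper compresses into one sentence.
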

\begin{proof}
Under the regularity conditions of Definition~\ref{def:regularity} and the fact that $\hat{\lambda}_k \overset{p}{\rightarrow} \lambda_k$ where $\lambda_k$ is constant, the uniform weak law of large numbers can be applied to each term in $L_{\hat{\lambda}}^\mathrm{SPP}(\theta)$ so that
\begin{equation*}
    \sup_{\theta \in \Theta} \Big \Vert L_{\hat{\lambda}}^{\mathrm{SPP}}(\theta) - \mathbb{E}[\ell_\theta(X, Y)] \Big \Vert \overset{p}{\rightarrow} 0.
\end{equation*}
As $\theta^*$ is unique, $\Theta$ is compact, and $\ell_{\theta}$ is continuous it follows from Theorem 2.1 of \cite{NEWEY19942111} that $$\hat{\theta}_{\hat{\lambda}}^\mathrm{SPP} \overset{p}{\rightarrow} \theta^*.$$
\end{proof}

\subsection{Proof of Theorem~\ref{thm:weighted_m_estimator}}

\begin{proof}
 For ease of notation,  we will define
\begin{align*}
    \tilde{L}_{N_k}^f(\theta) &= \frac{1}{N_k} \sum_{i=1}^{N_k} \ell_\theta(\tilde{X}_{ik}, f(\tilde{X}_{ik}) \\
    L_{n_k}(\theta)  &= \frac{1}{n_k} \sum_{i=1}^{n_k} \ell_\theta(X_{ik}, Y_{ik}) \\
    L_{n_k}^f(\theta) &= \frac{1}{n_k} \sum_{i=1}^{n_k} \ell_\theta(X_{ik}, f(X_{ik})).
    \end{align*}
We will also use the following shorthand for the conditional gradients:
\begin{align*}
&\nabla \ell_{k,\theta} := \nabla\ell_\theta(X, Y) ~\vert~ X \in \mathcal{A}_k \quad \text{and}\\
&\nabla \ell_{k,\theta}^f := \nabla \ell_\theta(X, f(X)) ~\vert~ X \in \mathcal{A}_k.
\end{align*}

As samples are i.i.d. within each stratum, the  CLT gives
\begin{align}
    \sqrt{N_k}\Big(\nabla\tilde{L}^f_{N_k}(\theta^*) - \mathbb{E}[\nabla \ell_{k,\theta^*}^f] \Big) &\xrightarrow{d} \mathcal{N}\left(0, \mathrm{Cov}\left(\nabla \ell_{k,\theta^*}^f\right)\right),\label{eq:clt1}
\end{align}

and
\begin{align}
\sqrt{n_k}
\begin{bmatrix}
\nabla{L}_{n_k}(\theta^*) - \mathbb{E}[\nabla \ell_{k,\theta^*}] \\
\nabla{L}^f_{n_k}(\theta^*) - \mathbb{E}[\nabla \ell_{k,\theta^*}^f]
\end{bmatrix} 
\xrightarrow{d}
\mathcal{N}\left(
\begin{bmatrix}
0 \\
0
\end{bmatrix},
\mathrm{Cov}\left(\begin{bmatrix}
\nabla \ell_{k,\theta^*} \\
\nabla \ell_{k, \theta^*}^f
\end{bmatrix} \right )
\right ).\label{eq:clt2}
\end{align}

Since samples $S_n$ and $\tilde{S}_N$ are  independent from each other, we also have that \eqref{eq:clt1} and \eqref{eq:clt2} converge jointly. Applying Lemma~\ref{lemma:slutksy} for the following continuous function of $\hat{\lambda}_k \xrightarrow{p} \lambda_k$ then gives
\begin{align}
 &\sqrt{n} \left(  \hat{\lambda}_k \cdot \mathcal{N}\Big(\nabla\tilde{L}^f_{N_k}(\theta^*) - \mathbb{E}[\nabla \ell_{k,\theta^*}^f] \Big) + \begin{bmatrix}
        1 \\
        -\hat{\lambda}_k
    \end{bmatrix}^\top
\begin{bmatrix}
\nabla\tilde{L}_{n}(\theta^*) - \mathbb{E}[\nabla \ell_{k,\theta^*}] \\
\nabla\tilde{L}^f_{n_k}(\theta^*) - \mathbb{E}[\nabla \ell_{k,\theta^*}^f]
\end{bmatrix} \right ) \nonumber \\
&\hspace{2cm}\xrightarrow{d} \mathcal{N}\left(0, \lambda_k^2\frac{n}{N_k}\mathrm{Cov}(\nabla \ell_{k,\theta^*}^f)\right) + \mathcal{N}\left (0, \frac{n}{n_k}\mathrm{Cov}(\nabla \ell_{k,\theta^*} - \lambda_k \nabla \ell_{k, \theta^*}^f)\right)  \\
&\hspace{2cm} = \mathcal{N}\left(0, \frac{r}{\tilde{\rho}_k}\cdot V^{\lambda_k}_{k,f,\theta^*} + \frac{1}{\rho_k}V^{\lambda_k}_{k, \Delta, \theta^*}\right).\label{eq:normality}
\end{align}

Since $\theta^*$ satisfies $\mathbb{E}[\nabla \ell_{\theta^*}] = 0$, by the law of total expectation
\begin{equation}
    \sum_{k=1}^K w_k \mathbb{E}[\nabla \ell_{k, \theta^*}] =  \mathbb{E}[\nabla \ell_{\theta^*}]= 0.
\end{equation}
Using this fact, we can write
\begin{equation}
\label{eq:zero_gradient}
    \nabla L_{\hat{\lambda}}^\mathrm{SPP}(\theta) =  \sum_{k=1}^K w_k\left(\nabla\left(\hat{\lambda}_k \tilde{L}_{N_k}^f(\theta) +  L_{n_k}(\theta) - \hat{{\lambda}}_k L^f_{n_k}(\theta)\right) - \mathbb{E}[\nabla \ell_{k, \theta^*}]\right).
\end{equation}

Since samples across the  $K$ (where here $K$ is a constant) stratas are independent, combining the results of \eqref{eq:normality}  with \eqref{eq:zero_gradient} yields
\begin{equation}
\label{eq:B_w}
    \sqrt{n} \nabla L_{\hat{\lambda}}^\mathrm{SPP}(\theta^*) \xrightarrow{d} \mathcal{N}(0, B_w^\lambda).
\end{equation}

Applying the mean value theorem, we  have
\begin{equation}
\nabla L_{\hat{\lambda}}^\mathrm{SPP}(\hat{\theta}_{\hat{\lambda}}^{\mathrm{SPP}}) = \nabla L_{\hat{\lambda}}^\mathrm{SPP}(\theta^*) + \nabla^2L_{\hat{\lambda}}^\mathrm{SPP}(\tilde{\theta})(\hat{\theta}_{\hat{\lambda}}^{\mathrm{SPP}} - \theta^*)
\end{equation}
for some $\tilde{\theta}$ between $\hat{\theta}_{\hat{\lambda}}^{\mathrm{SPP}}$ and $\theta^*$. Let $A^\dagger$ represent the pseudoinverse of $A$. Then
\begin{equation}
    \hat{\theta}_{\hat{\lambda}}^{\mathrm{SPP}} - \theta^* = \nabla^2L_{\hat{\lambda}}^\mathrm{SPP}(\tilde{\theta})^\dagger\left(\nabla L_{\hat{\lambda}}^\mathrm{SPP}(\hat{\theta}_{\hat{\lambda}}^{\mathrm{SPP}})  - \nabla L_{\hat{\lambda}}^\mathrm{SPP}(\theta^*)\right).
\end{equation}
As $\hat{\theta}_{\hat{\lambda}}^{\mathrm{SPP}} \overset{p}{\rightarrow} \theta^*$  per Lemma~\ref{thm:consistency}, we have $\tilde{\theta} \overset{p}{\rightarrow} \theta^*$. Under the regularity conditions of Definition~\ref{def:regularity} and the fact that $\hat{\lambda}_k \xrightarrow{p} \lambda_k$, an application of the uniform weak law of large numbers and the continuous mapping theorem then gives
\begin{align}
  \nabla^2 L_{\hat{\lambda}}^\mathrm{SPP}(\tilde{\theta})^\dagger \overset{p}{\rightarrow}  \nabla^2\mathbb{E}[\ell_{\theta*}]^{-1},
\end{align}
which is a constant term. Furthermore, by the law of total expectation,
\begin{equation}
    \mathbb{E}[\nabla^2 \ell_{\theta^*}]^{-1} = \left(\sum_{k=1}^K w_k H_{k,\theta^*}\right)^{-1} = A_w^{-1}.
\end{equation}

Finally, the fact that  $\hat{\theta}_{\hat{\lambda}}^{\mathrm{SPP}} \overset{p}{\rightarrow} \theta^*$ and $\theta^* \in \mathrm{int}(\Theta)$, together with the fact that $\hat{\theta}_{\hat{\lambda}}^{\mathrm{SPP}}$ is a minimum of $L_{\hat{\lambda}}^\mathrm{SPP}$, implies that $\nabla L_{\hat{\lambda}}^\mathrm{SPP}(\hat{\theta}_{\hat{\lambda}}^{\mathrm{SPP}}) \overset{p}{\rightarrow} 0$. 
Combining these results  with \eqref{eq:B_w}  via Lemma~\ref{lemma:slutksy} and the fact that $A_w^{-1}$ is symmetric for twice continuously differentiable $\ell_\theta$ gives
\begin{equation}
    \sqrt{n}(\hat{\theta} - \theta^*) \xrightarrow{d} \mathcal{N}(0, A_w^{-1}B_w^\lambda (A^{-1}_w)^\top) = \mathcal{N}(0, A_w^{-1}B_w^\lambda A^{-1}_w).
\end{equation}
\end{proof}

\subsection{Proof of Corollary~\ref{cor:stratppi_ci}}
\begin{proof}
The regularity conditions of Definition~\ref{def:regularity} allow us to apply Lemma 4.3 of \cite{NEWEY19942111} to show that each of the plug-in estimates for each stratum term satisfies
\begin{align*}
    \widehat{H}_{k, \hat{\theta}^\mathrm{SPP}} \overset{p}{\rightarrow} H_{k, \theta^*} \quad\text{and}\quad
    \widehat{V}_{k, f, \hat{\theta}^\mathrm{SPP}}^{\lambda_k} \overset{p}{\rightarrow} \widehat{V}_{k, f, \theta^*}^{\lambda_k}\quad\text{and}\quad
    \widehat{V}_{k, \Delta, \hat{\theta}^\mathrm{SPP}}^{\lambda_k} \overset{p}{\rightarrow} \widehat{V}_{k, \Delta, \theta^*}^{\lambda_k} 
\end{align*}
which  implies that the weighted plug-in estimate for $\hat{\Sigma}_{w}^{\hat{\lambda}}$ satisfies $\hat{\Sigma}_{w}^{\hat{\lambda}} \overset{p}{\rightarrow} \Sigma_w^\lambda$.  \eqref{eq:ci_strat} is thus equivalent to taking the $\left(\frac{\alpha}{2}, 1 - \frac{\alpha}{2}\right)$ quantiles of the asymptotic normal distribution of $\hat{\theta}^\mathrm{SPP}$, implying \eqref{eq:validity}.\looseness=-1
\end{proof}

\subsection{Proof of Proposition~\ref{prop:smaller_variance}}

\begin{proof}
In the special case of the square loss, the Hessian is the identity matrix. Therefore, simplifying and applying the linearity of the trace, we have
\begin{align}
    \mathrm{Tr}(\Sigma_w^\lambda) &= \sum_{k=1}^K w_k^2 \mathrm{Tr}\left(\frac{r}{\tilde{\rho}_k} \cdot  V_{k,f,\theta^*}^{\lambda_k} + \frac{1}{\rho_k}V_{k,\Delta,\theta^*}^{\lambda_k}\right)  \\
    &= \sum_{k=1}^K w_k \mathrm{Tr}\left(r \cdot  V_{k,f,\theta^*}^{\lambda_k} + V_{k,\Delta,\theta^*}^{\lambda_k}\right)  \\
    &= r\sum_{k=1}^K w_k \mathrm{Tr}\left(V_{k,f,\theta^*}^{\lambda_k}\right)  +  \sum_{k=1}^K w_k \mathrm{Tr}\left(V_{k,\Delta,\theta^*}^{\lambda_k}\right) \\
    &= r\sum_{j=1}^d \sum_{k=1}^K w_k V_{k,f,\theta^*, jj}^{\lambda_k} +  \sum_{j=1}^d\sum_{k=1}^K w_k V_{k,\Delta,\theta^*, jj}^{\lambda_k} \label{eq:trace_variance}
\end{align}
where we include the subscript $jj$ to index the diagonal variance terms of both covariance matrices.
For ease of notation, denote the conditional variances as
\begin{align*}
& V_{f,j} \mid Z = k := \lambda^2 \nabla\ell_{\theta_j^*}(X, f(X))  \mid X \in \mathcal{A}_k \quad \text{and}\\
& V_{\Delta,j} \mid Z = k := \nabla \ell_{\theta_j^*}(X, Y) - \lambda \nabla \ell_{\theta_j^*}(X, f(X)) \mid X \in \mathcal{A}_k.
\end{align*}

Then we can write  \eqref{eq:trace_variance} as
\begin{align}
r\sum_{j=1}^d \sum_{k=1}^K w_k V_{k,f,\theta^*, jj}^{\lambda_k} +  \sum_{j=1}^d\sum_{k=1}^K w_k V_{k,\Delta,\theta^*, jj}^{\lambda_k}  = 
    \sum_{j=1}^d r\mathbb{E}[\mathrm{Var}(V_{f, j} \mid Z)] + \mathbb{E}[\mathrm{Var}(V_{\Delta,j} \mid Z)], 
\end{align}
and apply the law of total variance to get
\begin{align}
    \mathrm{Tr}(\Sigma_w^\lambda) &=  \sum_{j=1}^d r\mathbb{E}[\mathrm{Var}(V_{f, j} \mid Z)] + \mathbb{E}[\mathrm{Var}(V_{\Delta,j} \mid Z)]  \\
    &= \sum_{j=1}^d  r(\mathrm{Var}(V_{f, j}) - \mathrm{Var}(\mathbb{E}[V_{f,j} \mid Z])) + (\mathrm{Var}(V_{\Delta,j}) - \mathrm{Var}(\mathbb{E}[V_{\Delta,j} \mid Z]))  \\
    &\leq \sum_{j=1}^d r\mathrm{Var}(V_{f, j}) + \mathrm{Var}(V_{\Delta,j}) \label{eq:total_variance}\\
    &= \mathrm{Tr}(\Sigma^\lambda).
\end{align}
Finally, \eqref{eq:total_variance} holds with equality iff both  $\mathrm{Var}(\mathbb{E}[V_{f,j} \mid Z]) = 0$ and $\mathrm{Var}(\mathbb{E}[V_{\Delta,j} \mid Z]) = 0$, which, combined with the assumption that $\mathbb{P}(Z = k) = w_k > 0$ for all $k$, is only satisfied when  both $\mathbb{E}[V_{f,j} \mid Z]$ and $\mathbb{E}[V_{\Delta,j} \mid Z]$ are  constants. 
\end{proof}

\subsection{Proof of Proposition~\ref{prop:lambda}}
\begin{proof}
By linearity of $B_w^{\lambda}$, we can rewrite $\Sigma_w^\lambda = A_w^{-1}B_w^\lambda A_w^{-1}$ as
\begin{align}
    \Sigma_w^{\lambda} &= \sum_{k=1}^K w_k^2 A_w^{-1}\left(\frac{r}{\tilde{\rho}_k} \cdot  V_{k,f,\theta^*}^{\lambda_k} + \frac{1}{\rho_k}V_{k,\Delta,\theta^*}^{\lambda_k}\right) A_w^{-1} \\
    &= \sum_{k=1}^K \frac{w_k^2}{\rho_k} A_w^{-1}\left(\frac{r\rho_k}{\tilde{\rho}_k} \cdot  V_{k,f,\theta^*}^{\lambda_k} + V_{k,\Delta,\theta^*}^{\lambda_k}\right) A_w^{-1} \\ 
    &= \sum_{k=1}^K \frac{w_k^2}{\rho_k} A_w^{-1}\left(\frac{n_k}{N_k} \cdot  V_{k,f,\theta^*}^{\lambda_k} + V_{k,\Delta,\theta^*}^{\lambda_k}\right) A_w^{-1}.
\end{align}
By linearity of the trace, we then also have
\begin{equation}
    \mathrm{Tr}(\Sigma_w^\lambda) = \sum_{k=1}^K \frac{w_k^2}{\rho_k} \mathrm{Tr}\left(A_w^{-1}\left(\frac{n_k}{N_k} \cdot  V_{k,f,\theta^*}^{\lambda_k} + V_{k,\Delta,\theta^*}^{\lambda_k}\right) A_w^{-1}\right).
\end{equation}
As each term in the sum is independent, we can minimize the sum by minimizing each individual term, i.e.,
\begin{equation}
    \lambda^*_k = \argmin_{\lambda_k} \mathrm{Tr}\left(A_w^{-1}\left(\frac{n_k}{N_k} \cdot  V_{k,f,\theta^*}^{\lambda_k} + V_{k,\Delta,\theta^*}^{\lambda_k}\right) A_w^{-1}\right)
\end{equation}
for $k = 1, \ldots, K$.
We can then directly apply Proposition 2 of \cite{angelopoulos2023ppi} to get
\begin{equation}
    \lambda^*_k = \frac{\mathrm{Tr}\left(A_w^{-1}(\mathrm{Cov}(\nabla \ell_{k,\theta^*}, \nabla \ell^f_{k,\theta^*}) + \mathrm{Cov}(\nabla \ell_{k,\theta^*}^f, \nabla \ell_{k,\theta^*}))A_w^{-1}\right)}{2(1 + r_k)\mathrm{Tr}\left(A_w^{-1}\mathrm{Cov}(\nabla \ell^f_{k,\theta^*})A_w^{-1}\right)},
\end{equation}
where $r_k = n_k / N_k$.
\end{proof}

\subsection{Proof of Proposition~\ref{prop:optimal_rho}}

\begin{proof}
Following the same derivation as Proposition~\ref{prop:lambda}, we can rewrite
\begin{align}
    \mathrm{Tr}(\Sigma_w^\lambda) &= \sum_{k=1}^K w_k^2 \mathrm{Tr}\left(A_w^{-1}\left(\frac{r}{\tilde{\rho}_k} \cdot  V_{k,f,\theta^*}^{\lambda_k} +\frac{1}{\rho_k} V_{k,\Delta,\theta^*}^{\lambda_k}\right) A_w^{-1}\right) \\
    &= r\sum_{k=1}^K \frac{w_k^2}{\tilde{\rho_k}} \mathrm{Tr}\left(A_w^{-1}V_{k,f,\theta^*}^{\lambda_k} A_w^{-1}\right) + \sum_{k=1}^K \frac{w_k^2}{\rho_k} \mathrm{Tr}\left(A_w^{-1}V_{k,\Delta,\theta^*}^{\lambda_k} A_w^{-1}\right),
\end{align}
and therefore can optimize $\tilde{\rho}_k$ and $\rho_k$ independently. 

We start with $\rho_k$. Let $z_k = \mathrm{Tr}\left(A_w^{-1}V_{k,\Delta,\theta^*}^{\lambda_k} A_w^{-1}\right)$.
We then solve  the constrained optimization problem
\begin{equation}
    \mathrm{minimize}~~\sum_{k=1}^K \frac{w_k^2}{\rho_k} z_k \quad \text{s.t.} \quad\sum_{k=1}^K \rho_k = 1, \rho_k \geq 0.
\end{equation}
Turning this into a Lagrangian with additional slack variables $s_k \geq 0$, we have
\begin{equation}
    \mathcal{J}(\rho, \mu, \eta, s) = \sum_{k=1}^K \frac{w_k^2}{\rho_k} z_k + \mu \left(\sum_{i=1}^K \rho_k - 1\right) + \sum_{k=1}^K \eta_k(\rho_k - s_k^2)
\end{equation}
Setting $\nabla \mathcal{J}(\rho, \mu, \eta) = 0$ and solving for $\rho_k$ then gives:
\begin{align}
    &\frac{\partial \mathcal{J}}{\partial \rho_k} = -\frac{w_k^2}{\rho_k^2} z_k + \mu + \eta_k = 0\\
    &\frac{\partial \mathcal{J}}{\partial \mu} = 1 - \sum \rho_k = 0 \\
     &\frac{\partial \mathcal{J}}{\partial \eta_k} = \rho_k - s_k^2 = 0  \\
     &\frac{\partial \mathcal{J}}{\partial s_k} = - 2\eta_k s_k = 0 
\end{align}
Assume that the inequality constraint is inactive, and $\eta_k = 0$. Solving for $\rho^*_k$,
\begin{align}
\rho^*_k = s_k^2 = \sqrt{\frac{w_k^2 z_k}{\mu}} &\Longrightarrow \sum_{k=1}^K \sqrt{\frac{w_k^2 z_k}{\mu}} = 1 \\
&\Longrightarrow \sqrt{\mu} = \sum_{k=1}^K \sqrt{w_k^2z_k} \\ &\Longrightarrow \rho^*_k = \frac{w_k \sqrt{z_k}}{\sum_{k'=1}^K w_{k'}\sqrt{z_{k'}}}.
\end{align}
We can now verify that the constraint is inactive, with $\rho^*_k \geq 0$, since $\sum_{k=1}^K w_k = 1, w_k \geq 0$ by definition of a valid probability distribution, and we have $z_k \geq 0$ since it is a sum of non-negative variance terms.

The same analysis can then be applied to $\tilde{\rho_k}$, but for $z_k = \mathrm{Tr}\left(A_w^{-1}V_{k,f,\theta^*}^{\lambda_k} A_w^{-1}\right)$.
\end{proof}

\section{A simplified estimate of the sample allocation for mean estimation}
\label{app:rho}

In the setting of Example~\ref{ex:mean} (i.e., $1$-$d$ mean estimation), assume that $c(y \mid x) \approx \mathbb{P}(Y = y \mid X = x)$ is a confidence estimate for label $y \in \mathcal{Y}$, where $\mathcal{Y}$ is discrete. Then, if we define the autorater  as
\begin{equation}
    f(x) = \sum_{y \in \mathcal{Y}} c(y \mid x) \cdot y \approx \mathbb{E}[Y \mid X = x],
\end{equation}
the term $\mathrm{Var}(Y - \lambda_k f({X}) \mid {X} \in \mathcal{A}_k)$ further simplifies to $\approx \mathrm{Var}(Y \mid {X} \in \mathcal{A}_k)$ for any value of $\lambda_k$. Applying the law of total variance, we can conveniently write $\mathrm{Var}(Y \mid {X} \in \mathcal{A}_k)$ as
\begin{equation}
    \mathrm{Var}(Y \mid {X} \in \mathcal{A}_k) = \mathbb{E}[\mathrm{Var}(Y \mid X) \mid X \in \mathcal{A}_k] + \mathrm{Var}(\mathbb{E}[Y \mid X] \mid X  \in \mathcal{A}_k),
\end{equation}
which can be empirically estimated as $\hat{\sigma}_k$ on  unlabeled autorater data, $\tilde{X}_{ik}$, $i = 1, \ldots, N_k$, via
\begin{equation}
  \hat{\sigma}_k^2 = \widehat{\mathbb{E}}_{N_k}\left[\sum_{y \in \mathcal{Y}} c(y \mid \tilde{X}_{ik}) \cdot y - f(\tilde{X}_{ik}))\right] + \widehat{\mathrm{Var}}_{N_k}(f(\tilde{X}_{ik})),  
\end{equation}
where $\widehat{E}_{N_k}$ and $\widehat{\mathrm{Var}}_{N_k}$ denote the empirical mean and variance over all $\tilde{X}_{ik}$, respectively. 
Lastly, when $\mathcal{Y}$ is binary (which is the case in all of our experiments in \S\ref{sec:real_data}), this becomes
\begin{equation}
  \hat{\sigma}_k^2 = \widehat{\mathbb{E}}_{N_k}\left[f(\tilde{X}_{ik})(1 - f(\tilde{X}_{ik}))\right] + \widehat{\mathrm{Var}}_{N_k}(f(\tilde{X}_{ik})),
\end{equation}
which can be easily calculated in Python as \verb|np.mean(yhat * (1 - yhat)) + np.var(yhat)|.

\section{Additional experimental results}
\label{app:additional}

We also evaluate \texttt{StratPPI} on the Chatbot Arena dataset~\cite{chiang2024chatbot}, in which we evaluate the win-rate of \texttt{gpt-4-1106-preview} over \texttt{claude-2.1}.\footnote{\url{https://huggingface.co/datasets/lmsys/lmsys-arena-human-preference-55k}} See Figure~\ref{fig:arena}. Specifically, we use the standard Chatbot Arena auto-eval prompt\footnote{\url{https://github.com/lm-sys/arena-hard-auto/blob/main/config/judge_config.yaml}} and map \texttt{[[A>>B]]}, \texttt{[[A>B]} to 1, \texttt{[[A=B]]} to 0.5, and \texttt{[[B>>A]]}, \texttt{[[B>A]]} to 0. We then do self-consistency sampling and take the average of 10 samples from Gemini Ultra. This is used as both our final autorater estimate and confidence. We find that stratification also helps in this setting. However, in line with prior and contemporary work~\cite[e.g.,][]{zheng2023judging, jung2024trustescalatellmjudges} we found confidence scores from the LLM-as-a-judge to be fairly uncalibrated, which makes our heuristic allocation less effective at larger n (though still effective at small labeled sample sizes $n$). Investigating  robust heuristics in the face of miscalibration (e.g., via regularization or recalibration) would likely help, and is a direction for future work.
\begin{figure}[t]
\begin{center}
\includegraphics[width=1\textwidth]{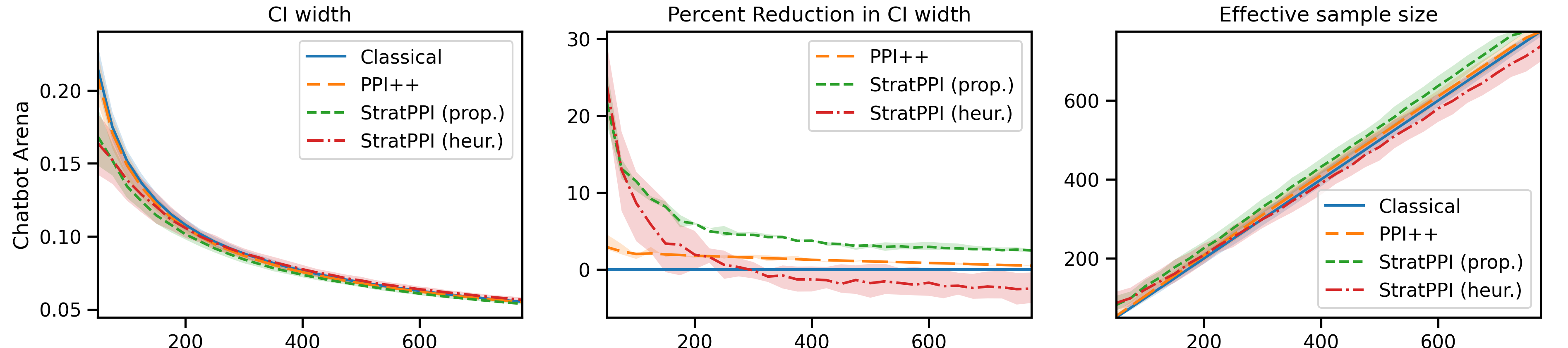}

\end{center}
\caption{Win-rate experiment on Chatbot Arena for \texttt{gpt-4-1106-preview} vs. \texttt{claude-2.1}. Scores are based on the average label ('better' = 1 vs. 'worse' = 0) over 10 samples from Gemini Ultra acting as a LLM judge. Interestingly, as these confidence scores are not calibrated, our heuristic becomes overly aggressive at higher $n$. Future work can explore how to best incorporate additional regularization into the estimated optimal sampling ratios $\rho$. \looseness=-1} 
\label{fig:arena}
\end{figure}
%\input{sections/appendix/additional_results}
%\input{sections/07_appendix}
%\clearpage
%\input{checklist}

\end{document}